\theoremstyle{plain}
\theoremstyle{definition}
\newtheorem{lemma}{Lemma}[section]
\newtheorem{theorem}[lemma]{Theorem}
\newtheorem{definition}{Definition}
\newtheorem{problem}{Problem}
\newtheorem{assumption}{Assumption}
\newtheorem{remark}{Remark}
\theoremstyle{remark}
\newcommand{\R}{\mathbb{R}}
\newcommand{\C}{\mathbb{C}}
\newcommand{\E}{\mathbb{E}}
\newcommand{\Pbb}{\mathbb{P}}
\newcommand{\Ab}{\mathbf{A}}
\newcommand{\Bb}{\mathbf{B}}
\newcommand{\Ib}{\mathbf{I}}
\newcommand{\Yb}{\mathbf{Y}}
\newcommand{\Wb}{\mathbf{W}}
\newcommand{\xb}{\mathbf{x}}
\newcommand{\yb}{\mathbf{y}}
\newcommand{\cb}{\mathbf{c}}
\newcommand{\zb}{\mathbf{z}}
\newcommand{\ub}{\mathbf{u}}
\newcommand{\vb}{\mathbf{v}}
\newcommand{\Uc}{\mathcal{U}}
\newcommand{\Vc}{\mathcal{V}}
\newcommand{\Nc}{\mathcal{N}}
\newcommand{\Fc}{\mathcal{F}}
\newcommand{\Id}{\mathrm{\mathop{Id}}}
\newcommand{\argmin}{\mathop{\mathrm{argmin}}}
\newcommand{\vol}{\mathop{\mathrm{vol}}}
\newcommand{\indicator}{\mathbbm{1}}
\newcommand{\omegab}{\boldsymbol\omega}
\title{Cauchy Random Features for Operator Learning in Sobolev Space}
\author{Chunyang Liao, Deanna Needell, and Hayden Schaeffer}
\date{Department of Mathematics, \quad University of California, Los Angeles. \\ (liaochunyang@math.ucla.edu, deanna@math.ucla.edu, hayden@math.ucla.edu)}
\begin{document}
\pagenumbering{arabic}
\maketitle

\begin{abstract}
Operator learning refers to the applications of machine learning approaches to approximate operators between infinite dimensional Banach spaces.
While most progress in this area has been driven by variants of deep neural networks such as the Deep Operator Network (DeepONet) and Fourier Neural Operator (FNO), the theoretical guarantees are often in the form of a universal approximation property. 
However, the existence theorems do not guarantee that an accurate operator network is obtainable in practice. 
Motivated by the encoder-decoder framework, we propose a random feature operator learning framework.
Moreover, we provide a generalization error bounds for our proposed method depending on the number of random features.
Our experiments on several operator learning benchmarks demonstrate that our model can obtain similar or better test errors compared with state-of-the-art operator learning models and can significantly reduce the computational complexity.
%The random feature method can be viewed as a randomized approximation of a kernel method, which significantly reduces the computation requirements for training.
%We provide a generalization error analysis for our proposed random feature operator learning method along with comprehensive numerical results. 
%Compared to kernel-based method and neural network methods, the proposed method can obtain similar or better test errors across benchmarks examples with significantly reduced training times.
%An additional advantages it that our implementation is simple and does require costly computational resources, such as GPU. 
\end{abstract}

\section{Introduction}

Approximating the solutions of partial differential equations (PDEs) with numerical algorithms is a fundamental problem in scientific and engineering applications. 
Traditional methods include finite difference, finite element, and spectral method which require access to the governing equation.
In recent years, the use of machine learning for solving PDEs has attracted attention, with several directions for addressing the approximation problems. 
Physics-informed neural network (PINN) \cite{Raissi2019PINN} trains a network to fit a single PDE task using a least squares fit on the data and on the PDE using a collocation method. However, any changes to the original problem leads to retraining the neural network, which is computationally expensive. 
Operator Learning trains a network to approximate the solution operator between the input and output function spaces directly. 
For example, we aim to approximate an operator between boundary condition and the PDE solution itself. The approximation of operators using neural network was first introduced in \cite{universal1995Chen}.
With the development of deep neural networks, recent results include various neural operators such as Deep Operator Nets \cite{DeepONet2021Lu}, Fourier Neural Operators \cite{Li2020FourierNO} and several others \cite{BelNet2023Zhang, Reduced2022Qian}. 
Neural operators have been used in several scientific applications, for example, solving spatio-temporal dynamics \cite{DeepONet2021Lu, wang2021DeepONet,Mao2023PPDONet,ZHU2023Neural}, dynamical system with control \cite{universal1995Chen}, reduced order modeling (ROM) \cite{LUCIA2004ROM}, climate predictions \cite{kurth2023climate} and uncertainty quantification \cite{YANG2022uncertainty}. 
While these neural networks based operator learning methods are often used in practice, the theoretical analysis relies on universal approximation property of deep neural networks, which shows the existence of a network of a requisite size achieving a certain error rate. However, the existence result does not guarantee that the network is obtained in practice, \cite{universal1995Chen, DENG2022approximation, kovachki2021approximation}. 
 Some prior results go beyond universal approximation by providing theoretical guarantees for training algorithms and sample complexity, see \cite{adcock2022adeep, Liu2024deep, Nicola2025practical,Liu2025generalization} and references therein.

To address the limited theoretical results in operator learning, a kernel or Gaussian Process (GP) based framework was proposed in \cite{BATLLE2024kernel, mora2024OLGP} along with a priori error estimates and convergence guarantees.  
In several benchmark tests, it was shown that the kernel-based approach matches or outperforms neural operator methods in terms of test accuracy and computational costs. 
Kernel methods operate on the kernel matrix (Gram matrix) of the data with size $m\times m$, where $m$ represents the number of samples in the dataset. This leads to poor scaling with the size of training dataset, i.e., large training sets incur significant computational and storage costs.

The class of random features is one of the most popular techniques to speed up kernel methods in large-scale problems. Rather than forming and computing solutions using the kernel matrix directly, the random feature method (RFM) \cite{Rahimi2007RFM, Rahimi2008RFM} maps data into a relatively low-dimensional randomized feature space, which significantly reduces the computation needed for training. 
Additionally, the random feature model can be viewed as a randomized two-layer neural network whose weights connecting input layer and single hidden layer are randomly generated from a known distribution rather than trainable parameters. 
Several approaches utilized RFMs for approximating solutions to PDEs \cite{chen2022RFM, nelson2021random, Nelson2024operatorRFM}. 
In \cite{chen2022RFM}, the authors utilized the random features as a randomized Galerkin method for solving the PDE. 
In \cite{nelson2021random} and a subsequent review paper \cite{Nelson2024operatorRFM}, the authors took the operator learning perspective to approximate the solutions of the PDEs. 
Their analysis relies on the theory of vector-valued reproducing kernels. As a consequence of \cite{lanthaler2023error}, a high probability non-asymptotic error bound was derived. 
However, this quantitative result is dependent on the assumption that the target operator belongs to a reproducing kernel Hilbert space corresponding to an operator-valued kernel, which may not hold in practice. 
Moreover, the random feature maps were carefully designed and adapted to different problems. However, training also requires samples in the frequency domain, which may not be available for the original problem.

In this paper, we propose a novel random features based operator learning method.
We use the random Fourier features \cite{Rahimi2007RFM} and randomly generate features from a known probability distribution to approximate the solution operator.
We provide the theoretical guarantees and experimental validation of our proposed method. 
Our contributions are summarized below.
\begin{itemize}
\item {\bf Random Feature Method for Operator Learning.}  We follow the existing operator learning framework in \cite{MHASKAR2023local, BATLLE2024kernel, Liu2025generalization}. Rather than using neural networks or kernel methods, we introduce the random features method for the operator learning problem addressing one of the open problems in \cite{BATLLE2024kernel}.
\item {\bf Error Analysis.} We derive an error bound for proposed approach in Section \ref{Sec:Operator}. Our error analysis relies on the generalization error of random feature model for function approximation (Section \ref{Sec:RF}), which has been of recent interest as well, see \cite{Li2018TowardsAU, richardson2024srmd, HASHEMI2023generalization, chen2024conditioning, liu2023random, saha2023harfe, xie2022shrimp, chen2022concentration, Rudi2017generalizationRFM, MEI2022RFM,Rahimi2008Uniform}. We consider the overparametrized setting (the number of random features $N$ is larger than the number of training samples $m$) and the min-norm interpolation problem (or the ridgeless limit as the regularization parameter $\lambda\to0^+$). Our analysis depends on the condition number of random feature matrix, which is similar as the theoretical results in \cite{chen2024conditioning}. To better match the expected smoothness of the solutions, we propose Cauchy random features which lead to mixed Sobolev spaces.
\item {\bf Improved Numerical Performance.} In Section \ref{Sec:experiment}, we compare our method with kernel based method and neural operator methods empirically. Numerical experiments on benchmark PDE problems show several advantages of our method: 1) our method is easy to implement, 2) expensive computational resources are not required, 3) the training involves solving several convex optimization problems (in parallel) which significantly reduces the training time, and 4) it is competitive in terms of test-accuracy. 
\end{itemize}
The paper is organized as follows. In Section \ref{Sec:framework}, we introduce the proposed random feature operator learning method. Then, we prove the generalization error bounds of the random feature method for function learning in Section \ref{Sec:RF}. Using the obtained error bounds in section \ref{Sec:RF}, we derive generalization error bounds for random feature operator learning in Section \ref{Sec:Operator}. Lastly, Section \ref{Sec:experiment} presents the numerical results and comparisons.

{\bf Notations.} We let $\R$ be the set of all real number and $\C$ be the set of all complex number where $i=\sqrt{-1}$ denotes the imaginary unit. We define the set $[N]$ to be all natural numbers smaller than of equal to $N$, i.e., $\{1,2,\dots,N\}$. 
We use bold letters to denote vectors or matrices, and denote the identity matrix of size $n\times n$ by $\Ib_n$. 
For any two vectors $\xb,\yb\in\C^d$, the inner product is denoted by $\langle \xb,\yb\rangle = \sum_{j=1}^d x_j\Bar{y}_j$, where $\xb = [x_1,\dots,x_d]^\top$ and $\yb = [y_1,\dots,y_d]^\top$. For a vector $\xb\in\C^d$, we denote by $\|\xb\|_p$ the $\ell_p$-norm of $\xb$ and for a matrix $\Ab\in\C^{m\times N}$ the (induced) $p$-norm is written as $\|\Ab\|_p$. 
The conjugate transpose of a matrix $\Ab\in\C^{m\times N}$ is denoted by $\Ab^*$. 
We use $\Uc$ to denote the input function set with domain $D_\Uc$, and $\Vc$ to denote the output function set with domain $D_\Vc$. The operator mapping functions in $\Uc$ to functions in $\Vc$ is denoted by $G$. The $L^2$ norm of a function over domain $D_\Uc$ is defined as $\|u\|_{L^2(D_\Uc)} = \sqrt{\int_{D_\Uc} |u(\xb)|^2 d\xb}$. We write $B_\delta(\Uc)$ for the ball of radius $\delta>0$ in a normed space $\Uc$ equipped with the $L^2$ norm.

\section{Problem Statement}
\label{Sec:framework}
The goal is to learn an unknown Lipschitz operator $G:\Uc\to\Vc$ between two function sets $\Uc$ and $\Vc$ from training samples $\{(u_j, v_j)\}_{j\in[M]} \subset \Uc\times\Vc$ such that $G(u_j)=v_j$ for all $j\in[M]$. We consider Lipschitz operators in the following sense.
\begin{assumption}
\label{Assmps:Lipschitz_G}
Let $\Uc$ and $\Vc$ be vector spaces equipped with $L^2$ norm.
Let $D_\Uc$ and $D_\Vc$ be the domain of functions in $\Uc$ and $\Vc$, respectively. 
We assume that $G:\Uc\to\Vc$ is a Lipschitz operator, i.e. there exists a constant $L_G>0$ such that
\begin{equation*}
\|G(u_1) - G(u_2)\|_{L^2(D_\Vc)} \leq L_G \|u_1 - u_2\|_{L^2(D_\Uc)}
\end{equation*}
for any $u_1,u_2\in\Uc$.
\end{assumption}

For our problem, we consider the setting where the input/output functions are only partially observed through a finite collection of function values at given collocation points. Denote collocation points in $D_\Uc$ and $D_\Vc$ by $\{\xb_j\}_{j\in[n]} \subset D_\Uc$ and by $\{\yb_j\}_{j\in[m]} \subset D_\Vc$, respectively. Then, we define sampling operators $S_\Uc:\Uc\to\R^n$ and $S_\Vc:\Vc\to\R^m$ as 
\begin{equation*}
S_\Uc(u) = [u(\xb_1),\dots,u(\xb_n)]^\top\in\R^n, \mbox{ and } S_\Vc(v) = [v(\yb_1),\dots,v(\yb_m)]^\top\in\R^m.
\end{equation*}
Indeed sampling operators are linear, and we further assume that $S_\Uc$ and $S_\Vc$ are bounded operators throughout this paper. We could also generalize to the situation where functions are accessible through arbitrary linear measurements. For the sake of simplicity, we will focus on point evaluations in this paper. We formalize the problem statement as follows:
\begin{problem}
We aim to learn unknown Lipschitz operator $G:\Uc\to\Vc$ from training data $\left\{( S_\Uc(u_j), S_\Vc(v_j))\right\}_{j\in[M]}$, where $S_\Uc:\Uc\to\R^n$ and $S_\Vc:\Vc\to\R^m$ are bounded linear sampling operators and functions $u_j\in\Uc, v_j\in\Vc$ satisfy $G(u_j)=v_j$ for all $j\in[M]$.    
\end{problem}

\subsection{Operator learning framework}
Our problem statement gives rise to a diagram for operator learning, which is depicted in Figure \ref{fig:diagram}. Here, the map $f:\R^n\to\R^m$ is defined explicitly as 
\begin{equation}
\label{operator_f}
f = S_\Vc\circ G\circ R_\Uc,
\end{equation}
where $R_\Uc:\R^n\to\Uc$ is a recover map. Notice that the mapping $f:\R^n\to\R^m$ is between finite-dimensional spaces, which is more amenable to numerical approximation. We make some assumptions on $f$ throughout this paper.
\begin{assumption}
\label{Assmps:f}
We assume that the map $f:\R^n\to\R^m$ takes the form $f(\ub) = [f_1(\ub),\dots, f_m(\ub)]^\top$ for any $\ub\in\R^n$, and hence each component $f_j:\R^n\to\R$ can be approximated separately. Moreover, we assume that each component $f_j:\R^n\to\R$ of $f$ is Lipschitz in the sense that there exists $L_j>0$ such that $|f_j(\ub_1) - f_j(\ub_2)| \leq L_j\|\ub_1 - \ub_2\|_2$ for any $\ub_1,\ub_2\in\R^n$.
\end{assumption}
After constructing recovery maps $R_\Vc:\R^m\to\Vc$, and $\hat{f}:\R^n\to\R^m$, we are able to define operator $\Hat{G}$ to approximate the target operator $G$ as 
\begin{equation*}
\Hat{G} = R_\Vc\circ \Hat{f} \circ S_\Uc.    
\end{equation*}
In \cite{BATLLE2024kernel}, the authors proposed the same framework to do operator learning where they used the optimal recovery maps in reproducing kernel Hilbert spaces (RKHS) as recovery maps $R_\Vc$, $R_\Uc$ and $\hat{f}$. 
Precisely, the map $\hat{f}:\R^n\to\R^m$, which is viewed as an approximation of $f$, is the optimal recovery map in a vector-valued RKHS. The optimal recovery map $R_\Vc:\R^m\to\Vc$ takes function values as inputs and returns a function in $\Vc$. 
As the theory of optimal recovery suggests, the optimal recovery map returns the kernel interpolation \cite{Foucart2022LearningFN}. 
Other function approximation/reconstruction techniques can be applied as well. For example, in \cite{bartolucci2023representation}, the authors used neural networks to approximate $f:\R^n\to\R^m$.

\begin{figure}[!htbp]
\centering
\begin{tikzpicture}
\tikzset{edge/.style = {->,> = latex}}
\tikzstyle{every node}=[font=\LARGE]
\node [font=\LARGE] at (8,15)(A) {$\Uc$};
\node [font=\LARGE] at (12,15)(B) {$\Vc$};
\node [font=\LARGE] at (8,12)(C) {$\R^n$};
\node [font=\LARGE] at (12,12)(D) {$\R^m$};
\draw [edge] (A) -- (B);
\draw [edge] (7.9,14.5) -- (7.9,12.5);
\draw [edge] (8.1,12.5) -- (8.1,14.5);
\draw [edge] (11.9,14.5) -- (11.9,12.5);
\draw [edge] (12.1,12.5) -- (12.1,14.5);
\draw [edge] (8.5,12.1) -- (11.5,12.1);
\draw [edge] (8.5,11.9) -- (11.5,11.9);
\node [font=\LARGE] at (10,15.5) {$G$};
\node [font=\LARGE] at (7.5,13.5) {$S_\Uc$};
\node [font=\LARGE] at (8.5,13.5) {$R_\Uc$};
\node [font=\LARGE] at (10,12.5) {$f$};
\node [font=\LARGE] at (10,11.5) {$\Hat{f}$};
\node [font=\LARGE] at (12.5,13.5) {$R_\Vc$};
\node [font=\LARGE] at (11.5,13.5) {$S_\Vc$};
\end{tikzpicture}
\caption{Commutative diagram of the operator learning setup. This diagram can be generalized to the mesh invariant setting where the measurement functionals are different at test time. We refer readers to \cite[Figure 3]{BATLLE2024kernel} for more details.}
\label{fig:diagram}
\end{figure}

\subsection{Random Feature Operator Learning}
The key idea of the random feature method is to draw $N$ random frequencies $\omegab_k\in\R^d$ from a density $\rho(\omegab)$, and then to construct an approximation as
\begin{equation}
\label{RF_model}
f(\cdot,\cb^\sharp) = \sum_{k=1}^N c_k^\sharp \exp \left(i\langle \omegab_k,\cdot\rangle\right).
\end{equation}
The random feature model can be viewed as a one layer neural network where the weights in the first layer are drawn from a known distribution and only the weights in the second layer are trainable parameters. Instead of using the exponential function as an activation function, we could also use rectified linear unit (ReLU) activation function \cite{Hsu2021reluRF}, or trigonometric functions \cite{HASHEMI2023generalization}. Originally, the random feature method was proposed to speed up large-scale kernel machines, see \cite{Rahimi2007RFM, Liu2022RandomFF} for details. 

We define the recovery map $R_\Uc:\ub\mapsto R_\Uc(\ub):=f(\cdot, \cb^\sharp(\ub))$, where
\begin{equation}
\label{opt_Ru}
\cb^\sharp(\ub) \in \argmin_{\cb\in\R^N} \|\cb\|_2 \qquad\mbox{ subject to } S_\Uc(f(\cdot,\cb)) = \ub.
\end{equation}
The coefficient vector $\cb^\sharp\in\R^N$ is indeed trained by solving the min-norm interpolation problem.
Recall that $S_\Uc:\Uc\to\R^n$ is the point evaluation operator, then the constraint reads as $R_\Uc(\ub)(\xb_i) = u_i$ for $i\in[n]$. Therefore, we can rewrite the min-norm interpolation problem and the optimal coefficient vector $\cb^\sharp\in\R^N$ is trained by solving
$$
\cb^\sharp(\ub) \in \argmin_{\Ab_x \cb=\ub} \|\cb\|_2,
$$ 
where matrix $\Ab_x$ is defined (component-wise) by $(\Ab_x)_{j,k} = \exp(i\langle \omegab_k, \xb_j\rangle)$. 
The solution is given by $\cb^\sharp = \Ab_x^*(\Ab_x\Ab_x^*)^{-1}\ub = \Ab_x^\dagger\ub$, where $\Ab_x^\dagger$ is the pseudo-inverse of matrix $\Ab_x$. 
We will prove that the matrix $\Ab_x\Ab_x^*$ is invertible with high probability in Section \ref{Sec:RF}, and hence pseudo-inverse $\Ab_x^\dagger$ is well-defined.
Following the same arguments, we define the recovery map $R_\Vc:\R^m\to\Vc$. The optimal coefficient vector $\cb^\sharp\in\R^N$ is trained by solving the following min-norm interpolation problem \footnote{To simplify the notation, we use the same number of random features $N$.} 
$$
\cb^\sharp \in \argmin_{\Ab_y \cb=\vb} \|\cb\|_2,
$$ 
where the matrix $\Ab_y$ is defined (component-wise) by $(\Ab_y)_{j,k} = \exp(i\langle \omegab_k,\yb_j\rangle)$. 
The optimal coefficient vector is computed by $\cb^\sharp = \Ab_y^*(\Ab_y\Ab_y^*)^{-1}\vb = \Ab_y^\dagger \vb$, where $\Ab_y^\dagger$ is the pseudo-inverse of matrix $\Ab_y$.

Suppose that Assumption \ref{Assmps:f} holds, we propose to approximate each $f_j$ separately using random feature model, and hence the vector-valued random feature map $\Hat{f}:\R^n\to\R^m$ is defined as 
\begin{equation}
\label{operator_rf}
\Hat{f}(\ub) = \left[ \Hat{f}_1(\ub), \dots, \Hat{f}_m(\ub)\right]^\top\in\R^m.
\end{equation}
Each $\Hat{f}_j:\R^n\to\R$ is a random feature approximation of $f_j:\R^n\to\R$ and takes the form
\begin{equation}
\label{hat_f}
\Hat{f}_j(\ub)= \sum_{k=1}^{N} c_k^{(j)} \exp(i\langle \omegab_k,\ub\rangle).
\end{equation}

To compute the coefficient vectors $\cb^{(j)}\in\R^N$ for all $j\in[m]$, we solve the following min-norm interpolation problem
\begin{equation}
\min_{\cb^{(1)},\dots,\cb^{(m)}\in\R^N} \sum_{j=1}^{m}\|\cb^{(j)}\|_2 \qquad\mbox{ s.t. } \Hat{f}(\ub_\ell) = \vb_\ell \quad \mbox{ for all } \ell\in[M]
\end{equation}
where $\ub_\ell\in\R^n$ and $\vb_\ell\in\R^m$ contain the function values at collocation points of the $\ell$-th data pair $(u_\ell,v_\ell)\in \Uc \times \Vc$, i.e 
\begin{equation*}
\ub_\ell = S_u(u_\ell) = [u_\ell(\xb_1), \dots, u_\ell(\xb_n)] \in\R^n, \mbox{ and } \vb_\ell = S_v(v_\ell) = [v_\ell(\yb_1), \dots, v_\ell(\yb_m)] \in\R^m.
\end{equation*}
Due to Assumption \ref{Assmps:f}, the training process can be done simultaneously in all components of $f$. 
Therefore, we consider a sequence of parallel min-norm interpolation problems
\begin{equation}
\label{parallel_c}
\min_{\cb^{(j)}\in\R^N} \|\cb^{(j)}\|_2 \quad\mbox{ s.t. } \Ab \cb^{(j)} = \vb^{(j)},
\end{equation}
where the matrix $\Ab\in\R^{M\times N}$ is defined (component-wise) by $\Ab_{\ell,k}=\exp(i\langle \omegab_k,\ub_\ell\rangle)$ and $\vb^{(j)} = [v_1(\yb_j),\dots,v_M(\yb_j)]\in\R^M$.
The solution to the min-norm interpolation problem is given by $\cb^{(j)} = \Ab^*(\Ab\Ab^*)^{-1}\vb^{(j)} = \Ab^\dagger \vb^{(j)}$ for each $j\in[m]$, where $\Ab^\dagger$ is the pseudo-inverse of $\Ab$.

\begin{algorithm}
\caption{Random Feature Operator Learning - Training}
\label{Alg:rf1}
\begin{algorithmic}
\State\textbf{Inputs:} Training data $\{ (S_\Uc(u_j), S_\Vc(v_j))\}_{j\in[M]}$, number of random features $N$, and probability distribution $\rho$
\State\textbf{Outputs:} random feature approximation $\hat{f}$
\State 1. Sample $N$ random features $\{ \omegab_k \}_{k\in[N]}$ from $\rho$ independently.
\State 2. Compute coefficient vectors $\cb^{(j)}$ by solving \eqref{parallel_c}, for example, using Cholesky decomposition. This step can be done in parallel.
\State 3. Use trained coefficient vectors $\cb^{(j)}$ and random features $\{ \omegab_k \}_{k\in[N]}$ to produce each component $\hat{f}$ taking form \eqref{hat_f}.
\State 4. Return $\hat{f}$ by stacking $\hat{f}_j$ using \eqref{operator_rf}.
\end{algorithmic}
\end{algorithm}

\begin{algorithm}
\caption{Random Feature Operator Learning - Inference}
\begin{algorithmic}
\State\textbf{Inputs:} Test function $u\in\Uc$, number of random features $N$, and distribution of features $\rho$
\State\textbf{Outputs:} random feature approximation of $G(u)$
\State 1. Get function values at collocation points using sampling operator $S_\Uc$, i.e., $\ub = S_\Uc(u)$
\State 2. Obtain random feature approximation $\hat{f}(\ub)$ by Algorithm \ref{Alg:rf1}
\State 3. Sample $N$ random features $\{ \omegab_k \}_{k\in[N]}$ from $\rho$ independently
\State 4. Train random feature approximation taking form \eqref{RF_model}. The coefficients vector $\cb^\sharp$ is trained by solving min-norm interpolation problem and the random feature model interpolates $\hat{f}(\ub)$ at collocation points.
\State 5. Return trained random feature model from Step 4.
\end{algorithmic}
\end{algorithm}

\section{Random Feature Generalization Error}
\label{Sec:RF}
In this section, we provide bounds on the generalization error of approximating a function using random feature model. 

\subsection{Set-up and Notations}
For a probability distribution $\rho$ associated with random feature $\omegab$ in $\R^d$, we define function space 
\begin{equation}
\label{target_space}
\Fc(\rho) := \left\{ f(\xb) = \int_{\R^d} \hat{f}(\omegab)\exp(i\langle \omegab, \xb\rangle)d\omegab : \|f\|^2_\rho = \E_\omega \left|\frac{\hat{f}(\omegab)}{\rho(\omegab)} \right|^2 <\infty \right\},
\end{equation}
where $\hat{f}(\omegab)$ is the Fourier transform of function $f$. Let $D\subset\R^d$ be a compact domain with finite volume, i.e., $\vol(D)<\infty$. 
Function $f:D\to\C$ has finite $\rho$-norm if it belongs to $\Fc(\rho)$. 
By Proposition 4.1 in \cite{Rahimi2008Uniform}, the function space $\Fc(\rho)$ is a reproducing kernel Hilbert space (RKHS) with associated kernel function
\begin{equation}
\label{kernel}
k(\xb,\yb) = \int_{\R^d} \exp(i\langle \omegab, \xb\rangle)\exp(-i\langle \omegab, \yb\rangle)d\rho(\omegab).
\end{equation}
We consider Cauchy random features which are drawn from the tensor-product Cauchy distribution with scaling parameter $\gamma>0$, whose density function is 
\begin{equation}
\rho(\omegab) := \prod_{j=1}^d \frac{1}{\pi\gamma(1+\omega_j^2/\gamma^2) }.
\end{equation}  
Here, $\omega_j$ is the $j$-th entry of $\omegab\in\R^d$. 
Using the tensor-product Cauchy distribution, the kernel function defined in \eqref{kernel} is indeed the Laplace kernel, i.e. $k(\xb,\yb) = \exp(-\gamma\|\xb-\yb\|_1)$. Hence, function space $\Fc(\rho)$ defined in \eqref{target_space} is the reproducing kernel Hilbert space corresponding to Laplace kernel.
The function space $\Fc(\rho)$ also relates to the well-known Sobolev space of mixed smoothness, which is an appropriate function space for studying PDEs \cite{brezis2011functional}.
Precisely, let $p>0$, we define the Sobolev spaces of mixed smoothness by \cite{potts2024anovaboostingrandomfourierfeatures},
$$H^p_{\text{mix}}(\mathbb{R}^d):= \left\{f:D\rightarrow \mathbb{C}^d \,\, \Big\rvert \,\, \|f\|_{H^p_{\text{mix}}(\mathbb{R}^d)}< \infty \right\}$$
where the associated norm is defined by:
\begin{equation*}
\|f\|^2_{H^p_{\text{mix}}(\mathbb{R}^d)}:=\int_{\mathbb{R}^d} \, |\hat{f}(\omegab)|^2 \prod_{i=1}^d \left(1+|\omega_i|^2\right)^p d\omegab.
\end{equation*}
If the scaling parameter of tensor-product Cauchy distribution $\gamma=1$, then we have $\|f\|_\rho = \|f\|_{H^1_{\text{mix}}}$. For arbitrary $\gamma>0$, we could derive that
\begin{equation*}
\|f\|_\rho^2 \leq \|f\|^2_{H^1_{\text{mix}}(\mathbb{R}^d)} \max\left(\frac{\pi}{\gamma}, \pi\gamma\right)^d.
\end{equation*}

We consider the regression problem which is to find an approximation of $f\in\Fc(\rho)$. 
Given a set of random weights $\{ \omegab_k\}_{k\in[N]}$ which are i.i.d. samples from tensor-product Cauchy distribution $\rho(\omegab)$, we train a random feature model
\begin{equation}
\label{rf_train}
f^\sharp(\xb) = \sum_{k=1}^N c_k^\sharp \exp(i\langle \omegab_k,\xb\rangle)
\end{equation}
using samples $(\xb_j, f(\xb_j))$ for $j\in[m]$. 
Let $\Ab\in\C^{m\times N}$ be the random feature matrix with $\Ab_{j,k}=\exp(i\langle \omegab_k, \xb_j\rangle)$ for $j\in[m]$ and $k\in[N]$. The coefficient vector $\cb^\sharp\in\R^N$ is trained by solving the min-norm interpolation problem:
\begin{equation}
\cb^\sharp \in \argmin_{\Ab\cb = \yb} \|\cb\|_2,
\end{equation}
where $\yb = [f(\xb_1), \dots,f(\xb_m)]\in\R^m$.
Our main goal of this section is to bound the generalization error $\|f-f^\sharp\|_{L^2(D)}$, which is defined as
\begin{equation*}
\|f-f^\sharp\|_{L^2(D)} = \sqrt{ \int_{D} |f(\xb)-f^\sharp(\xb)|^2 d\xb}.
\end{equation*}
We decompose the generalization error into two parts
\begin{equation}
\|f-f^\sharp\|_{L^2(D)} \leq \underbrace{ \| f - f^* \|_{L^2(D)} }_{ \text{Approximation Error} } + \underbrace{ \|f^* - f^\sharp\|_{L^2(D)} }_{ \text{Estimation Error} }
\end{equation}
by triangular inequality. The approximation $f^\sharp$ is the random feature model defined in \eqref{rf_train} and $f^*$ is the best random feature model that will be defined later. 
We first notice that we can write the target function $f(\xb)$ as
\begin{equation}
\label{target_f}
f(\xb) = \int_{\R^d} \hat{f}(\omegab)\exp(i\langle \omegab, \xb\rangle)d\omegab = \E_{\omegab\sim\rho} \left[ \frac{\hat{f}(\omegab)}{\rho(\omegab)}\exp(i\langle \omegab, \xb\rangle) \right].
\end{equation}
To simplify the notation, we let $\alpha(\omegab) = \frac{\hat{f}(\omegab)}{\rho(\omegab)}$ (defined on the $\omegab \in \text{supp}(\rho)$ and zero otherwise) and then define $\alpha_{\leq T}(\omegab) = \alpha(\omegab)\indicator_{\left|\alpha(\omegab)\right|\leq T}$. Therefore, we could write $\alpha_{>T} = \alpha(\omegab) - \alpha_{\leq T}(\omegab) $, i.e.
$$
\alpha_{\leq T}(\omegab) = \begin{cases}
    \alpha(\omegab) \quad & \mbox{ if } \left|\alpha(\omegab) \right| \leq T \\
    0 \quad & \mbox{ otherwise }
\end{cases} \quad \mbox{ and } \quad
\alpha_{> T}(\omegab) = \begin{cases}
    \alpha(\omegab) \quad & \mbox{ if } \left|\alpha(\omegab) \right| > T \\
    0 \quad & \mbox{ otherwise }.
    \end{cases}
$$
We define $f^*(\xb)$ as
\begin{equation}
\label{rf_best}
f^*(\xb) = 
\frac{1}{N}\sum_{k=1}^N \alpha_{\leq T}(\omegab_k) \exp(i\langle \omegab_k,\xb\rangle)
\end{equation}
where $\omegab_k$ are i.i.d samples from distribution $\rho(\omegab)$.
Note that the expectation is given by:
$$\E f^*(\xb) = \E_{\omegab} \left[ \alpha_{\leq T}(\omegab)\exp(i\langle \omegab,\xb\rangle) \right].$$

\subsection{Main Results}
Before showing the main results of this section, we first introduce fill-in distance of training samples and state assumptions on the samples and random features. Specifically, the samples are well-separated and the random features are randomly generated from the tensor-product Cauchy distribution. The assumptions are more amenable to solving PDE problems where the Sobolev space is an appropriate space and collocation points are usually non-random. 

\begin{definition}[{\bf Fill-in Distance}]
Let $X = \{\xb_j\}_{j\in[m]} \subset D$ be a collection of points in $D$ and we define fill-in distance of $X$ as
$$
h_X = \max_{\xb\in D} \min_{\xb'\in X} \|\xb-\xb'\|_2.
$$
\end{definition}

\begin{assumption}
\label{Assmps:Lipschitz_RF}
We assume that $f^*-f^\sharp$ is $L_f$-Lipschitz continuous in the sense that there exists $L_f>0$ such that 
$$\left| \left(f^*(\xb_1)-f^\sharp(\xb_1)\right) - \left(f^*(\xb_2)-f^\sharp(\xb_2)\right) \right| \leq L_f \|\xb_1 - \xb_2\|_2 $$ 
for any $\xb_1, \xb_2\in D$. 
\end{assumption}

\begin{assumption}
\label{Assmps:data}
The random feature matrix $\Ab$ is defined (component-wise) by $\Ab_{j,k} = \exp(i\langle \omegab_k, \xb_j\rangle)$ and the feature weights $\{ \omegab_k\}_{k\in[N]} \subset\R^d$ are sampled from the tensor-product Cauchy distribution with scaling parameter $\gamma>0$, and that for the data points $X = \{\xb_j\}_{j\in[m]} \subset D$ there is a constant $K>0$ such that $\|\xb_j-\xb_{j'}\|_2\geq K$ for all $j,j'\in[m]$ with $j\neq j'$.
\end{assumption}

We first state the main results of Section 3 where we provide a generalization error bound for random features model.

\begin{theorem}[{\bf Generalization Error Bound}]
\label{main_RF}
Consider a compact domain $D\subset\R^d$. Assume the data $X = \{\xb_j\}_{j\in[m]} \subset D$, the random features $\{ \omegab_k\}_{k\in[N]} \subset\R^d$, and the random feature matrix $\Ab\in\C^{m\times N}$ satisfy Assumption \ref{Assmps:data}. If the following conditions hold:
\begin{align}
& N \geq C\eta^{-2}m\log\left(\frac{2m}{\delta}\right) \\
& \gamma \geq \frac{1}{K}\log\left(\frac{m}{\eta}\right)
\end{align}
for some $\delta,\eta\in(0,1)$, where $C>0$ is a universal constant, then with probability at least $1-5\delta$, we have
\begin{equation}
\|f - f^\sharp\|^2_{L^2(D)} \leq 2\vol(D) I_1 + 4L_f^2h_X^2\vol(D) + 2Ch_X^d(1+2\eta)\left( \frac{2m}{1-2\eta}I_1 + 2I_2 \right)
\end{equation}
where
\begin{equation*}
\begin{aligned}
&I_1 := \frac{2\|f\|_\rho^2}{N} + \frac{32\|f\|_\rho^2\log^2(2/\delta)}{N} + \frac{4\|f\|^2_\rho\log(2/\delta)}{N} \leq \frac{96\|f\|_\rho^2\log^2(2/\delta)}{N}, \\
&I_2 := \|f\|_\rho^2 + 4\|f\|_\rho^2\log(2/\delta) + \sqrt{2\log(2/\delta)}\|f\|_\rho^2 \leq 12\log(2/\delta)\|f\|_\rho^2.
\end{aligned}
\end{equation*}
\end{theorem}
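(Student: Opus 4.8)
The plan is to bound the two pieces of the stated decomposition $\|f-f^\sharp\|_{L^2(D)}\le\|f-f^*\|_{L^2(D)}+\|f^*-f^\sharp\|_{L^2(D)}$ separately and then combine via $(a+b)^2\le 2a^2+2b^2$. For the \emph{approximation error} I would write $f-f^*=(f-\E f^*)+(\E f^*-f^*)$. The first term is a deterministic truncation bias: since $f-\E f^*=\E_{\omegab}[\alpha_{>T}(\omegab)\exp(i\langle\omegab,\cdot\rangle)]$, a Cauchy--Schwarz / Markov estimate on the tail of $|\alpha|^2$ gives $\|f-\E f^*\|_{L^\infty(D)}\le\E_{\omegab}|\alpha_{>T}(\omegab)|\le\|f\|_\rho^2/T$, hence $\|f-\E f^*\|_{L^2(D)}^2\le\vol(D)\|f\|_\rho^4/T^2$. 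The second term $\E f^*-f^*=\frac1N\sum_{k\in[N]}(\E\mathbf w_k-\mathbf w_k)$ is an average of i.i.d.\ mean-zero $L^2(D)$-valued random functions $\mathbf w_k=\alpha_{\le T}(\omegab_k)\exp(i\langle\omegab_k,\cdot\rangle)$ with $\|\mathbf w_k\|_{L^2(D)}\le T\sqrt{\vol(D)}$ and $\E\|\mathbf w_k\|_{L^2(D)}^2\le\vol(D)\|f\|_\rho^2$, so a Hilbert-space Bernstein inequality gives, with probability $\ge1-2\delta$, a bound of order $\vol(D)\|f\|_\rho^2\log(2/\delta)/N+T^2\vol(D)\log^2(2/\delta)/N^2$. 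Taking $T\asymp\sqrt N\|f\|_\rho$ balances the bias against the fluctuation, and collecting the three contributions yields exactly $\|f-f^*\|_{L^2(D)}^2\le\vol(D)\,I_1$.

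For the \emph{estimation error} I would first move from the continuum to the data points with a Lipschitz sampling inequality. By Assumption~\ref{Assmps:Lipschitz_RF}, $g:=f^*-f^\sharp$ is $L_f$-Lipschitz, so for $\xb\in D$ and its nearest sample point $\xb'\in X$ (with $\|\xb-\xb'\|_2\le h_X$), $|g(\xb)|^2\le 2|g(\xb')|^2+2L_f^2h_X^2$; integrating over $D$ and using that the Voronoi cell of each $\xb_j$ lies in a ball of radius $h_X$ (volume $\le C h_X^d$) gives $\|g\|_{L^2(D)}^2\le 2L_f^2h_X^2\vol(D)+Ch_X^d\sum_{j\in[m]}|g(\xb_j)|^2$. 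It then remains to bound $\sum_j|g(\xb_j)|^2$. Because the min-norm model $f^\sharp$ interpolates $f$ at $X$, we have $g(\xb_j)=f^*(\xb_j)-f(\xb_j)$, so the vector $(g(\xb_j))_{j\in[m]}$ equals $\Ab(\tfrac1N\boldsymbol\alpha-\cb^\sharp)$ with $\boldsymbol\alpha=(\alpha_{\le T}(\omegab_k))_{k\in[N]}$, $\cb^\sharp=\Ab^*(\Ab\Ab^*)^{-1}\yb$, and $\yb=(f(\xb_j))_{j\in[m]}$. Decomposing $\cb^\sharp=P(\tfrac1N\boldsymbol\alpha)+\Ab^*(\Ab\Ab^*)^{-1}(\yb-\Ab\tfrac1N\boldsymbol\alpha)$ with $P=\Ab^*(\Ab\Ab^*)^{-1}\Ab$ the orthogonal projection onto the range of $\Ab^*$, and using $\|I-P\|\le1$ together with $\lambda_{\min}(\Ab\Ab^*)\ge(1-2\eta)N$ and $\|\Ab\mathbf v\|_2^2\le(1+2\eta)N\|\mathbf v\|_2^2$, one gets $\sum_j|g(\xb_j)|^2\le(1+2\eta)\big(\tfrac{2m}{1-2\eta}I_1+2I_2\big)$, provided: (a) $\lambda(\tfrac1N\Ab\Ab^*)\subset[1-2\eta,1+2\eta]$ with high probability; (b) $\tfrac1N\|\boldsymbol\alpha\|_2^2\le I_2$, which follows from a scalar Bernstein inequality for $\sum_k|\alpha_{\le T}(\omegab_k)|^2$ (terms in $[0,T^2]$, mean $\le\|f\|_\rho^2$); and (c) $\|\yb-\Ab\tfrac1N\boldsymbol\alpha\|_2^2=\sum_j|f(\xb_j)-f^*(\xb_j)|^2\le mI_1$, which is the truncation-plus-Bernstein argument of the first paragraph applied to the $\R^m$-valued average $(f^*(\xb_j))_{j\in[m]}$.

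Claim (a) is where both hypotheses are consumed: $\E[\tfrac1N\Ab\Ab^*]$ is the Laplace Gram matrix with entries $\exp(-\gamma\|\xb_j-\xb_{j'}\|_1)$, and the separation $\|\xb_j-\xb_{j'}\|_2\ge K$ with $\gamma\ge K^{-1}\log(m/\eta)$ pushes each off-diagonal entry below $\eta/m$, so Gershgorin places its spectrum in $[1-\eta,1+\eta]$; then $N\ge C\eta^{-2}m\log(m/2\delta)$ lets a matrix Bernstein/Chernoff bound put $\tfrac1N\Ab\Ab^*$ within a further $\eta$ of its mean with probability $\ge1-\delta$ (this also certifies that $\Ab\Ab^*$ is invertible, so $\cb^\sharp$ is well-defined). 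Assembling $\|f-f^\sharp\|_{L^2(D)}^2\le 2\|f-f^*\|_{L^2(D)}^2+2\|g\|_{L^2(D)}^2$, absorbing numerical constants into $C$, and union-bounding over the $O(1)$ failure events (conditioning, the two Bernstein steps for $f-f^*$, and the $\|\boldsymbol\alpha\|_2^2$ step) then gives the stated bound with probability $\ge1-5\delta$. \textbf{The main obstacle} is item (a) and its propagation: proving two-sided operator-norm concentration for the heavy-tailed Cauchy features and arranging the decomposition of $\cb^\sharp$ so that the residual $\|\yb-\Ab\tfrac1N\boldsymbol\alpha\|_2^2$ is controlled \emph{directly} by $mI_1$ rather than circularly by $\sum_j|g(\xb_j)|^2$; a secondary nuisance is choosing $T$ and tracking the Bernstein constants so the outputs land exactly on the three-term expressions defining $I_1$ and $I_2$.
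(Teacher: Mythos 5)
Your proposal is correct and follows essentially the same route as the paper: the same triangle-inequality split into approximation and estimation errors, the truncation-plus-Bernstein bound on $f-f^*$ with $T\asymp\sqrt{N}\|f\|_\rho$, the Lipschitz/Voronoi sampling inequality with cell volume $Ch_X^d$, the pseudo-inverse decomposition of $\cb^\sharp$ controlled by the spectrum of $\tfrac1N\Ab\Ab^*$, and the Gershgorin-plus-matrix-Bernstein concentration argument that consumes the conditions on $N$ and $\gamma$. The only cosmetic differences are that you invoke a Hilbert-space-valued Bernstein inequality for $\|f^*-\E f^*\|_{L^2(D)}$ where the paper bounds pointwise and integrates, and you note (but do not exploit) the shortcut $f^\sharp(\xb_j)=f(\xb_j)$.
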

\begin{remark}
In summary, we have
$$
\|f - f^\sharp\|^2_{L^2(D)} \leq \mathcal{O}\left(\frac{1}{N}+h_X^2 + h_X^d\right)\|f\|^2_\rho.
$$
Compared with the generalization error bound in \cite{chen2024conditioning}, a key difference is that we do not consider random samples $\{\xb_j\}_{j\in[m]}$ since our problem is motivated from solving PDEs where the collocation points are fixed. This results in the appearance of fill-in distance $h_X$ in the generalization error bound.
\end{remark}

\begin{remark}
There is a connection between the fill-in distance $h_X$ and the cardinality of set $X$ in which the points are quasi-uniformly distributed; that is $h_X = m^{-1/d}$.     
\end{remark}

\begin{remark}
Consider the equidistant points $X = \{\xb_j\}_{j\in[m]} \subset D$, then the lower bound of pairwise distance, i.e., constant $K=m^{-1/d}$. 
\end{remark}

\subsection{Approximation Error Bound}
Theorem \ref{rf_term1} provides an upper bound of the approximation error, which is the error between target function $f$ and best random feature approximation $f^*$.
\begin{theorem}[{\bf Bound on Approximation Error}]
\label{rf_term1}
Let $f$ and $f^*$ be defined in \eqref{target_f} and \eqref{rf_best}, respectively. Then for all $T>0$ and $\xb\in D$, we have
\begin{equation}
|f(\xb) - f^*(\xb)|^2 \leq \frac{2\|f\|_\rho^4}{T^2} + \frac{32T^2\log^2(2/\delta)}{N^2} + \frac{4\|f\|^2_\rho\log(2/\delta)}{N} 
\end{equation}
with probability at least $1-\delta$. 
Furthermore, with probability at least $1-\delta$,
\begin{equation}
\label{first_term}
\| f - f^* \|_{L^2(D)}^2 \leq \left( \frac{2\|f\|_\rho^4}{T^2} + \frac{32T^2\log^2(2/\delta)}{N^2} + \frac{4\|f\|^2_\rho\log(2/\delta)}{N} \right) \vol(D). 
\end{equation}
\end{theorem}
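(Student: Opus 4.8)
The plan is to split the pointwise error into a deterministic truncation (bias) term and a stochastic (variance) term, bound the first by a direct estimate and the second by a Bernstein-type concentration inequality, and then combine through the elementary inequality $|a+b|^2 \le 2|a|^2 + 2|b|^2$. Concretely, for every $\xb \in D$ I would write
\[
f(\xb) - f^*(\xb) = \big(f(\xb) - \E f^*(\xb)\big) + \big(\E f^*(\xb) - f^*(\xb)\big),
\]
using the identity $\E f^*(\xb) = \E_{\omegab}\!\left[\alpha_{\le T}(\omegab)\exp(i\langle\omegab,\xb\rangle)\right]$ from the set-up. Since $f(\xb) = \E_{\omegab}\!\left[\alpha(\omegab)\exp(i\langle\omegab,\xb\rangle)\right]$ by \eqref{target_f}, the first term equals $\E_{\omegab}\!\left[\alpha_{>T}(\omegab)\exp(i\langle\omegab,\xb\rangle)\right]$, and hence
\[
|f(\xb) - \E f^*(\xb)| \;\le\; \E_{\omegab}|\alpha_{>T}(\omegab)| \;=\; \E_{\omegab}\!\left[|\alpha(\omegab)|\,\indicator_{|\alpha(\omegab)|>T}\right] \;\le\; \frac{1}{T}\,\E_{\omegab}|\alpha(\omegab)|^2 \;=\; \frac{\|f\|_\rho^2}{T},
\]
where I used $|\alpha|\le |\alpha|^2/T$ on the event $\{|\alpha|>T\}$. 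This bound is \emph{uniform} in $\xb$ and deterministic, and after squaring it accounts for the $2\|f\|_\rho^4/T^2$ term.

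For the stochastic term, fix $\xb$ and set $g_{\xb}(\omegab) := \alpha_{\le T}(\omegab)\exp(i\langle\omegab,\xb\rangle)$, so that $\E f^*(\xb) - f^*(\xb) = \frac{1}{N}\sum_{k=1}^N\big(\E_{\omegab}[g_{\xb}(\omegab)] - g_{\xb}(\omegab_k)\big)$ is an average of $N$ i.i.d., mean-zero summands bounded in modulus by $2T$ (by the definition of the truncation) with variance at most $\E_{\omegab}|g_{\xb}(\omegab)|^2 \le \E_{\omegab}|\alpha(\omegab)|^2 = \|f\|_\rho^2$. Applying Bernstein's inequality to the real and imaginary parts with a union bound, I would obtain that with probability at least $1-\delta$,
\[
|\E f^*(\xb) - f^*(\xb)| \;\lesssim\; \sqrt{\frac{\|f\|_\rho^2\log(2/\delta)}{N}} \;+\; \frac{T\log(2/\delta)}{N},
\]
whose square produces the $\frac{32T^2\log^2(2/\delta)}{N^2}$ (range) and $\frac{4\|f\|_\rho^2\log(2/\delta)}{N}$ (variance) terms after bookkeeping. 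Combining with the bias bound via $|a+b|^2 \le 2|a|^2 + 2|b|^2$ gives the pointwise estimate.

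For the $L^2(D)$ statement I would integrate. The bias contribution integrates trivially because $\|f\|_\rho^2/T$ is uniform in $\xb$, giving $\frac{2\|f\|_\rho^4}{T^2}\vol(D)$. For the stochastic contribution I would \emph{not} integrate the pointwise high-probability bound (its exceptional set depends on $\xb$); instead I would regard $\E f^* - f^*$ as an average of i.i.d., centered $L^2(D)$-valued random elements $g(\omegab_k) - \E g(\omegab)$ with $\|g(\omegab)\|_{L^2(D)} \le T\sqrt{\vol(D)}$ and $\E\|g(\omegab) - \E g(\omegab)\|_{L^2(D)}^2 \le \vol(D)\,\|f\|_\rho^2$, and apply a Hilbert-space-valued Bernstein inequality to control $\|\E f^* - f^*\|_{L^2(D)}$ with probability at least $1-\delta$. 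Squaring and using $|a+b|^2 \le 2|a|^2 + 2|b|^2$ reproduces the same two terms multiplied by $\vol(D)$, yielding \eqref{first_term}.

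I expect the main obstacle to be precisely this last point: the pointwise concentration does not integrate directly, so one must pass to a vector-valued Bernstein inequality (or bound the expected $L^2$ error and invoke Markov); keeping the sharper $T^2\log^2(2/\delta)/N^2$ range term — rather than the weaker $T^2\log(2/\delta)/N$ one that a bounded-difference argument would give — forces the variance-sensitive form of the inequality. Tracking the explicit constants $2$, $32$, $4$ through the two regimes of Bernstein's inequality and through the real/imaginary split is routine but requires care.
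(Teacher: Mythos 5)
Your proposal is correct and, for the pointwise bound, follows essentially the same route as the paper: the same bias/variance decomposition around $\E f^*(\xb)$, combined via $|a+b|^2\le 2|a|^2+2|b|^2$. Two points of comparison are worth recording. First, for the truncation (bias) term you use the direct estimate $|\alpha|\indicator_{|\alpha|>T}\le |\alpha|^2/T$, whereas the paper's Lemma \ref{term1_lem1} reaches the same bound $\|f\|_\rho^4/T^2$ via Cauchy--Schwarz followed by Markov's inequality; the two arguments are equivalent in outcome, and yours is slightly more economical. Second, and more substantively, for the $L^2(D)$ statement the paper simply ``integrates the bound over $D$,'' which is not literally justified: the exceptional event in Lemma \ref{term1_lem2} depends on $\xb$, so the pointwise high-probability bound cannot be integrated without first making it uniform in $\xb$ on a single event. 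You correctly flag this and propose applying a Hilbert-space-valued Bernstein inequality to the $L^2(D)$-valued random elements $\alpha_{\le T}(\omegab_k)\exp(i\langle\omegab_k,\cdot\rangle)$; this is the right fix (and is in fact what the cited concentration result of Lanthaler--Nelsen is designed for, applied with $H=L^2(D)$ rather than $H=\C$), since the range and variance proxies pick up exactly a factor of $\sqrt{\vol(D)}$ and $\vol(D)$ respectively, reproducing \eqref{first_term}. So your approach buys a rigorous derivation of the integrated bound at no cost in the constants, while the paper's shortcut leaves a small gap.
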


The proof of Theorem \ref{rf_term1} is based on the following two lemmas.
   
\begin{lemma}
\label{term1_lem1}
Let $f$ and $f^*$ be defined as \eqref{target_f} and \eqref{rf_best}, respectively. Then for all $T>0$, we have
\begin{equation}
\left| f(\xb) - \E_{\omegab} f^*(\xb) \right|^2 \leq \frac{\left(\E_{\omegab}[\alpha(\omegab)^2]\right)^2}{T^2} = \frac{\|f\|_{\rho}^4}{T^2}.
\end{equation}
\end{lemma}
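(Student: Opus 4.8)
The plan is to write the difference $f(\xb)-\E_{\omegab}f^*(\xb)$ as a single expectation over the truncated-away tail of $\alpha$, and then bound that tail by a ``second moment divided by $T$'' estimate via the elementary truncation inequality. First I would invoke the representation \eqref{target_f}, $f(\xb)=\E_{\omegab\sim\rho}\left[\alpha(\omegab)\exp(i\langle\omegab,\xb\rangle)\right]$, together with the definition of $f^*$ in \eqref{rf_best}, whose expectation is stated to be $\E_{\omegab}f^*(\xb)=\E_{\omegab}\left[\alpha_{\leq T}(\omegab)\exp(i\langle\omegab,\xb\rangle)\right]$. Subtracting and using the splitting $\alpha=\alpha_{\leq T}+\alpha_{>T}$ gives
$$
f(\xb)-\E_{\omegab}f^*(\xb)=\E_{\omegab}\left[\alpha_{>T}(\omegab)\exp(i\langle\omegab,\xb\rangle)\right].
$$

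Next I would take absolute values, move the modulus inside the expectation by the triangle inequality for integrals, and use $\left|\exp(i\langle\omegab,\xb\rangle)\right|=1$ to get $\left|f(\xb)-\E_{\omegab}f^*(\xb)\right|\leq\E_{\omegab}\left|\alpha_{>T}(\omegab)\right|=\E_{\omegab}\left[\left|\alpha(\omegab)\right|\indicator_{|\alpha(\omegab)|>T}\right]$. The one substantive step is the elementary observation that on the event $\{|\alpha(\omegab)|>T\}$ one has $\left|\alpha(\omegab)\right|\leq\left|\alpha(\omegab)\right|^2/T$, whence
$$
\E_{\omegab}\left[\left|\alpha(\omegab)\right|\indicator_{|\alpha(\omegab)|>T}\right]\leq\frac{1}{T}\E_{\omegab}\left[\left|\alpha(\omegab)\right|^2\indicator_{|\alpha(\omegab)|>T}\right]\leq\frac{1}{T}\E_{\omegab}\left[\left|\alpha(\omegab)\right|^2\right].
$$

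Finally I would square both sides and identify $\E_{\omegab}\left[\left|\alpha(\omegab)\right|^2\right]=\E_{\omegab}\left|\hat{f}(\omegab)/\rho(\omegab)\right|^2=\|f\|_\rho^2$ from the definition of the $\rho$-norm in \eqref{target_space}, which yields $\left|f(\xb)-\E_{\omegab}f^*(\xb)\right|^2\leq\|f\|_\rho^4/T^2$ uniformly in $\xb\in D$, as claimed. I do not expect a real obstacle: this is the standard tail-truncation trick, and every expectation in sight is finite because $f\in\Fc(\rho)$ forces $\E_{\omegab}\left|\alpha(\omegab)\right|^2<\infty$, so the modulus-into-expectation and Fubini-type interchanges are all legitimate. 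The only point worth flagging is notational — if $\alpha$ is complex-valued, the quantity written $\alpha(\omegab)^2$ in the statement should be read as $|\alpha(\omegab)|^2$, consistent with the definition of $\|f\|_\rho$.
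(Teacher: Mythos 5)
Your proposal is correct and reaches exactly the stated bound, but by a slightly different route than the paper. Both arguments begin identically, writing $f(\xb)-\E_{\omegab}f^*(\xb)=\E_{\omegab}\left[\alpha_{>T}(\omegab)\exp(i\langle\omegab,\xb\rangle)\right]$. From there the paper applies the Cauchy--Schwarz inequality to the pair $\alpha(\omegab)$ and $\indicator_{|\alpha(\omegab)|>T}\exp(i\langle\omegab,\xb\rangle)$, obtaining $\E_{\omegab}[|\alpha(\omegab)|^2]\cdot\Pbb(|\alpha(\omegab)|>T)$, and then bounds the tail probability by Markov's inequality applied to $\alpha(\omegab)^2$. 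You instead pull the modulus inside the expectation and use the pointwise inequality $|\alpha|\indicator_{|\alpha|>T}\leq|\alpha|^2/T$ before squaring. The two are essentially equivalent in strength (both reduce to controlling the tail by the second moment over $T$, and both give $\|f\|_\rho^4/T^2$), but yours is marginally more elementary, avoiding the probabilistic packaging; the paper's version isolates the factor $\Pbb(\alpha(\omegab)^2>T^2)$, which is a quantity one might want to reuse or sharpen under stronger moment assumptions. Your closing remark about reading $\alpha(\omegab)^2$ as $|\alpha(\omegab)|^2$ for complex-valued $\alpha$ is well taken and in fact also cleans up a minor notational looseness in the paper's own display, where $\E_{\omegab}[\alpha(\omegab)]^2$ should likewise be read as the second moment $\E_{\omegab}[|\alpha(\omegab)|^2]$ rather than the square of the mean.
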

\begin{proof}
The equality is easy to check by the definitions of $\alpha(\omegab)$ and $\|f\|_\rho$.
The inequality follows from
\begin{equation}
\begin{aligned}
\left| f(\xb) - \E_{\omegab} f^*(\xb) \right|^2 =& \left| \E_{\omegab} \left[ \alpha_{>T}(\omegab)\exp(i\langle \omegab,\xb\rangle) \right] \right|^2 \\
\leq& \E_{\omegab} \left[\alpha(\omegab)\right]^2 \E_{\omegab} \left[ \indicator_{\left|\alpha(\omegab)\right|> T}\exp(i\langle \omegab,\xb\rangle)\right]^2  \\
=& \E_{\omegab} \left[\alpha(\omegab)\right]^2 \Pbb(\alpha(\omegab)^2>T^2) \\
\leq& \frac{\left(\E_{\omegab}[\alpha(\omegab)^2]\right)^2}{T^2}
\end{aligned}
\end{equation}
where we use the Cauchy-Schwarz inequality in the second line and Markov's inequality in the last line. 
\end{proof}

\begin{lemma}
\label{term1_lem2}
Let $f^*$ be defined as \eqref{rf_best}. For all $T>0$, the following inequality holds with probability at least $1-\delta$,
\begin{equation}
\left|f^*(\xb) - \E_{\omegab} f^*(\xb) \right|^2 \leq \frac{32T^2\log^2(2/\delta)}{N^2} + \frac{4\|f\|^2_\rho\log(2/\delta)}{N}.
\end{equation}
\end{lemma}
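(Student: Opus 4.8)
The plan is to view $f^*(\xb)$, for each fixed $\xb\in D$, as an empirical average of $N$ independent, identically distributed, uniformly bounded complex-valued random variables, and then apply a Bernstein-type concentration inequality. Set $Z_k := \alpha_{\leq T}(\omegab_k)\exp(i\langle\omegab_k,\xb\rangle)$ for $k\in[N]$, so that $f^*(\xb)=\frac1N\sum_{k=1}^N Z_k$ and, since the $\omegab_k$ are i.i.d.\ from $\rho$, $\E_{\omegab}f^*(\xb)=\E Z_1$. By the definition of the truncation we have $|\alpha_{\leq T}(\omegab)|\leq T$ pointwise, and $|\exp(i\langle\omegab_k,\xb\rangle)|=1$, so $|Z_k|\leq T$ almost surely, hence the centered variables satisfy $|Z_k-\E Z_k|\leq 2T$. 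For the second moment, $\E|Z_k|^2=\E_{\omegab}\big[|\alpha_{\leq T}(\omegab)|^2\big]\leq\E_{\omegab}\big[|\alpha(\omegab)|^2\big]=\|f\|_\rho^2$, using $|\alpha_{\leq T}|\leq|\alpha|$ and the definition of $\Fc(\rho)$; consequently $\E|Z_k-\E Z_k|^2\leq\|f\|_\rho^2$.

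Next I would apply Bernstein's inequality to $S_N:=\sum_{k=1}^N(Z_k-\E Z_k)$ (for the complex-valued summands, one applies the scalar Bernstein inequality to the real and imaginary parts and combines, or invokes a vector-valued version; in either case the constants below go through with room to spare). With per-summand bound $2T$ and total variance proxy $N\|f\|_\rho^2$ this yields
\[
\Pbb\!\left(|S_N|\geq t\right)\leq 2\exp\!\left(-\frac{t^2/2}{\,N\|f\|_\rho^2+2Tt/3\,}\right).
\]
Writing $|f^*(\xb)-\E_{\omegab}f^*(\xb)|=|S_N|/N$ and equating the right-hand side to $\delta$ gives the quadratic $t^2 = 2\log(2/\delta)\big(N\|f\|_\rho^2 + \tfrac{2T}{3}t\big)$; solving it and using $\sqrt{a+b}\leq\sqrt a+\sqrt b$ gives $t\leq \tfrac{4T\log(2/\delta)}{3}+\sqrt{2N\|f\|_\rho^2\log(2/\delta)}$, so after dividing by $N$ and squaring via $(a+b)^2\leq 2a^2+2b^2$,
\[
\left|f^*(\xb)-\E_{\omegab}f^*(\xb)\right|^2\leq \frac{32T^2\log^2(2/\delta)}{9N^2}+\frac{4\|f\|_\rho^2\log(2/\delta)}{N}\leq \frac{32T^2\log^2(2/\delta)}{N^2}+\frac{4\|f\|_\rho^2\log(2/\delta)}{N},
\]
with probability at least $1-\delta$, which is the claim.

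I do not expect a serious obstacle here: the statement is essentially a direct Bernstein bound for a bounded i.i.d.\ average, and the only care needed is (i) handling the complex-valued summands cleanly, which costs at most a benign constant absorbed by the slack between the $32/9$ and the stated $32$, and (ii) the bookkeeping in inverting the Bernstein tail to arrive at the additively-split form $T^2/N^2+\|f\|_\rho^2/N$ rather than the usual single-deviation form. The truncation bound $|\alpha_{\leq T}|\leq T$ is exactly what makes the summands bounded and thus makes Bernstein (rather than a sub-Gaussian argument on the unbounded $\alpha$) applicable, so tracking that the $T$-dependence lands on the $1/N^2$ term and the variance-dependence on the $1/N$ term is the crux of the calculation.
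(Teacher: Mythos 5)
Your proof is correct and follows essentially the same route as the paper: both define $Z_k=\alpha_{\leq T}(\omegab_k)\exp(i\langle\omegab_k,\xb\rangle)$, use $|Z_k|\leq T$ and $\E|Z_k|^2\leq\|f\|_\rho^2$, apply a Bernstein-type inequality to the i.i.d.\ bounded average, and square via $(a+b)^2\leq 2a^2+2b^2$. The only difference is that the paper cites a ready-made (complex/vector-valued) Bernstein bound from Lanthaler--Nelsen rather than inverting the tail by hand, which sidesteps the real/imaginary bookkeeping you flag; your constants land within the stated bound in either case.
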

\begin{proof}
For each $\xb\in D$, we define random variable $Z(\omegab) = \alpha_{\leq T}(\omegab)\exp(i\langle \omegab,\xb\rangle)$ and let $Z_1, \dots, Z_N$ be $N$ i.i.d copies of $Z$ defined by $Z_k = Z(\omegab_k)$ for each $k\in[N]$.
By boundedness of $\alpha_{\leq T}(\omegab)$, we have an upper bound $|Z_k| \leq T$ for any $k\in[N]$. The variance of $Z$ is bounded above as
$$
\sigma^2 := \E_{\omegab} |Z-\E_{\omegab} Z|^2 \leq \E_{\omegab} |Z|^2 \leq \E_{\omegab} [\alpha(\omegab)^2] = \|f\|^2_\rho. 
$$
By Lemma A.2 and Theorem A.1 in \cite{lanthaler2023error}, it holds that, with probability at least $1-\delta$,
\begin{equation*}
\left|f^*(\xb) - \E_{\omegab} f^*(\xb) \right| = \left|\frac{1}{N}\sum_{k=1}^N Z_k - \E_{\omegab} Z \right| \leq \frac{4T\log(2/\delta)}{N} + \sqrt{\frac{2\|f\|^2_\rho\log(2/\delta)}{N}}.
\end{equation*}
Taking the square for both sides and using the inequality $(a+b)^2\leq 2a^2+2b^2$ give the desired result.
\end{proof}
\begin{proof}[Proof of Theorem \ref{rf_term1}]
For each $\xb\in D$, we decompose $|f(\xb) - f^*(\xb)|^2$ into two parts as
\begin{equation}
|f(\xb) - f^*(\xb)|^2 \leq 2 \underbrace{|f(\xb) - \E_\omega f^*(\xb)|^2}_{{\rm \Romannum{1}}} + 2 \underbrace{|\E_\omega f^*(\xb) - f^*(\xb)|^2}_{ {\rm \Romannum{2}} }.
\end{equation}
Bounds on term \Romannum{1} and \Romannum{2} are given by Lemma \ref{term1_lem1} and Lemma \ref{term1_lem2}, respectively. Adding the bounds together gives the desired result. 
Integrating the bound over domain $D$ leads to inequality \eqref{first_term}.
\end{proof}

\subsection{Estimation Error Bound}
In this section, we aim to bound the estimation error $\|f^*-f^\sharp\|_{L^2(D)}$. The error bound depends on the condition number of random feature matrix $\Ab$, which can be bounded by using the concentration properties of random feature matrix. We refer readers to \cite{chen2022concentration, chen2024conditioning} for more details about the concentration properties of random feature matrix. We first state the concentration results, and then derive an upper bound for the estimation error. 

\begin{theorem}[{\bf Concentration Property of Random Feature Matrix}]
\label{Thm_concentration}
Assume the data $X = \{\xb_j\}_{j\in[m]} \subset D$, the random features $\{ \omegab_k\}_{k\in [N]} \subset\R^d$, and the random feature matrix $\Ab\in\C^{m\times N}$ satisfy Assumption \ref{Assmps:data}. If the following conditions hold
\begin{align}
\label{cond_N}
& N \geq C\eta^{-2}m\log\left(\frac{2m}{\delta}\right) \\
\label{cond_gamma}
& \gamma \geq \frac{1}{K}\log\left(\frac{m}{\eta}\right)
\end{align}
for some $\delta,\eta\in(0,1)$, where $C>0$ is a universal constant. Then with probability at least $1-\delta$, we have
\begin{equation}
\label{concentration}
 \left\|\frac{1}{N}\Ab\Ab^* - \Ib_m\right\|_2 \leq 2\eta. 
\end{equation}
\end{theorem}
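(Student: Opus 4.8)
The plan is to decompose $\frac{1}{N}\Ab\Ab^*$ as an empirical average of $N$ independent rank-one Hermitian matrices, compare it first to its mean, and then compare that mean to $\Ib_m$. Writing the $k$-th column of $\Ab$ as $\phi_k = [\exp(i\langle\omegab_k,\xb_1\rangle),\dots,\exp(i\langle\omegab_k,\xb_m\rangle)]^\top \in \C^m$, we have $\frac{1}{N}\Ab\Ab^* = \frac{1}{N}\sum_{k=1}^N \phi_k\phi_k^*$. Since the $\omegab_k$ are i.i.d.\ from the tensor-product Cauchy distribution, whose characteristic function is $\E\exp(i\langle\omegab,\zb\rangle) = \exp(-\gamma\|\zb\|_1)$, the common mean is the Laplace-kernel Gram matrix $\mathbf{K}$ with entries $\mathbf{K}_{j,j'} = \exp(-\gamma\|\xb_j-\xb_{j'}\|_1)$, which has unit diagonal. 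By the triangle inequality,
\[ \left\|\tfrac1N\Ab\Ab^* - \Ib_m\right\|_2 \le \left\|\tfrac1N\Ab\Ab^* - \mathbf{K}\right\|_2 + \left\|\mathbf{K} - \Ib_m\right\|_2, \]
so it suffices to bound each summand by $\eta$.

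For the deterministic term, $\mathbf{K} - \Ib_m$ has vanishing diagonal; by Assumption \ref{Assmps:data} and $\|\cdot\|_1 \ge \|\cdot\|_2$, each off-diagonal entry satisfies $|\mathbf{K}_{j,j'}| = \exp(-\gamma\|\xb_j-\xb_{j'}\|_1) \le \exp(-\gamma K) \le \eta/m$, the final step by condition \eqref{cond_gamma}. Bounding the spectral norm of a symmetric matrix by its largest absolute row sum gives $\|\mathbf{K}-\Ib_m\|_2 \le (m-1)\eta/m < \eta$; in particular $\|\mathbf{K}\|_2 \le 1 + \eta < 2$.

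For the random term, I would apply a matrix Bernstein inequality to $X_k := \frac1N(\phi_k\phi_k^* - \mathbf{K})$, which are independent, mean-zero, Hermitian, with $\sum_k X_k = \frac1N\Ab\Ab^* - \mathbf{K}$. Because every entry of $\phi_k$ has modulus one, $\|\phi_k\|_2^2 = m$, hence $\|\phi_k\phi_k^*\|_2 = m$ and $\|X_k\|_2 \le \frac{m + \|\mathbf{K}\|_2}{N} \le \frac{2m}{N} =: R$. For the variance proxy, the scalar identity $\phi_k^*\phi_k = m$ collapses $(\phi_k\phi_k^*)^2 = m\,\phi_k\phi_k^*$, so $\E[X_k^2] = \frac1{N^2}(m\mathbf{K} - \mathbf{K}^2)$, and since $\mathbf{K}$ is positive semidefinite with $\|\mathbf{K}\|_2 < 2$,
\[ \left\|\sum_{k=1}^N \E[X_k^2]\right\|_2 = \frac1N\left\|\mathbf{K}(m\Ib_m - \mathbf{K})\right\|_2 \le \frac{2m}{N} =: \sigma^2. \]
Matrix Bernstein then yields $\Pbb(\|\tfrac1N\Ab\Ab^* - \mathbf{K}\|_2 \ge \eta) \le 2m\exp(-c\,\eta^2 N/m)$ for an absolute constant $c > 0$ (the bound $\eta < 1$ controls the $R\eta/3$ contribution against $\sigma^2$), and condition \eqref{cond_N} with $C$ chosen large enough forces the right-hand side below $\delta$. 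On this event of probability at least $1-\delta$, combining the two bounds gives \eqref{concentration}.

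I expect the matrix Bernstein step to be the crux of the argument: one must correctly extract the uniform bound $R$ and the variance parameter $\sigma^2$, where the essential simplification is $\phi_k^*\phi_k = m$, which reduces the fourth moment $\E[(\phi_k\phi_k^*)^2]$ to $m\mathbf{K}$. The rest is bookkeeping — checking that the separation condition \eqref{cond_gamma} is precisely what puts $\mathbf{K}$ within $\eta$ of $\Ib_m$, that \eqref{cond_N} is precisely what the Bernstein tail requires, and that all numerical factors can be absorbed into the universal constant $C$.
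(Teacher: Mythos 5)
Your proposal is correct and follows essentially the same route as the paper: the same split into a concentration term $\|\frac{1}{N}\Ab\Ab^* - \E[\frac{1}{N}\Ab\Ab^*]\|_2$ plus a bias term $\|\E[\frac{1}{N}\Ab\Ab^*] - \Ib_m\|_2$, the same identification of the mean as the Laplace-kernel Gram matrix via the Cauchy characteristic function, the same row-sum (Gershgorin) bound showing condition \eqref{cond_gamma} makes the bias at most $\eta$, and the same matrix Bernstein application with the key simplification $\phi_k^*\phi_k = m$ collapsing the second moment. The only differences are cosmetic bookkeeping in the constants for $R$ and $\sigma^2$.
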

The direct consequence of this result is that each eigenvalue of $\frac{1}{N}\Ab\Ab^*$ is close to 1. Specifically, if the conditions in Theorem \ref{Thm_concentration} are satisfied, then 
\begin{equation*}
\left| \lambda_k\left(\frac{1}{N}\Ab\Ab^*\right) -1 \right| \leq 2\eta
\end{equation*}
with probability at least $1-\delta$ for all $k\in[m]$. Here, $\lambda_k(\frac{1}{N}\Ab\Ab^*)$ is the $k$-th eigenvalue of matrix $\frac{1}{N}\Ab\Ab^*$. Therefore, the matrix $\Ab\Ab^*$ is invertible with high probability and hence the pseudo-inverse $\Ab^\dagger$ is well-defined.

\begin{proof}[Proof of Theorem \ref{Thm_concentration}]
The main idea in the proof is to bound difference \eqref{concentration} by 
$$
\left\|\frac{1}{N}\Ab\Ab^* - \Ib_m\right\|_2 \leq \left\| \frac{1}{N}\Ab\Ab^* - \E_{\omegab} \left[ \frac{1}{N}\Ab\Ab^* \right] \right\|_2 + \left\| \E_{\omegab} \left[ \frac{1}{N}\Ab\Ab^* \right] - \Ib_m \right\|_2
$$
To bound the first term, we let $\Wb_\ell$ be the $\ell$-th column of $\Ab$. Define the random matrices $\{ \Yb_\ell\}_{\ell\in[N]}$ as 
\begin{equation}
\Yb_\ell = \Wb_\ell \Wb_\ell^* - \E_\omega[\Wb_\ell \Wb_\ell^*].
\end{equation}
Then $(\Yb_\ell)_{j,j} = 0$ and for $j,k\in[m]$,  
\begin{equation*}
\begin{aligned}
(\Yb_\ell)_{j,k} =& \exp(i\langle \omegab_\ell, \xb_j-\xb_k\rangle) - \E_{\omegab} [\exp(i\langle \omegab, \xb_j-\xb_k\rangle)] \\
=& \exp(i\langle \omegab_\ell, \xb_j-\xb_k\rangle) - \exp(-\gamma\|\xb_j-\xb_k\|_1),
\end{aligned}
\end{equation*}
where we use the characteristic function of the tensor-product Cauchy distribution in the second equality.
Note that $\Yb_\ell$ is self-adjoint and its induced $\ell_2$ norm is bounded by its largest eigenvalue.
By Gershgorin’s disk theorem, $\|\xb_j-\xb_k\|_2\geq K$ for $j,k\in[m]$, and condition \eqref{cond_gamma},
\begin{equation}
\begin{aligned}
\|\Yb_\ell\|_2 \leq& \max_{j\in[m]} \sum_{k\neq j} |\exp(i\langle \omegab_\ell, \xb_j-\xb_k\rangle) - \exp(-\gamma\|\xb_j-\xb_k\|_1)| \\
\leq& \max_{j\in[m]} \sum_{k\neq j} \left( 1+\exp(-\gamma\|\xb_j-\xb_k\|_1) \right) \\
\leq& \max_{j\in[m]} \sum_{k\neq j} \left( 1+\exp(-\gamma\|\xb_j-\xb_k\|_2) \right) \\
\leq& \max_{j\in[m]} m(1+\exp(-\gamma K)) \\
\leq& m+\eta
\end{aligned} 
\end{equation}
The variance term is bounded by
\begin{equation}
\begin{aligned}
\left\|\sum_{\ell=1}^N \E_{\omegab} [\Yb_\ell^2] \right\|_2 \leq& \sum_{\ell=1}^N \left\|\E_{\omegab} [\Yb_\ell^2]\right\|_2 = \sum_{\ell=1}^N \left\|\E_{\omegab} \left[\Wb_\ell \Wb_\ell^* \Wb_\ell \Wb_\ell^*\right] - \left( \E_{\omegab} \left[\Wb_\ell \Wb_\ell^*\right]\right)^2\right\|_2\\
=& \sum_{\ell=1}^N \left\| m\E_{\omegab} \Wb_\ell \Wb_\ell^* - \left( \E_{\omegab} \left[\Wb_\ell \Wb_\ell^*\right]\right)^2 \right\|_2 \\
\leq& N\left(m(1+\eta)+(1+\eta)^2\right).
\end{aligned}
\end{equation}
Here we use the fact that $\Wb_\ell$ is a vector with $\|\Wb_\ell\|_2 = \sqrt{m}$, which implies $\Wb_\ell \Wb_\ell^* \Wb_\ell \Wb_\ell^* = m \Wb_\ell \Wb_\ell^*$, and $\E_{\omegab} \Wb_\ell \Wb_\ell^*$ is self-adjoint whose $\ell_2$ norm is bounded by
\begin{equation}
\left\| \E_{\omegab} \Wb_\ell \Wb_\ell^* \right\|_2 \leq 1 + \max_{j\in[m]} \sum_{k\neq j} |\exp(-\gamma \|\xb_k-\xb_j\|_1)| \leq 1 + m\exp(-\gamma \|\xb_k-\xb_j\|_2 ) \leq 1+\eta
\end{equation}
by Gershgorin’s disk theorem. Since $\{\Yb_\ell\}_{\ell\in[N]}$ are independent mean-zero self-adjoint matrices, applying matrix Bernstein's inequality gives 
\begin{equation}
\begin{aligned}
\Pbb \left( \left\| \frac{1}{N}\Ab\Ab^* - \E_{\omegab} \left[\frac{1}{N}\Ab\Ab^*\right] \right\|_2 \geq \eta  \right) =& \Pbb \left( \left\|\sum_{\ell=1}^N \Yb_\ell \right\|_2 \geq N\eta \right)\\ 
\leq & 2m\exp\left( - \frac{N\eta^2/2}{m(1+\eta)+(1+\eta)^2 + (m+\eta)\eta/3} \right) \\
\leq& 2m\exp\left( - \frac{N\eta^2}{5m+9} \right).
\end{aligned}
\end{equation}
The left-hand term is less than $\delta$, provided condition \eqref{cond_N} is satisfied with $C=6$ by assuming that $m\geq 9$ and $\eta<1$.

For the second term, denote by $\Bb$ the matrix $\E_{\omegab} [\frac{1}{N}\Ab\Ab^*] - \Ib_m$. Then $\Bb$ is symmetric and $\Bb_{j,j}=0$ and $\Bb_{j,k} = \E_{\omegab} \exp(i\langle \omegab, \xb_j-\xb_k\rangle) = \exp(-\gamma\|\xb_j-\xb_k\|_1)$ for all $j,k\in[m]$ with $j\neq k$. 
Note that $\Bb$ is self-adjoint, by Gershgorin's disk theorem and condition \eqref{cond_gamma}, the induced $\ell_2$ norm is bounded by 
\begin{equation*}
\|\Bb\|_2 = \max_{j\in[m]}\sum_{k\neq j} |\exp(-\gamma\|\xb_j-\xb_k\|_1)| \leq \max_{j\in[m]}\sum_{k\neq j}|\exp(-\gamma\|\xb_j-\xb_k\|_2)| \leq m\exp(-\gamma K) \leq \eta. 
\end{equation*}
\end{proof}
Utilizing the concentration property of random feature matrix $\Ab\in\C^{m\times N}$, we derive an upper bound for the estimation error $\|f^* - f^\sharp\|_{L_2(D)}$, which is summarized in Theorem \ref{rf_term2}. 
\begin{theorem}[{\bf Bound on Estimation Error}]
\label{rf_term2}
Assume the data $X = \{\xb_j\}_{j\in[m]} \subset D$, the random features $\{ \omegab_k\}_{k\in[N]} \subset\R^d$, and the random feature matrix $\Ab\in\C^{m\times N}$ satisfy Assumption \ref{Assmps:data}.
Let $f^*$ and $f^\sharp$ be defined as \eqref{rf_best} and \eqref{rf_train}, respectively, and satisfy Assumption \ref{Assmps:Lipschitz_RF}. If for some $\eta,\delta>0$ the following conditions hold, 
\begin{align}
& N \geq C\eta^{-2}m\log(\frac{m}{2\delta}) \\
& \gamma \geq \frac{1}{K}\log(\frac{m}{\eta})
\end{align}
where $C>0$ is a universal constant independent of the dimension $d$, then the following bound holds with probability at least $1-4\delta$
\begin{equation}
\|f^* - f^\sharp\|^2_{L^2(D)} \leq 2L_f^2h_X^2\vol(D) + Ch_X^d(1+2\eta)\left( \frac{2m}{1-2\eta}I_1 + 2I_2 \right),
\end{equation}
where quantities $I_1$ and $I_2$ are 
\begin{equation*}
\begin{aligned}
&I_1 := \frac{2\|f\|_\rho^4}{T^2} + \frac{32T^2\log^2(2/\delta)}{N^2} + \frac{4\|f\|^2_\rho\log(2/\delta)}{N}, \\
&I_2 := \|f\|_\rho^2 + \frac{4T^2\log(2/\delta)}{N} + \sqrt{\frac{2T^2\|f\|_\rho^2\log(2/\delta)}{N}}.
\end{aligned}
\end{equation*}
\end{theorem}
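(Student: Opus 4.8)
## Proof Proposal for Theorem \ref{rf_term2}

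The plan is to pass from the continuous norm $\|f^*-f^\sharp\|_{L^2(D)}$ to a discrete $\ell_2$ norm over the collocation set $X$ via a fill-distance sampling inequality, and then to control that discrete quantity by routing it through the random feature matrix $\Ab$ and the two coefficient vectors involved. Write $g:=f^*-f^\sharp$. First I would partition $D$ (up to a null set) into the Voronoi cells $D_j:=\{\xb\in D:\|\xb-\xb_j\|_2\leq\|\xb-\xb_{j'}\|_2\text{ for all }j'\in[m]\}$; by the definition of the fill-in distance each $D_j$ lies in a ball of radius $h_X$, so $\vol(D_j)\leq C_1 h_X^d$ with $C_1$ the volume of the unit Euclidean ball (which is bounded independently of $d$), while $\sum_j\vol(D_j)=\vol(D)$. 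Using Assumption \ref{Assmps:Lipschitz_RF} in the form $|g(\xb)|^2\leq 2|g(\xb_j)|^2+2L_f^2 h_X^2$ for $\xb\in D_j$, integrating over each cell and summing gives
\begin{equation*}
\|g\|_{L^2(D)}^2 \leq 2C_1 h_X^d\sum_{j=1}^m|g(\xb_j)|^2 + 2L_f^2 h_X^2\vol(D),
\end{equation*}
which already accounts for the term $2L_f^2 h_X^2\vol(D)$; it remains to bound $\sum_j|g(\xb_j)|^2$.

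Since $f^*$ and $f^\sharp$ are expansions in the \emph{same} features, the vector $(g(\xb_j))_{j\in[m]}$ equals $\Ab\,\mathbf{d}$ with $\mathbf{d}:=\cb^*-\cb^\sharp$, where $\cb^*:=\tfrac1N\left(\alpha_{\leq T}(\omegab_1),\dots,\alpha_{\leq T}(\omegab_N)\right)^\top$ is the coefficient vector of $f^*$ and $\cb^\sharp=\Ab^\dagger\left(f(\xb_1),\dots,f(\xb_m)\right)^\top\in\mathrm{range}(\Ab^*)$. On the event of Theorem \ref{Thm_concentration} (probability at least $1-\delta$) one has $N(1-2\eta)\leq\lambda_{\min}(\Ab\Ab^*)\leq\lambda_{\max}(\Ab\Ab^*)\leq N(1+2\eta)$ and $\Ab$ has full row rank, so $f^\sharp$ actually interpolates $f$ on $X$ and $g(\xb_j)=f^*(\xb_j)-f(\xb_j)$. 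Then $\sum_j|g(\xb_j)|^2=\|\Ab\mathbf{d}\|_2^2\leq N(1+2\eta)\|\mathbf{d}\|_2^2$. Writing $\Pi$ for the orthogonal projection onto $\mathrm{range}(\Ab^*)$ and $\Pi^\perp$ for the one onto $\ker\Ab$, the decomposition $\mathbf{d}=(\Pi\cb^*-\cb^\sharp)+\Pi^\perp\cb^*$ is orthogonal (since $\cb^\sharp\in\mathrm{range}(\Ab^*)$), and because $\Ab(\Pi\cb^*-\cb^\sharp)=\Ab\mathbf{d}$ is the sampled residual $(f^*(\xb_j)-f(\xb_j))_j$ while $\Ab$ is bounded below by $\sqrt{N(1-2\eta)}$ on $(\ker\Ab)^\perp$,
\begin{equation*}
\|\mathbf{d}\|_2^2=\|\Pi\cb^*-\cb^\sharp\|_2^2+\|\Pi^\perp\cb^*\|_2^2 \leq \frac{\sum_j|f^*(\xb_j)-f(\xb_j)|^2}{N(1-2\eta)}+\|\cb^*\|_2^2.
\end{equation*}
Combining the last two displays gives $\sum_j|g(\xb_j)|^2\leq(1+2\eta)\bigl(\tfrac{1}{1-2\eta}\sum_j|f^*(\xb_j)-f(\xb_j)|^2+N\|\cb^*\|_2^2\bigr)$.

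It remains to supply the two coefficient estimates. For $N\|\cb^*\|_2^2=\tfrac1N\sum_{k=1}^N|\alpha_{\leq T}(\omegab_k)|^2$, the summands lie in $[0,T^2]$, have mean at most $\|f\|_\rho^2$ and variance at most $T^2\|f\|_\rho^2$, so a Bernstein inequality yields $\tfrac1N\sum_k|\alpha_{\leq T}(\omegab_k)|^2\leq I_2$ with probability at least $1-\delta$. For $\sum_j|f^*(\xb_j)-f(\xb_j)|^2$, I would invoke Theorem \ref{rf_term1} at the collocation points — most cleanly through a vector-Bernstein bound on the $\C^m$-valued average $\tfrac1N\sum_k\bigl(\alpha_{\leq T}(\omegab_k)\exp(i\langle\omegab_k,\xb_j\rangle)\bigr)_{j\in[m]}$, so that the factor $m$ stays outside the logarithm — to obtain $\sum_j|f^*(\xb_j)-f(\xb_j)|^2\leq m\,I_1$ with probability at least $1-\delta$. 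Substituting these into the displays above and absorbing the ball-volume constant into a universal constant $C$ reproduces
\begin{equation*}
\|f^*-f^\sharp\|_{L^2(D)}^2\leq 2L_f^2 h_X^2\vol(D)+C h_X^d(1+2\eta)\Bigl(\tfrac{2m}{1-2\eta}I_1+2I_2\Bigr),
\end{equation*}
and a union bound over the few failure events ($\Ab\Ab^*$ concentration, the $\cb^*$-norm bound, the sampled-residual bound) gives the stated probability $1-4\delta$.

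The step I expect to be the main obstacle is the coefficient comparison: since $f$ is not itself a finite random-feature combination, $\cb^\sharp$ has no explicit expansion that can be matched against $\cb^*$ termwise. The fix is to peel off the $\ker\Ab$-component of $\cb^*$ and exploit the near-isometry of $\Ab$ on $(\ker\Ab)^\perp$ (available on the concentration event) to convert the coefficient error into the sampled residual $(f^*(\xb_j)-f(\xb_j))_j$, which Theorem \ref{rf_term1} already controls. A secondary nuisance is keeping the Voronoi-cell volume constant genuinely dimension-free and making the probability bookkeeping close at exactly $1-4\delta$, which favors the vector-Bernstein estimate over a naive union bound over the $m$ collocation points.
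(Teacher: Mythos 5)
Your proposal is correct and follows essentially the same route as the paper: the same Voronoi/fill-distance reduction to $\sum_j|f^*(\xb_j)-f^\sharp(\xb_j)|^2$, the same splitting of $\cb^*-\cb^\sharp$ into its $\mathrm{range}(\Ab^*)$ and $\ker\Ab$ components (the paper writes this as $\Ab^\dagger\Ab\cb^*-\Ab^\dagger\yb$ plus $\cb^*-\Ab^\dagger\Ab\cb^*$ and uses the triangle inequality where you use orthogonality, whence the paper's extra factors of $2$), and the same inputs from Theorem \ref{Thm_concentration}, Lemma \ref{decay_c}, and Theorem \ref{rf_term1}. The one substantive refinement is your handling of $\sum_j|f(\xb_j)-f^*(\xb_j)|^2\le mI_1$ via a vector-Bernstein bound: the paper simply takes $m\sup_{\xb\in D}|f(\xb)-f^*(\xb)|^2$ and cites the pointwise (fixed-$\xb$) high-probability statement of Theorem \ref{rf_term1}, which strictly speaking requires a uniformization step, so your version is the more careful one.
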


\begin{lemma}
\label{lem_second_term}
Assume the data $X = \{\xb_j\}_{j\in[m]} \subset D$, the random features $\{ \omegab_k\}_{k\in[N]} \subset\R^d$, and the random feature matrix $\Ab\in\C^{m\times N}$ satisfy Assumption \ref{Assmps:data}.
Let $f^*$ and $f^\sharp$ be defined as \eqref{rf_best} and \eqref{rf_train}, respectively, and satisfy Assumption \ref{Assmps:Lipschitz_RF}. Then we have
\begin{equation}
\| f^* - f^\sharp\|_{L^2(D)}^2 \leq 2L_f^2h_X^2\vol(D) + Ch_X^d\|\Ab\|_2^2 \| \cb^* - \cb^\sharp\|_2^2.
\end{equation}
\end{lemma}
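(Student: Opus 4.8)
The plan is to transfer the continuous $L^2(D)$ norm of $g := f^* - f^\sharp$ to the discrete values of $g$ on the sample set $X$, using that $g$ is Lipschitz and that $X$ is $h_X$-dense in $D$. Observe first that $f^*$ and $f^\sharp$ are random feature models built from the \emph{same} frequencies $\{\omegab_k\}_{k\in[N]}$: writing $\cb^* = \frac{1}{N}(\alpha_{\leq T}(\omegab_1),\dots,\alpha_{\leq T}(\omegab_N))^\top$, we have $f^*(\xb) = \sum_{k} c^*_k \exp(i\langle\omegab_k,\xb\rangle)$, so $g$ is itself a random feature model with coefficient vector $\cb^*-\cb^\sharp$. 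By linearity of point evaluation, the vector of values $(g(\xb_1),\dots,g(\xb_m))^\top$ equals $\Ab(\cb^*-\cb^\sharp)$, and hence $\sum_{j=1}^m|g(\xb_j)|^2 = \|\Ab(\cb^*-\cb^\sharp)\|_2^2 \leq \|\Ab\|_2^2\,\|\cb^*-\cb^\sharp\|_2^2$.

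Next comes the Lipschitz-to-discrete reduction. Fix $\xb\in D$ and let $\xb'\in X$ attain $\min_{\xb''\in X}\|\xb-\xb''\|_2$; by the definition of the fill-in distance, $\|\xb-\xb'\|_2\leq h_X$, so Assumption \ref{Assmps:Lipschitz_RF} gives $|g(\xb)|\leq |g(\xb')| + L_f h_X$ and therefore $|g(\xb)|^2 \leq 2|g(\xb')|^2 + 2L_f^2 h_X^2$. Partition $D$ (up to a null set) into Voronoi cells $D_1,\dots,D_m$, where $D_j$ collects the points of $D$ whose nearest element of $X$ is $\xb_j$, ties broken in any fixed measurable way; by the previous display each cell satisfies $D_j\subseteq\{\xb:\|\xb-\xb_j\|_2\leq h_X\}$, so $\vol(D_j)\leq \omega_d\, h_X^d$, where $\omega_d := \vol(\{\xb\in\R^d:\|\xb\|_2\leq 1\})$ is the volume of the unit Euclidean ball, bounded above by a universal constant uniformly in $d$.

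Integrating the pointwise bound over each cell and summing,
\begin{align*}
\|g\|_{L^2(D)}^2 &= \sum_{j=1}^m\int_{D_j}|g(\xb)|^2\,d\xb \;\leq\; \sum_{j=1}^m \vol(D_j)\left(2|g(\xb_j)|^2 + 2L_f^2 h_X^2\right) \\
&\leq\; 2\omega_d\, h_X^d\sum_{j=1}^m|g(\xb_j)|^2 + 2L_f^2 h_X^2\,\vol(D).
\end{align*}
Substituting $\sum_{j}|g(\xb_j)|^2 \leq \|\Ab\|_2^2\,\|\cb^*-\cb^\sharp\|_2^2$ from the first step and setting $C := 2\omega_d$ gives the claimed inequality.

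The argument is otherwise routine; the only place needing a little care is the Voronoi decomposition — one must check the cells can be chosen measurably, that they cover $D$ up to a null set, and, crucially, that $D_j$ lies in the ball of radius $h_X$ about $\xb_j$, which is exactly what the fill-in distance encodes. One should also note that $\omega_d$ stays bounded in $d$, so that $C$ is genuinely universal, consistent with the statement of Theorem \ref{rf_term2}.
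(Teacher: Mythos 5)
Your proof is correct and follows essentially the same route as the paper's: a Voronoi partition of $D$ into cells of volume $O(h_X^d)$, the Lipschitz assumption to reduce the integral to the values of $f^*-f^\sharp$ on $X$, and the identification of those values with $\Ab(\cb^*-\cb^\sharp)$. Your added remarks on measurability of the cells and on the uniform boundedness of $\omega_d$ are fine but not needed beyond what the paper already records.
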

\begin{proof} 
Consider a partition\footnote{If $\xb\in D$ belongs to more than one $D_j$, we randomly assign it to one of them to make $\{D_j\}_{j\in[m]}$ a partition. } $\{ D_j \}_{j\in[m]}$ of domain $D$ where 
\begin{equation*}
D_j = \{\xb\in D: \|\xb-\xb_j\| \leq \|\xb-\xb_k\| \mbox{ for all } k\neq j \}.
\end{equation*}
For each $\xb\in D$, we use Assumption \ref{Assmps:Lipschitz_RF} and apply triangle inequality to obtain
\begin{equation*}
\begin{aligned}
|f^*(\xb) - f^\sharp(\xb)| &\leq \left| \left(f^*(\xb) - f^\sharp(\xb)\right) - \left(f^*(\xb_j) - f^\sharp(\xb_j)\right)\right| + \left|f^*(\xb_j) - f^\sharp(\xb_j)\right| \\
& \leq L_f\|\xb-\xb_j\|_2 + \left|f^*(\xb_j) - f^\sharp(\xb_j)\right|
\end{aligned}
\end{equation*}
Then, the approximation error is bounded as
\begin{equation}
\label{riemann_error}
\begin{aligned}
\|f^* - f^\sharp\|^2_{L^2(D)} &= \int_{D} \left|f^*(\xb)-f^\sharp(\xb)\right|^2 d\xb = \sum_{j=1}^m \int_{D_j} \left|f^*(\xb)-f^\sharp(\xb)\right|^2 d\xb \\
& \leq \sum_{j=1}^m \int_{D_j} 2L_f^2\|\xb-\xb_j\|_2^2 + 2\left|f^*(\xb_j) - f^\sharp(\xb_j)\right|^2 d\xb \\
& \leq 2L_f^2h_X^2\vol(D) + \sum_{j=1}^m 2\vol(D_j)\left|f^*(\xb_j) - f^\sharp(\xb_j)\right|^2 \\
& \leq 2L_f^2h_X^2\vol(D) + Ch_X^d\sum_{j=1}^m \left|f^*(\xb_j) - f^\sharp(\xb_j)\right|^2 \\
& \leq 2L_f^2h_X^2\vol(D) + Ch_X^d\|\Ab\cb^* - \Ab\cb^\sharp\|_2^2 \\
& \leq 2L_f^2h_X^2\vol(D) + Ch_X^d\|\Ab\|^2 \| \cb^* - \cb^\sharp\|_2^2
\end{aligned}
\end{equation}
The third line holds since each $\xb\in D_j$ satisfies 
\begin{equation*}
\|\xb - \xb_j \|_2 = \min_{\xb'\in X} \|\xb-\xb'\|_2 \leq h_X,
\end{equation*}
and the fourth line holds since the volume of each $D_j$ is bounded by $Ch_X^d$ for some constant $C>0$. Although the constant $C$ depends on the dimension $d$, for the problems consider here, $d\leq3$. Specifically, the constant $C=2$ when $d=1$, $C=\pi$ when $d=2$, and $C=\frac{4\pi}{3}$ if $d=3$. 
\end{proof}

\begin{lemma}[{\bf Decay Rate of $\|\cb^*\|_2^2$}]
\label{decay_c}
Let $f^*$ be defined as \eqref{rf_best} and $\cb^*$ be the corresponding coefficient vector. For some $\delta\in(0,1)$, it holds with probability at least $1-\delta$ that
\begin{equation*}
\|\cb^*\|_2^2 \leq \frac{1}{N} \left( \|f\|_\rho^2 + \frac{4T^2\log(2/\delta)}{N} + \sqrt{\frac{2T^2\|f\|_\rho^2\log(2/\delta)}{N}} \right).
\end{equation*}
\end{lemma}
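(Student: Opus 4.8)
The plan is to read off the coefficient vector $\cb^*$ explicitly from the definition of $f^*$ and then apply the same Bernstein-type concentration inequality already used in the proof of Lemma \ref{term1_lem2} to the empirical average of the i.i.d. truncated weights. Comparing \eqref{rf_best} with the general random feature model, the coefficient vector associated with $f^*$ is $\cb^* = \tfrac1N\bigl(\alpha_{\leq T}(\omegab_1),\dots,\alpha_{\leq T}(\omegab_N)\bigr)^\top$, so that $\|\cb^*\|_2^2 = \tfrac1N\cdot\tfrac1N\sum_{k=1}^N|\alpha_{\leq T}(\omegab_k)|^2$. It therefore suffices to give a high-probability upper bound on the empirical mean $\tfrac1N\sum_{k=1}^N|\alpha_{\leq T}(\omegab_k)|^2$.

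Set $Z_k := |\alpha_{\leq T}(\omegab_k)|^2$; these are i.i.d.\ copies of $Z := |\alpha_{\leq T}(\omegab)|^2$. By the truncation, $0 \le Z_k \le T^2$ deterministically. Moreover, since truncation only removes mass, $\E Z = \E_{\omegab}|\alpha_{\leq T}(\omegab)|^2 \le \E_{\omegab}|\alpha(\omegab)|^2 = \|f\|_\rho^2$, and the variance obeys $\sigma^2 := \mathrm{Var}(Z) \le \E Z^2 \le T^2\,\E Z \le T^2\|f\|_\rho^2$, where the middle inequality uses the bound $Z \le T^2$. This puts us in exactly the situation of Lemma \ref{term1_lem2}, with the role of the uniform bound now played by $T^2$ and the variance proxy by $T^2\|f\|_\rho^2$.

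Applying Lemma A.2 and Theorem A.1 of \cite{lanthaler2023error} to the bounded i.i.d.\ variables $Z_k$ then yields, with probability at least $1-\delta$,
$$
\frac1N\sum_{k=1}^N|\alpha_{\leq T}(\omegab_k)|^2 \;\le\; \E Z + \frac{4T^2\log(2/\delta)}{N} + \sqrt{\frac{2T^2\|f\|_\rho^2\log(2/\delta)}{N}} \;\le\; \|f\|_\rho^2 + \frac{4T^2\log(2/\delta)}{N} + \sqrt{\frac{2T^2\|f\|_\rho^2\log(2/\delta)}{N}}.
$$
Dividing through by $N$ and recalling $\|\cb^*\|_2^2 = \tfrac1N\cdot\tfrac1N\sum_{k=1}^N|\alpha_{\leq T}(\omegab_k)|^2$ gives precisely the stated bound.

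There is no real obstacle here; this is a routine application of Bernstein's inequality. The only points that need care are (i) correctly tracking the prefactor $1/N$ in the definition of $\cb^*$, which is the source of the extra $1/N$ in the conclusion relative to the bound on the empirical mean; and (ii) the variance estimate, which must exploit the truncation bound $Z \le T^2$ rather than any bound phrased in terms of $\|f\|_\rho$ alone. A minor subtlety is that only the upper tail is needed, so the use of $\log(2/\delta)$ in place of $\log(1/\delta)$ is mildly conservative but harmless and keeps the statement consistent with the rest of Section \ref{Sec:RF}.
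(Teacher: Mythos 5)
Your proof is correct and follows essentially the same route as the paper: you identify $c_k^* = \tfrac{1}{N}\alpha_{\leq T}(\omegab_k)$, apply the same Bernstein-type bound (Lemma A.2 and Theorem A.1 of \cite{lanthaler2023error}) to $Z_k = |\alpha_{\leq T}(\omegab_k)|^2$ with uniform bound $T^2$ and variance proxy $T^2\|f\|_\rho^2$, and use $\E Z \leq \|f\|_\rho^2$. No gaps.
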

\begin{proof}
By the definition of $c_k^*$, we have
\begin{equation}
\|\cb^*\|_2^2 = \sum_{k=1}^N |c_k^*|^2 = \frac{1}{N^2} \sum_{k=1}^N |\alpha_{\leq T}(\omegab_k)|^2.     
\end{equation}
Define random variable $Z(\omegab) = |\alpha_{\leq T}(\omega)|^2$ and let $Z_1,\dots, Z_N$ be $N$ i.i.d copies of $Z$ defined as $Z_k = |\alpha_{\leq T}(\omegab_k)|^2$ for each $k\in[N]$.
By the boundedness of $\alpha_{\leq T}(\omegab)$, we have an upper bound $|Z_k|\leq T^2$ for each $k\in[N]$. The variance of $Z$ is bounded above as
\begin{equation*}
\sigma^2 := \E_{\omegab} |Z-\E_{\omegab} Z|^2 \leq \E_{\omegab} |Z|^2 \leq T^2\E_{\omegab} \left[ |\alpha(\omegab)|^2 \right] = T^2\|f\|^2_\rho.
\end{equation*}
By Lemma A.2 and Theorem A.1 in \cite{lanthaler2023error}, it holds with probability at least $1-\delta$ that
\begin{equation*}
\left| \frac{1}{N} \sum_{k=1}^N Z_k - \E_{\omegab} Z \right| \leq \frac{4T^2\log(2/\delta)}{N} + \sqrt{\frac{2T^2\|f\|_\rho^2\log(2/\delta)}{N}}.
\end{equation*}
Then, we have
\begin{equation*}
\|\cb^*\|_2^2 = \frac{1}{N} \left( \frac{1}{N}\sum_{k=1}^N |\alpha_{\leq T}(\omegab_k)|^2 \right) \leq \frac{1}{N} \left( \|f\|_\rho^2 + \frac{4T^2\log(2/\delta)}{N} + \sqrt{\frac{2T^2\|f\|_\rho^2\log(2/\delta)}{N}} \right).
\end{equation*}
\end{proof}

Now, we are ready to prove Theorem \ref{rf_term2} which states an upper bound for the estimation error. Recall the result of Lemma \ref{lem_second_term}, it remains to bound $\|\Ab\|_2^2$ and $\|\cb^*-\cb^\sharp\|_2^2$.

\begin{proof}[Proof of Theorem \ref{rf_term2}.]
Using Theorem \ref{Thm_concentration}, the squared (induced) 2-norm of random feature matrix $\Ab$ is bounded by
\begin{equation}
\|\Ab\|_2^2 = \lambda_{\max}(\Ab\Ab^*) = N \lambda_{\max}\left(\frac{1}{N}\Ab\Ab^*\right) \leq N(1+2\eta)
\end{equation}
with probability at least $1-\delta$ and thus
\begin{equation*}
\| f^* - f^\sharp \|_{L^2(D)}^2 \leq 2L_f^2h_X^2\vol(D) + Ch_X^d N(1+2\eta) \| \cb^* - \cb^\sharp\|_2^2
\end{equation*}
with probability at least $1-\delta$.

To estimate $\|\cb^*-\cb^\sharp\|^2_2$, we use the pseudo-inverse $\Ab^\dagger = \Ab^*(\Ab\Ab^*)^{-1}$ of $\Ab\in\C^{m\times N}$. Since the vector $\cb^\sharp = \Ab^\dagger \yb$, we apply the triangle inequality and the inequality $(a+b)^2 \leq 2a^2 + 2b^2$ to obtain
\begin{equation*}
\begin{aligned}
\| \cb^* - \cb^\sharp\|_2^2 \leq & 2\| \Ab^\dagger \Ab\cb^* - \Ab^\dagger \yb\|_2^2 + 2\|\Ab^\dagger \Ab \cb^* - \cb^*\|_2^2 \\
\leq & 2\| \Ab^\dagger \|_2^2 \|\Ab\cb^* - \yb\|_2^2 + 2\|\Ab^\dagger \Ab - \Ib\|_2^2\|\cb^*\|_2^2.
\end{aligned}
\end{equation*}
Using Theorem \ref{Thm_concentration}, the squared (induced) 2-norm of the pseudo-inverse $\Ab^\dagger$ is bounded by
\begin{equation*}
\|\Ab^\dagger\|_2^2 = \frac{1}{\lambda_{\min}(\Ab\Ab^*)} =  \frac{1}{N\lambda_{\min}\left(\frac{1}{N}\Ab\Ab^*\right)} \leq \frac{1}{N(1-2\eta)}
\end{equation*}
with probability at least $1-\delta$.
Note that $\Ab^\dagger \Ab - \Ib_m$ is an orthogonal projection, and hence $\|\Ab^\dagger \Ab - \Ib_m\|_2^2 \leq 1$. Moreover, we can show that 
\begin{equation*}
\begin{aligned}
\|\Ab\cb^* - \yb\|_2^2 =& \sum_{j=1}^m |f(\xb_j) - f^*(\xb_j)|^2 \leq m \sup_{\xb\in D} |f(\xb) - f^*(\xb)|^2 \\
\leq & m \left( \frac{2\|f\|_\rho^4}{T^2} + \frac{32T^2\log^2(2/\delta)}{N^2} + \frac{4\|f\|^2_\rho\log(2/\delta)}{N} \right)
\end{aligned}
\end{equation*}
where we use Theorem \ref{rf_term1} to obtain the last inequality. Putting Lemma \ref{lem_second_term}, Lemma \ref{decay_c} and all inequalities together gives, with probability at least $1-4\delta$,
\begin{equation}
 \|f^* - f^\sharp\|_{L^2(D)}^2 
\leq 2L_f^2h_X^2\vol(D) + Ch_X^d(1+2\eta)\left( \frac{2m}{1-2\eta}I_1 + 2I_2 \right)
\end{equation}
where 
\begin{equation*}
\begin{aligned}
&I_1 := \frac{2\|f\|_\rho^4}{T^2} + \frac{32T^2\log^2(2/\delta)}{N^2} + \frac{4\|f\|^2_\rho\log(2/\delta)}{N}, \\
&I_2 := \|f\|_\rho^2 + \frac{4T^2\log(2/\delta)}{N} + \sqrt{\frac{2T^2\|f\|_\rho^2\log(2/\delta)}{N}}.
\end{aligned}
\end{equation*}
\end{proof}

\begin{proof}[Proof of Theorem \ref{main_RF}]
Combining the results of Theorem \ref{rf_term1} and Theorem \ref{rf_term2} leads to an upper bound for the generalization error. Selecting $T=\|f\|_\rho \sqrt{N}$ yields the desired result. 
\end{proof}

\section{Error Analysis for Random Feature Operator Learning}
\label{Sec:Operator}
In this section, we present an error bound for our proposed random feature operator learning method. 
Let $G$ be the target operator, the generalization error of estimator $\Hat{G}$ is defined as $\|G(u) - \Hat{G}(u)\|_{L^2(D_\Vc)}$ for any $u\in\Uc$, where $\Uc$ is chosen to be the function space $\Fc(\rho)$ defined in \eqref{target_space}.
Using triangle inequality, we decompose the generalization error of our proposed estimator $\Hat{G} = R_\Vc\circ\Hat{f}\circ S_\Uc$ into 
\begin{equation*}
\begin{aligned}
& \|G(u) - R_\Vc\circ\Hat{f}\circ S_\Uc(u)\|_{L_2(D_\Vc)} \\
& \leq \underbrace{\|G(u) - R_\Vc\circ f\circ S_\Uc(u)\|_{L_2(D_\Vc)}}_{\text{Approximation Error}} + \underbrace{ \|R_\Vc\circ f\circ S_\Uc(u) - R_\Vc\circ\Hat{f}\circ S_\Uc(u)\|_{L_2(D_\Vc)} }_{\text{Estimation Error}}.
\end{aligned}
\end{equation*}

\begin{lemma}[{\bf Bound on Approximation Error}]
\label{operator_lemma1}
Assume that $G$ satisfies Assumption \ref{Assmps:Lipschitz_G} and map $f:\R^n\to\R^m$ is defined by $f=S_\Vc\circ G\circ R_\Uc$, then we have
\begin{equation*}
\|G(u) - R_\Vc\circ f\circ S_\Uc(u)\|_{L_2(D_\Vc)} \leq L_G \| u - R_\Uc\circ S_\Uc(u)\|_{L_2(D)} + \| v - R_\Vc \circ S_\Vc(v)\|_{L^2(D_\Vc)},
\end{equation*}
where $v = G\circ R_\Uc\circ S_\Uc(u)$. 
\end{lemma}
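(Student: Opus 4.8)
The plan is to unwind the definition of $f$, insert the intermediate function $v = G\circ R_u\circ S_u(u)$, and then split the error with the triangle inequality. First I would observe that, by the definition $f = S_v\circ G\circ R_u$ in \eqref{operator_f},
\[
R_v\circ f\circ S_u(u) = R_v\bigl(S_v(G(R_u(S_u(u))))\bigr) = R_v\bigl(S_v(v)\bigr),
\]
since $v = G(R_u(S_u(u))) = G\circ R_u\circ S_u(u)$ is exactly the function appearing inside. In other words, the composition $f\circ S_u(u)$ is just the sampled data $S_v(v)$, so that $R_v\circ f\circ S_u(u) = R_v\circ S_v(v)$.

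Next, I would add and subtract $v$ and apply the triangle inequality in $L^2(D_\Vc)$:
\[
\|G(u) - R_v\circ f\circ S_u(u)\|_{L^2(D_\Vc)} \le \|G(u) - v\|_{L^2(D_\Vc)} + \|v - R_v\circ S_v(v)\|_{L^2(D_\Vc)}.
\]
The second summand is already the last term in the claimed bound. For the first summand, I would use that $v = G\circ R_u\circ S_u(u)$ together with Assumption \ref{Assmps:Lipschitz_G}: since $G$ is $L_G$-Lipschitz from $\Uc$ to $\Vc$,
\[
\|G(u) - v\|_{L^2(D_\Vc)} = \|G(u) - G(R_u\circ S_u(u))\|_{L^2(D_\Vc)} \le L_G\,\|u - R_u\circ S_u(u)\|_{L^2(D_\Uc)},
\]
which is the first term in the claimed bound. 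Combining the two displays completes the argument.

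There is no genuine obstacle here; the only care needed is bookkeeping of which sampling and recovery maps act on which space — in particular checking that the intermediate object $v$ indeed lies in $\Vc$, so that both the $L^2(D_\Vc)$-norm and the Lipschitz estimate for $G$ are legitimately applicable, and that $u, R_u\circ S_u(u)\in\Uc$ so the Lipschitz bound is measured in $\|\cdot\|_{L^2(D_\Uc)}$ as required by Assumption \ref{Assmps:Lipschitz_G}. The same template — insert an intermediate function, then invoke Lipschitz continuity or boundedness of the maps involved — will carry over to the estimation-error term $\|R_v\circ f\circ S_u(u) - R_v\circ\hat f\circ S_u(u)\|_{L^2(D_\Vc)}$, where boundedness of the linear recovery operator $R_v$ and the componentwise random-feature error bounds of Section~\ref{Sec:RF} (via Theorem~\ref{main_RF}) should enter.
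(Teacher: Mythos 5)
Your proposal is correct and follows essentially the same route as the paper's proof: unwind $f = S_v\circ G\circ R_u$, insert the intermediate function $v = G\circ R_u\circ S_u(u)$, apply the triangle inequality, and bound the first term via the Lipschitz property of $G$. No issues.
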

\begin{proof}
Recall the definition of map $f=S_\Vc\circ G\circ R_\Uc$ and apply triangle inequality to get
\begin{equation*}
\begin{aligned}
& \|G(u) - R_\Vc\circ f\circ S_\Uc(u)\|_{L_2(D_\Vc)} \\
\leq & \|G(u) - G\circ R_\Uc\circ S_\Uc(u)\|_{L_2(D_\Vc)} + \|G\circ R_\Uc\circ S_\Uc(u) - R_\Vc\circ S_\Vc\circ G\circ R_\Uc \circ S_\Uc(u)\|_{L_2(D_\Vc)} \\
\leq & L_G \| u - R_\Uc\circ S_\Uc(u)\|_{L_2(D)} + \| v - R_\Vc \circ S_\Vc(v)\|_{L^2(D_\Vc)}.
\end{aligned}
\end{equation*}
The last inequality holds since we assume $G:\Uc\to \Vc$ is Lipschitz continuous with Lipschitz constant $L_G$ and we denote $v=G\circ R_\Uc\circ S_\Uc(u)$.
\end{proof}

To bound the estimation error, we need to introduce the following vector-valued random feature map $\Bar{f}:\R^n\to\R^m$ where each $\Bar{f}_j:\R^n\to\R$ is a random feature approximation taking the form
\begin{equation}
\label{operator_temp}
\Bar{f}_j(\ub) = \sum_{k=1}^N \Bar{c}_k^{(j)} \exp(i\langle \omegab_k,\ub\rangle),
\end{equation}
where the coefficient vector $\Bar{\cb}^{(j)}\in\R^N$ for all $j\in[m]$ is trained by solving the min-norm interpolation problem
\begin{equation*}
\min_{\cb^{(j)}\in\R^N} \|\cb^{(j)}\|_2 \quad \mbox{ s.t. } \Bar{f}_j(\ub_\ell) = f_j(\ub_\ell) \quad \mbox{ for all } \ell\in[M].
\end{equation*}
Notice that we can never compute $\Bar{f}$ in practice since $f:\R^n\to\R^m$ is not known to us. 
\begin{assumption}
\label{Assmps:Lipschitz_f_bar}
We assume that each component $\Bar{f}_j:\R^n\to\R$ is Lipschitz continuous in the sense that there exists $\Bar{L}_j>0$ such that $|\Bar{f}_j(\ub_1) - \Bar{f}_j(\ub_2)| \leq \Bar{L}_j\|\ub_1 - \ub_2\|_2$ for any $\ub_1,\ub_2\in\R^n$.
\end{assumption}
Recall that $R_v:\R^m\to\Vc$ is a random feature recovery map and denote $f\circ S_\Uc(u)$, $\Hat{f}\circ S_\Uc(u)$, and $\Bar{f}\circ S_\Uc(u)$ by vectors $\zb$, $\Hat{\zb}$, and $\Bar{\zb}\in\R^m$, respectively. Then, we can write
\begin{equation*}
\begin{aligned}
R_\Vc (\zb) = \sum_{k=1}^N c_k \exp(i\langle \omegab_k,\yb\rangle), \hspace{0.05in} R_\Vc(\Hat{\zb}) = \sum_{k=1}^N \Hat{c_k} \exp(i\langle \omegab_k,\yb\rangle), \hspace{0.05in} R_\Vc(\Bar{\zb}) = \sum_{k=1}^N \Bar{c_k} \exp(i\langle \omegab_k,\yb\rangle),
\end{aligned}
\end{equation*}
where $\cb = \Ab^\dagger_y \zb$, $\Hat{\cb} = \Ab^\dagger_y\Hat{\zb}$, $\Bar{\cb} = \Ab^\dagger_y \Bar{\zb}\in\R^N$, and the matrix $\Ab_y$ is defined (component-wise) by $(\Ab_y)_{j,k} = \exp(i\langle \omegab_k, \yb_j\rangle)$ with $\Ab^\dagger_y=\Ab_y^*(\Ab_y\Ab_y^*)^{-1}$ being its pseudo-inverse. 

Let $\Uc$ and $\Vc$ are set of functions defined on compact domain $D_\Uc$ and $D_\Vc$, respectively. Distinct collocation points are denoted by $X=\{\xb_j\}_{j\in[n]}\subset D_\Uc$ and $Y=\{\yb_j\}_{j\in[m]}\subset D_\Vc$. The finite training data $\{ (u_j,v_j) \}_{j\in[M]} \subset \Uc \times \Vc$ satisfies $G(u_j) = v_j$ for all $j\in[M]$ and is accessible via sampling operators $S_\Uc$ and $S_\Vc$, i.e., denote the point evaluations of function $u_j$ and $v_j$ by
\begin{equation*}
\ub_j = S_\Uc(u_j) = [u_j(\xb_1), \dots, u_j(\xb_n)]^\top \in\R^n, \quad \vb_j = S_\Vc(v_j) = [v_j(\yb_1), \dots, v_j(\yb_m)]^\top \in\R^m.
\end{equation*}
Let $U\subset \R^n$ be a compact set with Lipschitz boundary and $\{ \ub_j\}_{j\in[M]}$ be the set of all point evaluation vectors with fill-in distance 
\begin{equation*}
h_\ub := \max_{\ub'\in U}\min_{1\leq j\leq M} \|\ub_j - \ub'\|_2.
\end{equation*}

\begin{lemma}[{\bf Bound on Estimation Error}]
\label{operator_lemma2}
Suppose that target operator $G$ satisfies Assumption \ref{Assmps:Lipschitz_G}. Let $f$, $\hat{f}$, and $\Bar{f}$ be defined by \eqref{operator_f}, \eqref{operator_rf}, and  \eqref{operator_temp}, respectively. Suppose that $f$ and $\Bar{f}$ satisfy Assumption \ref{Assmps:f} and \ref{Assmps:Lipschitz_f_bar}, respectively. For some parameter $\eta\in(0,1)$, then we have
\begin{equation*}
\begin{aligned}
& \|R_\Vc \circ f \circ S_\Uc(u) - R_\Vc \circ \Hat{f} \circ S_\Uc(u) \|_{L_2(D_\Vc)} \\
\leq & \sqrt{\frac{\vol(D_\Vc)}{1-\eta}} \left( 2\sqrt{m}h_\ub\sup_{j\in[m]}(L_j+\Bar{L}_j) + \frac{\sqrt{2}\|S_\Vc\| L_G\sqrt{M}}{\sqrt{1-\eta}} \sup_{\ell\in[M]}\| u_\ell - R_\Uc\circ S_\Uc(u_\ell)\|_{L_2(D_\Uc)} \right)
\end{aligned}
\end{equation*}
with probability at least $1-\delta$. 
\end{lemma}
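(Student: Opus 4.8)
The plan is to insert the auxiliary random feature interpolant $\bar f$ defined in \eqref{operator_temp}, split the estimation error by the triangle inequality using that $R_v:\R^m\to\Vc$ is a \emph{linear} map, and control each piece with the pseudo-inverse bound coming from Theorem \ref{Thm_concentration}. Write $\zb = f\circ S_u(u)$, $\bar\zb = \bar f\circ S_u(u)$, $\hat\zb = \hat f\circ S_u(u)$ in $\R^m$; linearity of $R_v$ gives $R_v(\zb)-R_v(\hat\zb) = R_v(\zb-\bar\zb) + R_v(\bar\zb-\hat\zb)$. The first observation I would record is a general bound on $\|R_v(\wb)\|_{L^2(D_\Vc)}$: since $R_v(\wb)(\yb) = \sum_{k}(\Ab_y^\dagger\wb)_k e^{i\langle\omegab_k,\yb\rangle}$ and each exponential has modulus one, Cauchy--Schwarz yields $|R_v(\wb)(\yb)| \le \sqrt{N}\,\|\Ab_y^\dagger\wb\|_2$ uniformly in $\yb$, hence $\|R_v(\wb)\|_{L^2(D_\Vc)} \le \sqrt{N\,\vol(D_\Vc)}\,\|\Ab_y^\dagger\|_2\,\|\wb\|_2$. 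Applying Theorem \ref{Thm_concentration} to $\Ab_y$ (i.e. at the collocation points $Y$), on its high-probability event $\|\Ab_y^\dagger\|_2^2 = 1/\lambda_{\min}(\Ab_y\Ab_y^*) \le 1/(N(1-2\eta))$, so $\|R_v(\wb)\|_{L^2(D_\Vc)} \le \sqrt{\vol(D_\Vc)/(1-2\eta)}\,\|\wb\|_2$. This is the step that lets the outer recovery map be handled without introducing a fill-in distance on $D_\Vc$.

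Next I would bound the two vector differences in $\R^m$. For $\zb-\bar\zb$: by construction $\bar f_j$ interpolates $f_j$ at every training input $\ub_\ell$, so for $\ub = S_u(u)\in U$, choosing a nearest training point $\ub_{\ell^\star}$ with $\|\ub-\ub_{\ell^\star}\|_2\le h_\ub$ and using Assumptions \ref{Assmps:f} and \ref{Assmps:Lipschitz_f_bar},
$$|f_j(\ub)-\bar f_j(\ub)| \le |f_j(\ub)-f_j(\ub_{\ell^\star})| + |\bar f_j(\ub_{\ell^\star})-\bar f_j(\ub)| \le (L_j+\bar L_j)h_\ub,$$
whence $\|\zb-\bar\zb\|_2 \le \sqrt{m}\,h_\ub\sup_{j\in[m]}(L_j+\bar L_j)$ (an extra factor appearing if an $(a+b)^2\le 2a^2+2b^2$ step is used). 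For $\bar\zb-\hat\zb$: both $\bar\cb^{(j)}$ and $\hat\cb^{(j)}$ are min-norm coefficient vectors for the same design matrix $\Ab$ with entries $e^{i\langle\omegab_k,\ub_\ell\rangle}$, but of data $(f_j(\ub_\ell))_\ell$ and $(v_\ell(\yb_j))_\ell=(G(u_\ell)(\yb_j))_\ell$ respectively, so $\bar\cb^{(j)}-\hat\cb^{(j)} = \Ab^\dagger(f_j(\ub_\ell)-v_\ell(\yb_j))_\ell$; the same Cauchy--Schwarz $+$ Theorem \ref{Thm_concentration} argument (now for $\Ab$, at the points $\{\ub_\ell\}$) gives $|\bar f_j(\ub)-\hat f_j(\ub)| \le (1-2\eta)^{-1/2}\,\|(f_j(\ub_\ell)-v_\ell(\yb_j))_\ell\|_2$. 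Summing over $j$ and recognizing that for each $\ell$, $\|(f_j(\ub_\ell)-v_\ell(\yb_j))_{j\in[m]}\|_2 = \|f(\ub_\ell)-S_v(v_\ell)\|_2 = \|S_v\big(G(R_u\circ S_u(u_\ell))-G(u_\ell)\big)\|_2 \le \|S_v\|\,L_G\,\|u_\ell - R_u\circ S_u(u_\ell)\|_{L^2(D_\Uc)}$ (using $f=S_v\circ G\circ R_u$, boundedness of $S_v$, and Assumption \ref{Assmps:Lipschitz_G}), I obtain $\|\bar\zb-\hat\zb\|_2 \le (1-2\eta)^{-1/2}\sqrt{M}\,\|S_v\|L_G\,\sup_{\ell\in[M]}\|u_\ell - R_u\circ S_u(u_\ell)\|_{L^2(D_\Uc)}$.

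Combining the general $R_v$ estimate with these two bounds, intersecting the high-probability events of Theorem \ref{Thm_concentration} for $\Ab_y$ and $\Ab$, and relabeling the concentration parameter ($2\eta\to\eta$) yields the claimed inequality. The main obstacle here is conceptual rather than computational: one must see that $\bar f$ is the right intermediate object, because comparing it with $f$ involves only the fill-in/Lipschitz approximation on $U$ and no training data, while comparing it with $\hat f$ isolates exactly the input-recovery error $u_\ell - R_u\circ S_u(u_\ell)$ transported through $G$ and $S_v$; and one must dispose of the outer map $R_v$ without a second fill-in distance, which the elementary bound $|R_v(\wb)(\yb)|\le\sqrt N\,\|\Ab_y^\dagger\wb\|_2$ together with the pseudo-inverse control of Theorem \ref{Thm_concentration} accomplishes. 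A secondary point to check is that $\bar f$ is well defined, i.e. that $\Ab$ has full row rank, which again follows from Theorem \ref{Thm_concentration} under the stated conditions on $N$, $\gamma$, and the separation of the $\ub_\ell$.
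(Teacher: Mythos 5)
Your proposal is correct and follows essentially the same route as the paper's proof: the same intermediate interpolant $\bar f$, the same fill-in-distance/Lipschitz bound for $\zb-\bar\zb$, the same pseudo-inverse and Cauchy--Schwarz control of $R_v$ and of $\bar\zb-\hat\zb$, and the same transport of the input-recovery error through $S_v\circ G$; splitting with $\bar f$ before or after applying the $R_v$ estimate is immaterial since that estimate is linear in the $\ell_2$ norm of its argument. Your side remarks (the $2\eta$ versus $\eta$ relabeling, the need to apply the concentration theorem separately to $\Ab_y$ and to $\Ab$ at the points $\{\ub_\ell\}$, and the well-definedness of $\bar f$) are accurate and, if anything, slightly more careful than the paper's own write-up.
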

\begin{proof}
Following the notations of random feature functions, we can bound the error by
\begin{equation}
\label{boundI2}
\begin{aligned}
& \left\|R_\Vc\circ f\circ S_\Uc(u) - R_\Vc\circ\Hat{f}\circ S_\Uc(u)\right\|^2_{L_2(D_\Vc)} \leq \int_{D_\Vc} \left|\sum_{k=1}^N \left(c_k - \Hat{c_k}\right) \exp(i\langle \omegab_k,\yb\rangle) \right|^2 d\yb \\
\leq & \int_{D_\Vc} N \|\cb-\Hat{\cb}\|^2_2 d\yb 
\leq N\vol(D_\Vc)\|\Ab_y^\dagger\|_2^2 \|\zb-\Hat{\zb}\|_2^2 
\leq \frac{\vol(D_\Vc)}{1-\eta}\|\zb-\Hat{\zb}\|_2^2,
\end{aligned}
\end{equation}
where we use the bound on the (induced) 2-norm of the pseudo-inverse $\Ab_y^\dagger$.
It remains to bound $\|\zb-\Hat{\zb}\|_2$. By definitions, we write its square as
\begin{equation*}
\|\zb-\Hat{\zb}\|_2^2 = \sum_{j=1}^m |f_j(\ub) - \Hat{f}_j(\ub)|_2^2 \leq 2\sum_{j=1}^m |f_j(\ub) - \Bar{f}_j(\ub)|^2 + 2\sum_{j=1}^m |\Bar{f}_j(\ub) - \Hat{f}_j(\ub)|^2.
\end{equation*}
Since $f_j$ and $\Bar{f}_j$ are Lipschitz continuous with Lipschitz constant $L_j, \Bar{L}_j$ for each $j\in[m]$, and $\Bar{f}_j(\ub_\ell) = f_j(\ub_\ell)$ for each training samples $\ub_\ell$. Then we can give the following bound
\begin{equation*}
\begin{aligned}
& |f_j(\ub) - \Bar{f}_j(\ub)| \leq |f_j(\ub) - f_j(\ub_\ell)| + |f_j(\ub_\ell) - \Hat{f}_j(\ub_\ell)| + |\Hat{f}_j(\ub_\ell) - \Hat{f}_j(\ub)|   \\
& \leq L_j \|\ub-\ub_\ell\|_2 + \Bar{L}_j\|\ub-\ub_\ell\|_2 \\
& \leq  (L_j + \Bar{L}_j) h_\ub,
\end{aligned}
\end{equation*}
where we assume that $\ub_\ell$ is one training samples which is closest to $\ub$, and hence the Euclidean distance $\|\ub-\ub_\ell\|_2$ is less than the fill-in distance $h_\ub$, i.e $\|\ub-\ub_\ell\|_2 \leq h_\ub$.
Recall the constructions of $\Bar{f}$ and $\Hat{f}$, then we have
\begin{equation*}
\begin{aligned}
& \sum_{j=1}^m \left|\Bar{f}_j(\ub) - \Hat{f}_j(\ub)\right|^2 \leq \sum_{j=1}^m \left( \sum_{k=1}^N \left|\Bar{\cb}_k^{(j)} - \Hat{\cb}_k^{(j)}\right|\right)^2 \leq \sum_{j=1}^m N\|\Bar{\cb}^{(j)} - \Hat{\cb}^{(j)}\|_2^2 \\
\leq & N\|\Ab^\dagger\|_2^2 \sum_{j=1}^m \| \Bar{\vb}^{(j)} - \Hat{\vb}^{(j)}\|^2_2 \\
\leq & \frac{1}{1-\eta} \sum_{\ell=1}^M \|S_\Vc\circ G \circ R_\Uc \circ S_\Uc(u_\ell) -   S_\Vc\circ G \circ u_\ell\|^2_2 \\
\leq & \frac{1}{1-\eta} \|S_\Vc\|^2 L_G^2 M \sup_{\ell\in[M]}\| u_\ell - R_\Uc\circ S_\Uc(u_\ell)\|^2_{L_2(D_\Uc)}.
\end{aligned}
\end{equation*}
Therefore, we can bound $\|\zb-\Hat{\zb}\|_2^2$ by
\begin{equation}
\label{bound_z}
\| \zb - \Hat{\zb} \|_2^2 \leq 4mh_\ub^2\sup_{j\in[m]}(L_j^2+\Bar{L}_j^2) + \frac{2\|S_\Vc\|^2 L^2_G M}{1-\eta} \sup_{\ell\in[M]}\| u_\ell - R_\Uc\circ S_\Uc(u_\ell)\|^2_{L_2(D_\Uc)}.
\end{equation}
Substituting \eqref{bound_z} into \eqref{boundI2}, and then taking the squared root lead to the desired error bound.
\end{proof}

\begin{theorem}[{\bf Generalization Error Bound for Operator Learning}]
\label{OL_error}
Suppose that target operator $G$ satisfies Assumption \ref{Assmps:Lipschitz_G}. Let $f$, $\hat{f}$, and $\Bar{f}$ be defined by \eqref{operator_f}, \eqref{operator_rf}, and  \eqref{operator_temp}, respectively. Suppose that $f$ and $\Bar{f}$ satisfy Assumption \ref{Assmps:f} and \ref{Assmps:Lipschitz_f_bar}, respectively.
Then, for any $u\in B_1(\Fc(\rho))$, it holds that
\begin{equation*}
\begin{aligned}
& \|G(u) - R_\Vc\circ\Hat{f}\circ S_\Uc(u)\|_{L_2(D_\Vc)} \leq L_G \| u - R_\Uc\circ S_\Uc(u)\|_{L_2(D)} + \| v - R_\Vc \circ S_\Vc(v)\|_{L^2(D_\Vc)} + \\ 
& \sqrt{\frac{\vol(D_\Vc)}{1-\eta}} \left( 2\sqrt{m}h_\ub\sup_{j\in[m]}(L_j+\Bar{L}_j) + \frac{\sqrt{2}\|S_\Vc\| L_G\sqrt{M}}{\sqrt{1-\eta}} \sup_{k\in[M]}\| u_k - R_\Uc\circ S_\Uc(u_k)\|_{L_2(D_\Uc)} \right)
\end{aligned}
\end{equation*}
with probability at least $1-\delta$.
Furthermore, let training inputs satisfy $u_j\in B_1(\Fc(\rho))$ for all $j\in[M]$.
Suppose the collocation points $X=\{\xb_j\}_{j\in[n]}$ and $Y=\{\yb_j\}_{j\in[m]}$, random features $\{\omegab_k\}_{k\in[N]}$, and random feature matrix $\Ab\in\C^{m\times N}$ satisfy Assumption \ref{Assmps:data}. In addition, the conditions in Theorem \ref{main_RF} hold.
Then, for any $u\in B_1(\Fc(\rho))$, it holds that
\begin{equation*}
\|G(u) - R_v\circ\Hat{f}\circ S_u(u)\|_{L_2(D_\Vc)} \leq \mathcal{O}\left(\frac{1}{\sqrt{N}} + h_X + h_X^{d/2} + h_Y + h_Y^{d/2} + h_\ub \right)
\end{equation*}
with probability at least $1-6\delta$.
\end{theorem}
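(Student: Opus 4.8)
The plan is to establish the two displayed bounds in sequence, reusing the two lemmas that immediately precede the theorem together with the function-learning estimate of Theorem \ref{main_RF}. The first inequality follows by concatenation: starting from the triangle-inequality split
\[
\|G(u) - R_v\circ\Hat{f}\circ S_u(u)\|_{L_2(D_\Vc)} \leq \|G(u) - R_v\circ f\circ S_u(u)\|_{L_2(D_\Vc)} + \|R_v\circ f\circ S_u(u) - R_v\circ\Hat{f}\circ S_u(u)\|_{L_2(D_\Vc)}
\]
recorded just before Lemma \ref{operator_lemma1}, I would bound the first summand by Lemma \ref{operator_lemma1} (the approximation-error bound, with $v = G\circ R_u\circ S_u(u)$) and the second by Lemma \ref{operator_lemma2} (the estimation-error bound), and add the resulting right-hand sides. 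The probability $1-\delta$ is inherited from Lemma \ref{operator_lemma2}, since Lemma \ref{operator_lemma1} is deterministic; the restriction $u\in B_1(\Fc(\rho))$ is not needed at this stage.

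For the asymptotic rate I would bound each of the three remaining terms in the first inequality via Theorem \ref{main_RF}, treating the sample sizes $m,M$, the Lipschitz constants $L_G,L_j,\Bar{L}_j$, the operator norm $\|S_v\|$, the volumes $\vol(D_\Uc),\vol(D_\Vc)$, and the fixed parameter $\eta$ as $\mathcal{O}(1)$ quantities. Applying Theorem \ref{main_RF} on $D_\Uc$ with target $u\in B_1(\Fc(\rho))$, collocation set $X$, and fill-in distance $h_X$ gives $\|u - R_u\circ S_u(u)\|^2_{L^2(D_\Uc)} = \mathcal{O}(1/N + h_X^2 + h_X^d)$, hence $\mathcal{O}(1/\sqrt N + h_X + h_X^{d/2})$ after taking square roots; the same argument applied to each $u_k\in B_1(\Fc(\rho))$ with a union bound over $k\in[M]$ controls $\sup_{k\in[M]}\|u_k - R_u\circ S_u(u_k)\|_{L_2(D_\Uc)}$ by the same order, the $\sqrt M$ prefactor being absorbed into the constant. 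Applying Theorem \ref{main_RF} on $D_\Vc$ with target $v=G\circ R_u\circ S_u(u)$, collocation set $Y$, and fill-in distance $h_Y$ gives $\|v - R_v\circ S_v(v)\|_{L^2(D_\Vc)} = \mathcal{O}(1/\sqrt N + h_Y + h_Y^{d/2})$, and the last term $2\sqrt{m}\,h_\ub\sup_{j\in[m]}(L_j+\Bar{L}_j)$ is $\mathcal{O}(h_\ub)$. Summing and collecting the $1/\sqrt N$ contributions produces $\mathcal{O}(1/\sqrt N + h_X + h_X^{d/2} + h_Y + h_Y^{d/2} + h_\ub)$. For the confidence, Lemma \ref{operator_lemma2} fails with probability $\leq\delta$ and each invocation of Theorem \ref{main_RF} fails with probability $\leq 5\delta$; a union bound over the $\mathcal{O}(M)$ invocations — rescaling $\delta$ by an $\mathcal{O}(M)$ factor, which costs only logarithmic factors inside $\mathcal{O}(\cdot)$, and using that the concentration events for $\Ab_x$ and $\Ab_y$ are common to all inputs — yields success probability at least $1-6\delta$.

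The step I expect to be the main obstacle is justifying that Theorem \ref{main_RF} genuinely applies to the intermediate function $v = G\circ R_u\circ S_u(u)$: that theorem requires its target to lie in a ball of $\Fc(\rho)$ of controlled radius, whereas Assumption \ref{Assmps:Lipschitz_G} only controls $G$ in the weaker $L^2$ norm, so $\|v\|_\rho$ is not a priori bounded. Closing this gap seems to need an additional structural hypothesis — for instance that $G$ maps $B_1(\Fc(\rho))$ into $B_C(\Fc(\rho))$, or that $\Vc\subseteq B_C(\Fc(\rho))$, with $C$ independent of $N,m,M$ — or else a direct verification that the finite exponential sum $R_u\circ S_u(u)$ has bounded $\rho$-norm and that $G$ maps it to something with bounded $\rho$-norm. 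A secondary point worth recording is that all hidden constants in the final $\mathcal{O}(\cdot)$ depend on $m$, $M$, $L_G$, and $\sup_{j}(L_j+\Bar{L}_j)$ through the $2\sqrt{m}\,h_\ub$ and $\sqrt{M}$ factors, so the estimate is informative only when these remain moderate relative to $N$ and to the fill-in distances.
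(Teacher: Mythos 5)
Your proposal follows essentially the same route as the paper: the same approximation/estimation decomposition bounded by Lemmas \ref{operator_lemma1} and \ref{operator_lemma2}, followed by applying Theorem \ref{main_RF} to $u$, to $v=G\circ R_u\circ S_u(u)$, and to the training inputs $u_k$. The obstacle you flag --- that $\|v\|_\rho$ is not controlled by Assumption \ref{Assmps:Lipschitz_G} alone, so Theorem \ref{main_RF} does not obviously apply to $v$ without an extra hypothesis --- is a genuine issue that the paper's own one-line proof also leaves unaddressed, and your probability accounting is, if anything, more careful than the paper's.
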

\begin{proof}
We first decompose the generalization error into two terms
\begin{equation*}
\begin{aligned}
& \|G(u) - R_\Vc\circ\Hat{f}\circ S_\Uc(u)\|_{L_2(D_\Vc)} \\
\leq& \underbrace{\|G(u) - R_\Vc\circ f\circ S_\Uc(u)\|_{L_2(D_\Vc)}}_{\text{Approximation Error}} + \underbrace{ \|R_\Vc\circ f\circ S_\Uc(u) - R_\Vc\circ\Hat{f}\circ S_\Uc(u)\|_{L_2(D_\Vc)} }_{\text{Estimation Error}}.
\end{aligned}
\end{equation*}
Bounds on approximation error and estimation error are given by Lemma \ref{operator_lemma1} and Lemma \ref{operator_lemma2}, respectively. 
Applying the generalization error bound for random feature model in Theorem \ref{main_RF} to $u$, $G\circ R_\Uc \circ S_\Uc(u)$, and $u_k$ leads to the desired result.
\end{proof}

\section{Numerical Experiments}
\label{Sec:experiment}
Numerical experiments are used to benchmark and compare our proposed random feature operator learning method to kernel-based method \cite{BATLLE2024kernel} and neural operator methods, including DeepONet \cite{DeepONet2021Lu} and FNO \cite{Li2020FourierNO}.
We measure the model performance by using an empirical relative error on a test set $U_{test}$: 
\begin{equation*}
\mathrm{Error}(\Hat{G}) = \frac{1}{|U_{test}|} \sum_{u_k\in U_{test}} \frac{\|G(u_k)-\hat{G}(u_k)\|_{L^2(D_\Vc)}}{\|G(u_k)\|_{L^2(D_\Vc)}}
\end{equation*}
%To measure the model performance, we adopt the relative  relative test error for each estimator $\hat{G}: \Uc\to\Vc$ to be
%\begin{equation*}
%    \mathrm{Error}(\hat{G}) = \sqrt{ \frac{ \sum_{k\in\mathbf{Test}} \|G(u_k)-\hat{G}(u_k)\|^2_{L^2(D_\Vc)}}{\sum_{k\in\mathbf{Test}} \|G(u_k)\|^2_{L^2(D_\Vc)} } },
%\end{equation*}
where $G:\Uc\to\Vc$ is the true operator. For any $v\in\Vc$, we take $\|v\|_{L^2(D_\Vc)} = \sqrt{ \int_{D_\Vc} |v(x)|^2 dx}$, which in turn is estimated by the function values at collocation points. 
We outline the setup of each problem in the following subsections. In all problems, $\Uc$ and $\Vc$ are spaces of real-valued functions defined on domains $D_\Uc, D_\Vc\subset\R^d$ for $d=1,2$. 
All the experiments are conducted using Python and our source codes are available online\footnote{\url{https://github.com/liaochunyang/RandomFeatureOperatorLearning}}.

For the kernel method, we consider RBF kernel and Mat{\'e}rn kernel. Specifically, the RBF kernel function with parameter $\gamma>0$ is $K(\xb,\xb') = \exp(-\gamma\|\xb- \xb'\|_2^2)$. The Mat{\' e}rn kernel with scaling parameter $\sigma>0$ and smoothness parameter $\nu>0$ is defined by 
$$K_{\nu, \sigma}(\xb,\xb') = \frac{2^{1-\nu}}{\Gamma(\nu)} \left( \frac{\sqrt{2\nu}\|\xb-\xb'\|_2}{\sigma} \right)^\nu K_\nu\left( \frac{\sqrt{2\nu}\|\xb-\xb'\|_2}{\sigma} \right),$$
where $\Gamma$ is the gamma function, and $K_\nu$ is the modified Bessel function of the second kind of order $\nu$.
The Gaussian random feature model with parameter $\gamma>0$ means that the random features $\omegab$ are drawn from normal distribution $\Nc(0,2\gamma)$, which approximates RBF kernel with parameter $\gamma$. The Cauchy random feature model with parameter $\gamma>0$ means that the random features $\omegab$ are drawn from tensor-product Cauchy distribution with scaling parameter $\gamma>0$.

\subsection{Advection Equations \Romannum{1}, \Romannum{2}, and \Romannum{3}}
Consider the one-dimensional advection equation:
\begin{equation}
\begin{aligned}
    \frac{\partial v}{\partial t} + \frac{\partial v}{\partial x} &= 0 \quad && x\in(0,1), t\in(0,1] \\
    v(x,0) &= u(x) \quad && x\in(0,1), 
\end{aligned}
\end{equation}
we aim to learn the operator $G$ mapping from the initial condition $u(x)=v(x, 0)$ to the solution at time $t=0.5$, which is denoted by $v(\cdot, 0.5)$.
The domains of function spaces $\Uc$ and $\Vc$ are $D_\Uc = D_\Vc = (0,1)$.
This problem was considered in \cite{LU2022operator, dehoop2022costaccuracy} with different distributions $\mu$ for the initial condition. We consider, referred as Advection \Romannum{1} here, the initial condition which is a square wave centered at $x = c$ of width $b$ and height $h$:
$$ u(x) = h1_{ \{c- 2b ,c + 2b\} }.$$
The parameters $(c, b, h)$ are randomly generated following the uniform distribution on $[0.3, 0.7] \times [0.3, 06] \times [1, 2]$. The spatial grid is of resolution 40, and we use 1000 instances for training and 200 instances for testing. Figure \ref{Fig:Advection1} shows an example of training input, training output, true test example and its Cauchy random feature approximation, and the corresponding pointwise error. 
\begin{figure}[!htbp]
\centering 
\subfigure[Training input]{\includegraphics[width=36mm]{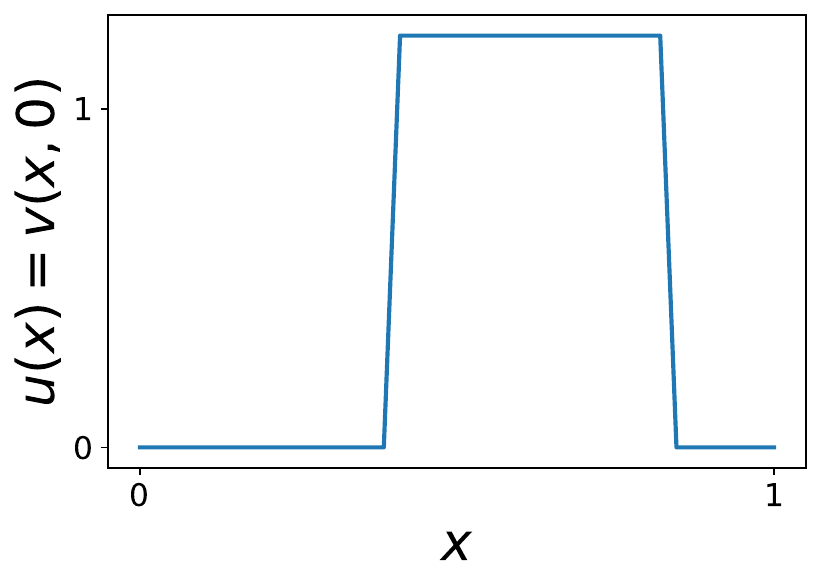}}
\subfigure[Training output]{\includegraphics[width=36mm]{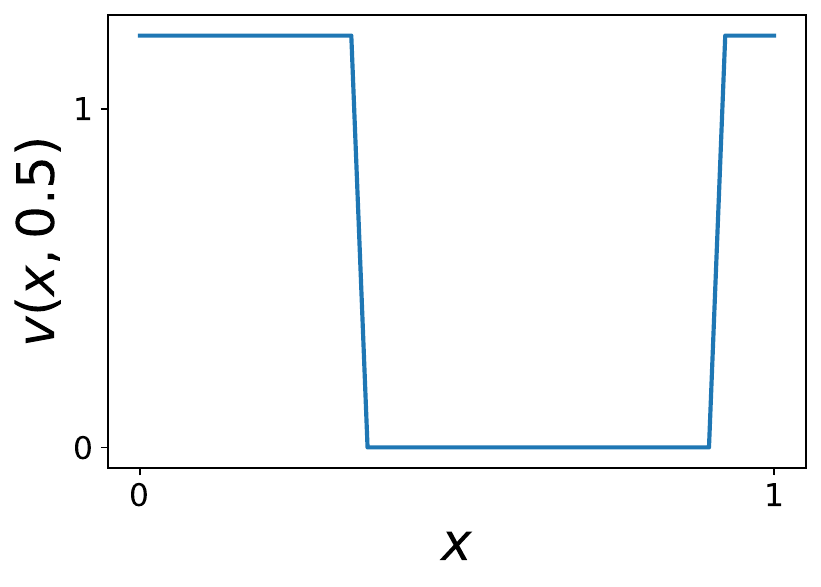}}
\subfigure[True and predicted test]{\includegraphics[width=36mm]{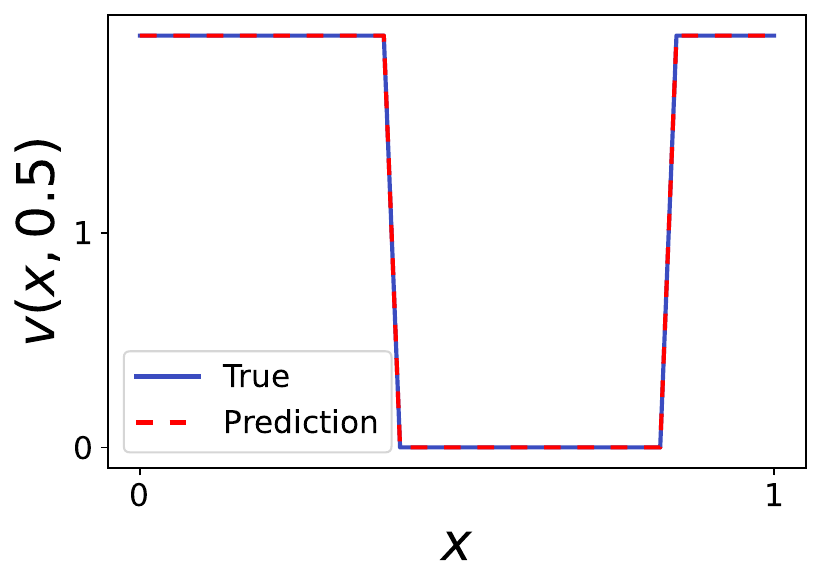}}
\subfigure[Pointwise error]{\includegraphics[width=36mm]{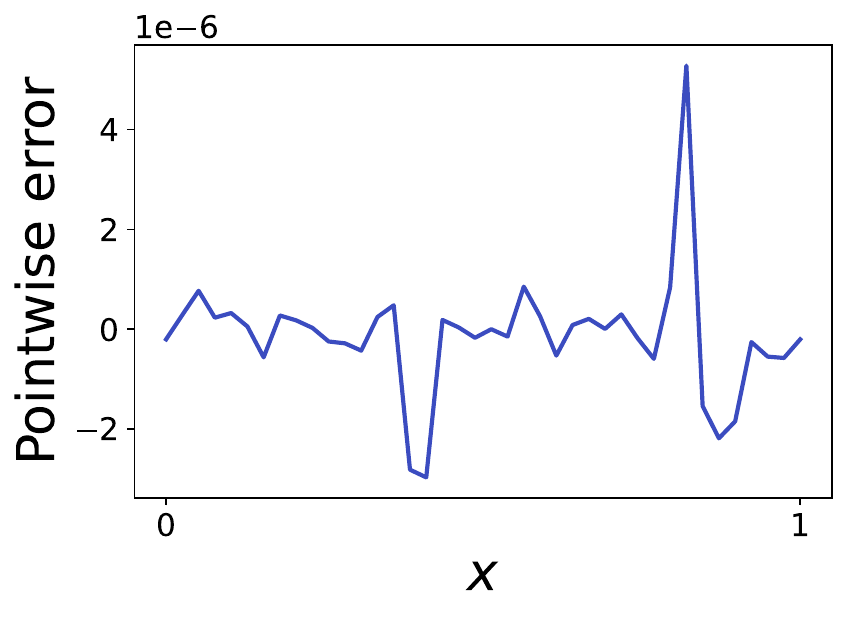}}
\caption{An example of training input, training output, test function and its Cauchy random feature approximation, and pointwise error for the Advection Equation \Romannum{1}.}
\label{Fig:Advection1}
\end{figure}
Advection equation \Romannum{2} takes a more complicated initial condition, i.e. 
$$ u(x) = h_1 1_{ \{c_1 - w, c_1 + w\} } + \sqrt{ \max(h_2^2 - a_2^2(x - c_2)^2, 0)}.$$
The resolution is of 40 and we use 1000 samples to train the random feature models and another 1000 samples to test the performance.
Similarly, we show an example of training input, training output, true test example, its Cauchy random feature approximation, and the corresponding pointwise error in Figure \ref{Fig:Advection2}.

\begin{figure}[!htbp]
\centering     
\subfigure[Training input]{\includegraphics[width=36mm]{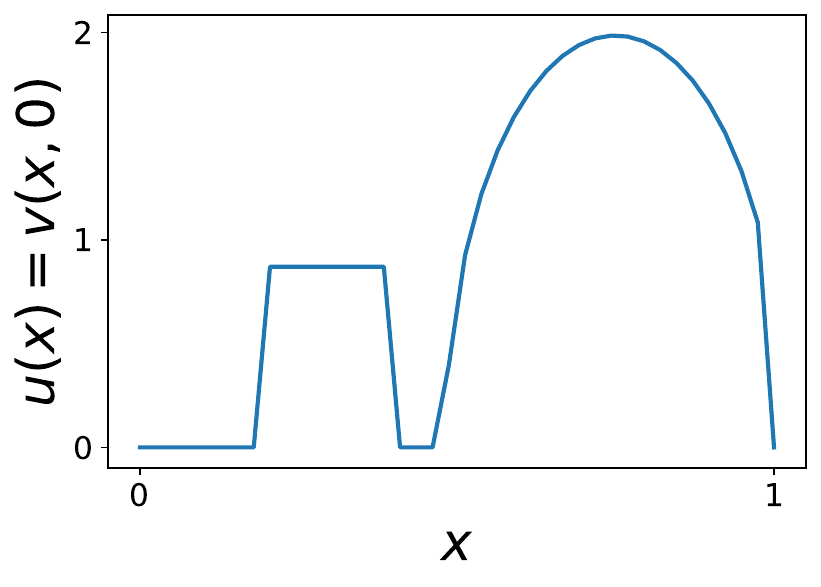}}
\subfigure[Training output]{\includegraphics[width=36mm]{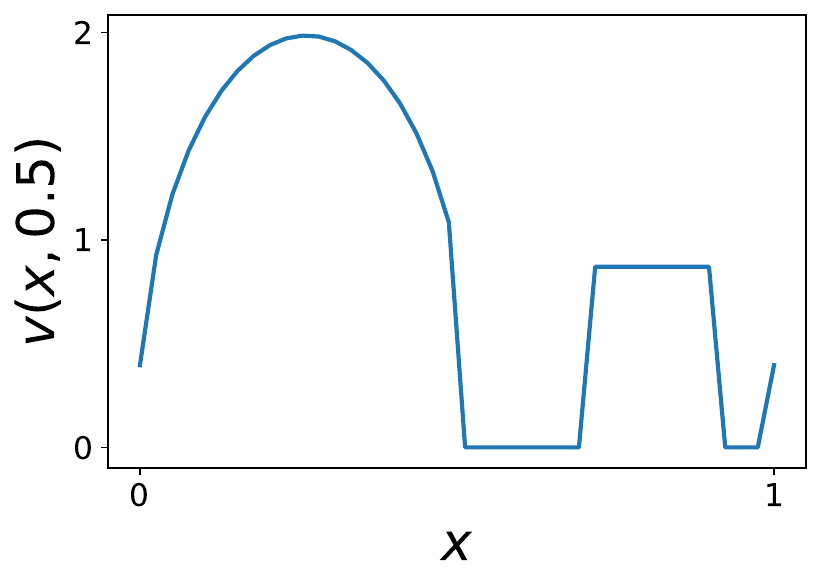}}
\subfigure[True and predicted test]{\includegraphics[width=36mm]{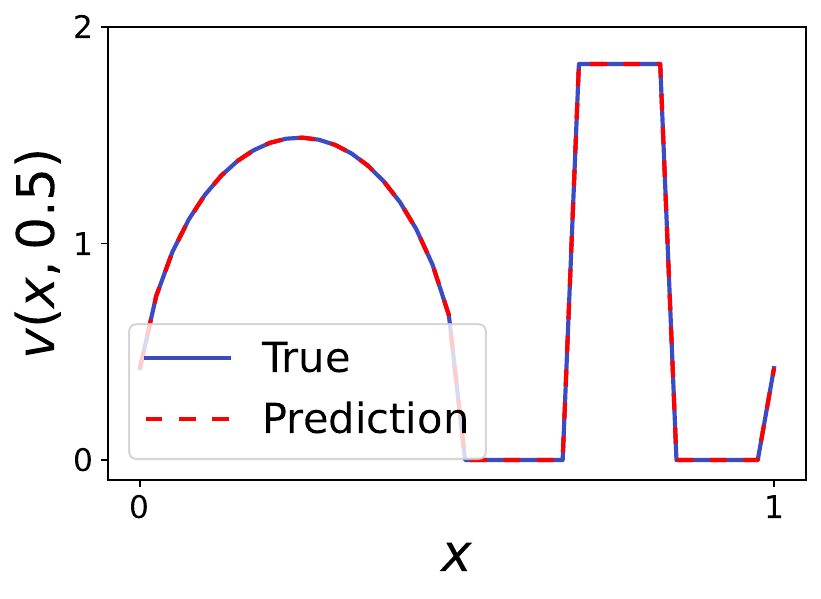}}
\subfigure[Pointwise error]{\includegraphics[width=36mm]{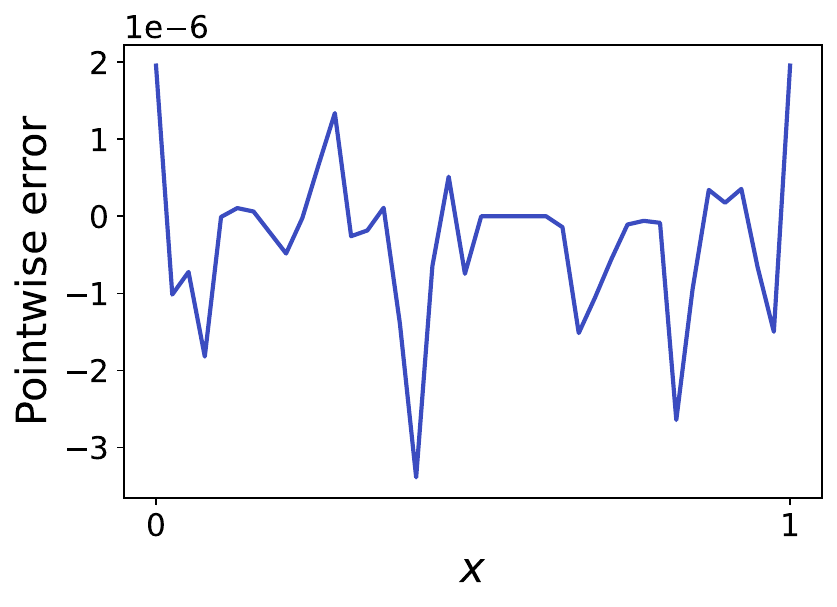}}
\caption{An example of training input, training output, test function and its Cauchy random feature approximation, and pointwise error for the Advection Equation \Romannum{2}.}
\label{Fig:Advection2}
\end{figure}

A different initial condition was considered in \cite{dehoop2022costaccuracy}. We call it Advection equation \Romannum{3} here. Suppose that $\tilde{u}_0$ is generated from a Gaussian process $GP(0,(-\Delta+32\Ib)^{-2})$ where $\Delta$ and $\Ib$ represent the Laplacian and the identity, respectively. 
The initial condition is defined as 
$$ u=-1+21_{ \{\tilde{u}_0 \geq 0\} }. $$
The resolution is of 200 and we use 1000 samples for training and another 1000 instances for testing.
In Figure \ref{Fig:Advection3}, we show an example of training input and output, true test function, its Cauchy random feature approximation, and the corresponding pointwise error.

\begin{figure}[!htbp]
\centering 
\subfigure[Training input]{\includegraphics[width=36mm]{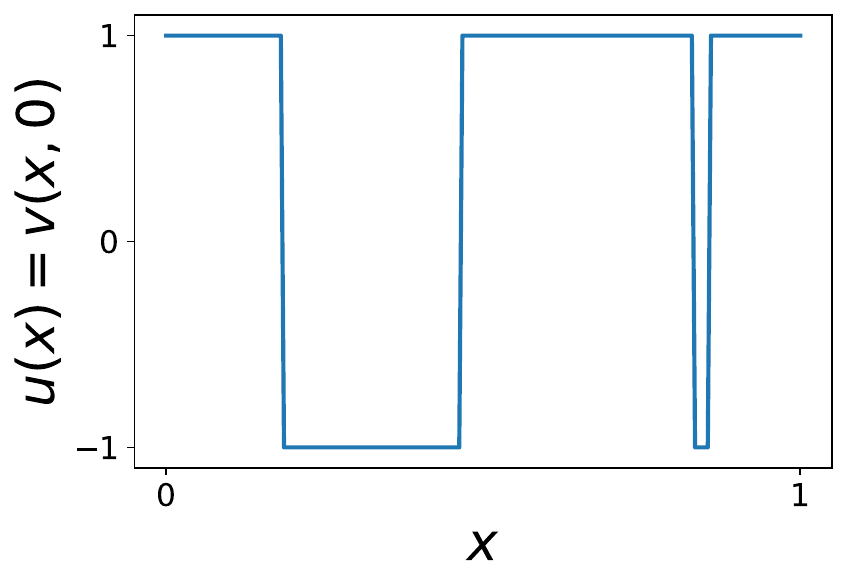}}
\subfigure[Training output]{\includegraphics[width=36mm]{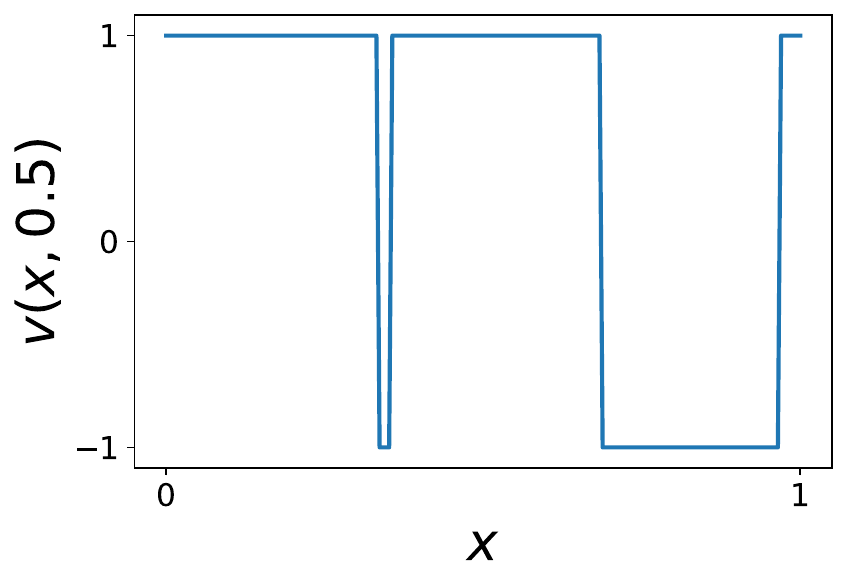}}
\subfigure[True and predicted test]{\includegraphics[width=35mm]{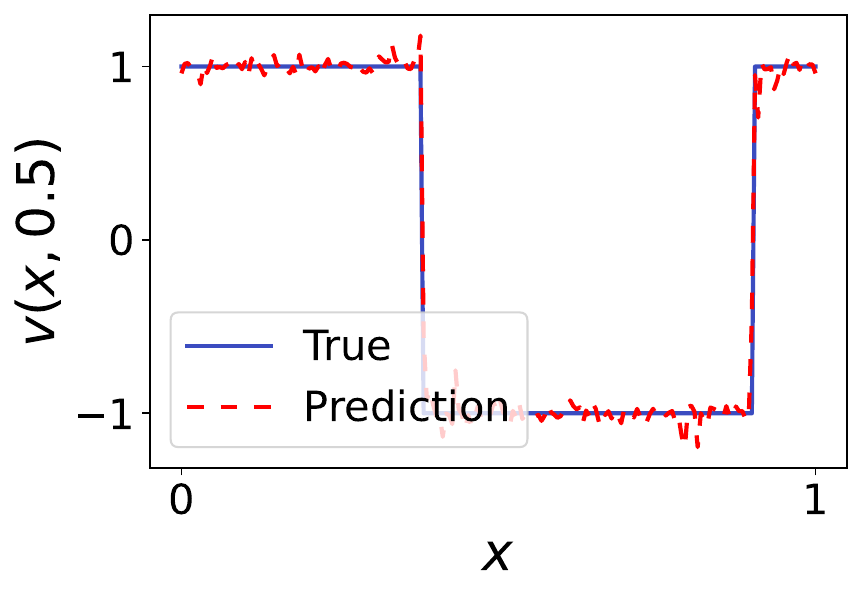}}
\subfigure[Pointwise error]{\includegraphics[width=35mm]{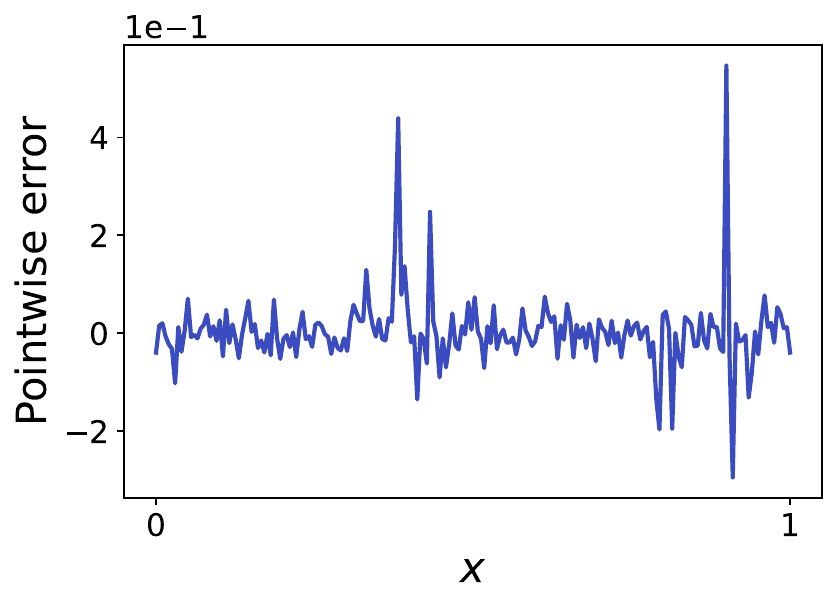}}
\caption{An example of training input, training output, test function, its Cauchy random feature approximation, and pointwise error for the Advection Equation \Romannum{3}.}
\label{Fig:Advection3}
\end{figure}

In Table \ref{Table:Advection}, we compare the relative test errors and computational times of RBF kernel-based model, Gaussian random feature model, and Cauchy random feature model. The scaling parameter $\gamma$ for each model and the number of features $N$ for random feature models are included as well. 
We observe that our proposed random feature method is reliable in terms of accuracy and even outperforms vanilla kernel method. Moreover, the training time is reduced significantly.
As discussed in \cite{BATLLE2024kernel}, since the Advection \Romannum{3} tests have more jumps, the prediction task poses challenges for smooth models such as our model and the kernel models around the discontinuities.

\begin{table}[!htbp]
\centering
\begin{tabular}{|c|c|c|c|c|}
\hline
\multirow{2}{*}{} & & & & relative \\ 
& model & $N$ & $\gamma$ & test error \\ \hline
\multirow{3}{*}{Advection \Romannum{1}}   & RBF Kernel & n/a & 0.5 & $2.08\times 10^{-5}$ \%   \\ \cline{2-5}  
& Random Feature (Gaussian) & 5000 & $10^{-5}$ & $4.70 \times 10^{-6}$ \%  \\ \cline{2-5}
& Random Feature (Cauchy) & 5000 & $10^{-5}$ & $1.26 \times 10^{-6}$ \%  \\ \hline
\multirow{3}{*}{Advection \Romannum{2}}  & RBF Kernel & n/a & 0.5 &  $4.20\times 10^{-5}$ \% \\ \cline{2-5} 
 & Random Feature (Gaussian) & 5000 & $10^{-5}$ & $6.26 \times 10^{-6}$ \% \\ \cline{2-5}
 & Random Feature (Cauchy) & 5000 & $10^{-5}$ & $2.16\times10^{-6}\%$           \\ \hline
 \multirow{3}{*}{Advection \Romannum{3}} & RBF Kernel & n/a & 0.5 & 0.17  \\ \cline{2-5} 
 & Random Feature (Gaussian) & 5000 & 0.01 & 0.22 \\ \cline{2-5}
 & Random Feature (Cauchy) & 3000 & $10^{-6}$ & 0.14  \\ \hline
\end{tabular}
\caption{Summary of numerical results of Advection Equations: we report relative test errors for proposed random feature methods and compare with kernel method proposed in \cite{BATLLE2024kernel}. We also report the number of features $N$ and the scaling parameter $\gamma$ for each experiment.}
\label{Table:Advection}
\end{table}

\subsection{Burgers’ Equation}
Consider the one-dimensional Burgers’ equation: 
\begin{equation}
\begin{aligned}
\frac{\partial w}{\partial t} + w\frac{\partial w}{\partial x} = \nu \frac{\partial^2 w}{\partial x^2},& \quad && (x,t)\in(0,1)\times (0,1] \\
w(x,0) = u(x),& \quad && x \in (0,1).
\end{aligned}   
\end{equation}
We set the viscosity parameter $\nu=0.1$. 
Our goal is to learn the operator $G:u(x)\mapsto w(x,1)$, which maps the initial condition $u(x) = w(x,0)$ to the solution $w(x,t)$ at time $t=1$.
The data are generated by sampling the initial condition $u$ from a Gaussian Process $\mu\sim GP(0, 625(-\Delta + 25I)^{-2}))$, where $\Delta$ and $I$ represent the Laplacian and the identity, respectively. 
As in \cite{LU2022operator, BATLLE2024kernel}, we use a spatial resolution with 128 grid points to represent the input and output functions, and use 1848 instances for training and 200 instances for testing.
Figure \ref{Fig:Burgers} shows an example of training input, training output, true test function and its approximation using Cauchy random feature model along with the pointwise error.  

\begin{figure}[!htbp]
\centering     
\subfigure[Training input]{\includegraphics[width=36mm]{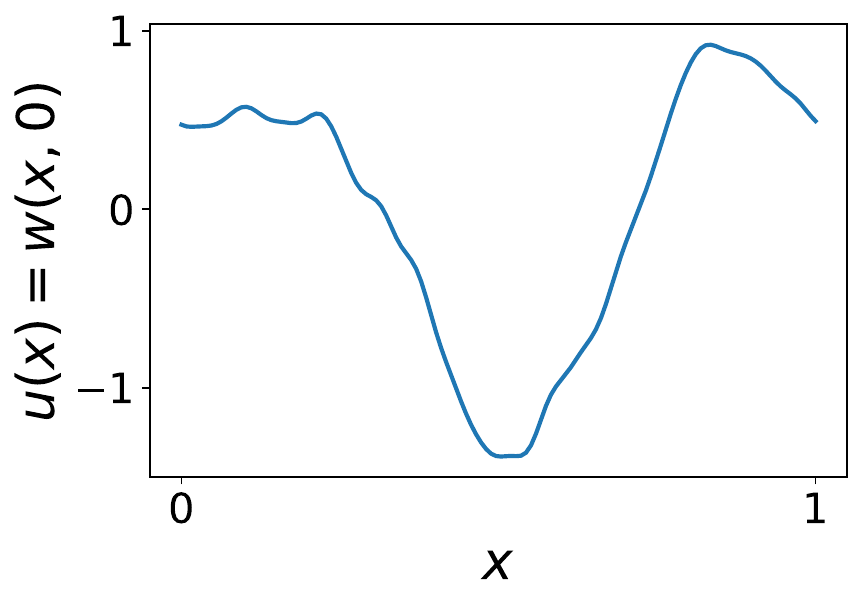}}
\subfigure[Training output]{\includegraphics[width=36mm]{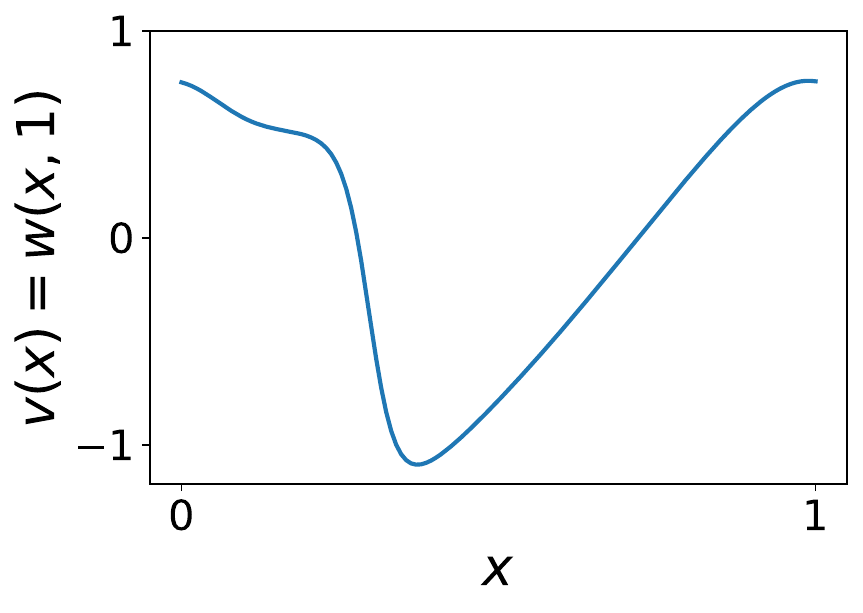}}
\subfigure[True test and its prediction]{\includegraphics[width=36mm]{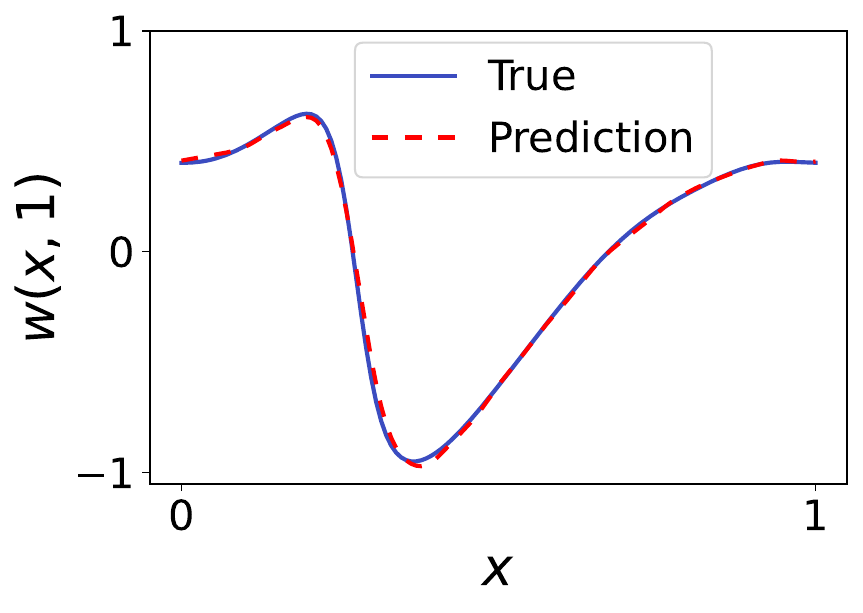}}
\subfigure[Pointwise error]{\includegraphics[width=36mm]{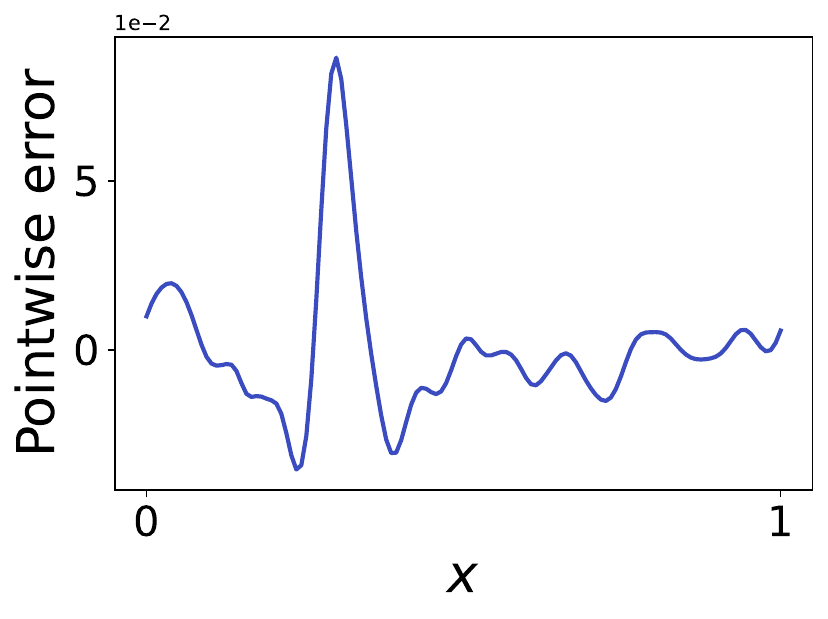}}
\caption{Burgers' equation: an example of (a) training input, (b) training output, (c) test function and its prediction using our random feature method, and (d) pointwise error. We randomly generate $N=10^5$ i.i.d samples from the tensor-product Cauchy distribution with scaling parameter $\gamma=0.01$ as random features.}
\label{Fig:Burgers}
\end{figure}

\subsection{Darcy Problem}
Consider the two-dimensional Darcy flow problem:
\begin{equation}
\begin{aligned}
    -\mathop{\mathrm{div}} e^u \nabla v &= w,  && \mbox{ in }  D, \\
    v &= 0, && \mbox{ on } \partial D
\end{aligned}
\end{equation}
with $D=(0,1)^2$ and zero Dirichlet boundary condition,
we are interested in learning the operator $G:u\mapsto v$ where $u$ is the permeability field and $v$ is the solution. Hence, the domains of function $u$ and $v$ are the same, i.e., $D_\Uc = D_\Vc = (0,1)^2$. 
The source term $w$ is assumed to be fixed. 
Here, we consider the piecewise constant permeability field. Specifically, the coefficient $u$ is sampled from $u \sim \log \circ h (\mu)$, where $\mu$ is sampled from Gaussian Process $\mu \sim GP(0, (-\Delta+9I)^{-2})$ and $h$ is binary function mapping positive inputs to 12 and negative inputs to 3.
Therefore, the permeability field $e^u$ is piecewise constant, see an example of permeability field (training input) and the corresponding solution (training output) in Figure \ref{Fig:Darcy}. 
The grid of resolution is $29\times29$ and we use 1000 samples for training, and 200 samples for testing. 
For our random feature model, we randomly generate $N=10^4$ random features from the tensor-product Cauchy distribution with scaling parameter $\gamma=2\times10^{-4}$.
Figure \ref{Fig:Darcy} shows one example of true test solution $v$, its Cauchy random feature approximation, and the pointwise error. 

\begin{figure}[!htbp]
\centering
\subfigure[Training input]{\includegraphics[width=28mm]{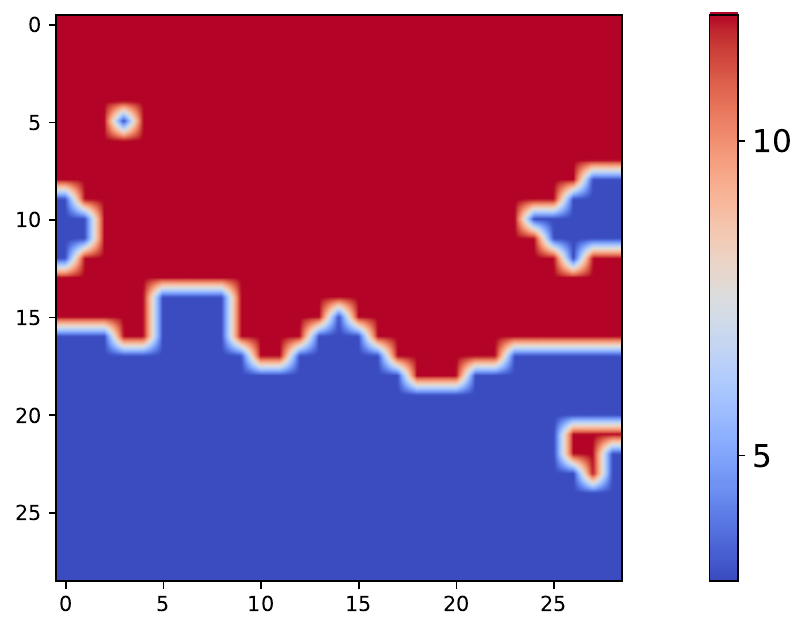}}
\subfigure[Training output]{\includegraphics[width=28mm]{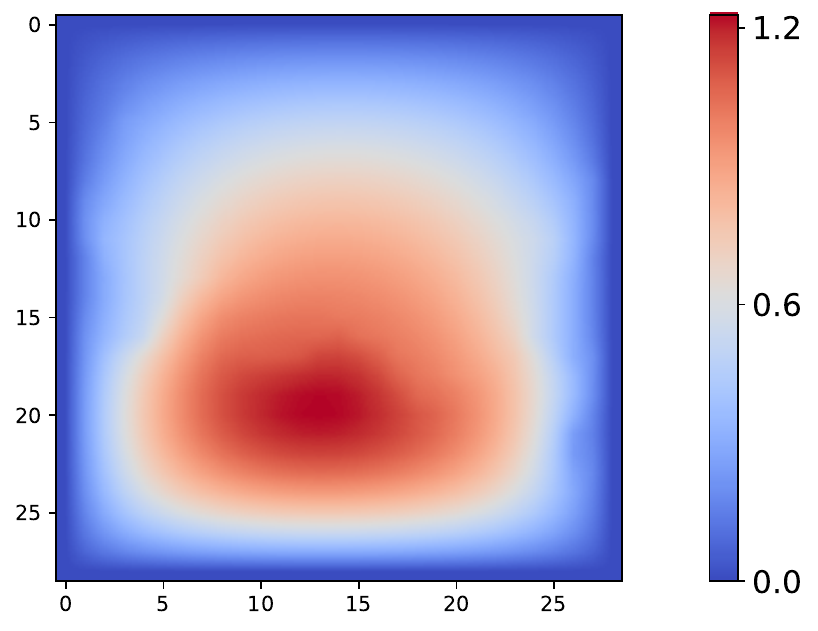}}
\subfigure[True test]{\includegraphics[width=28mm]{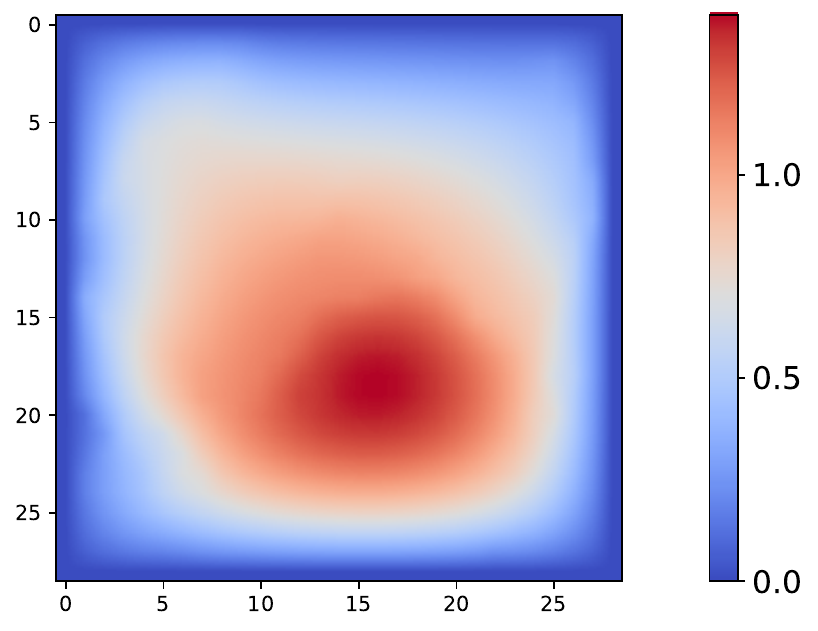}}
\subfigure[Predicted test]{\includegraphics[width=28mm]{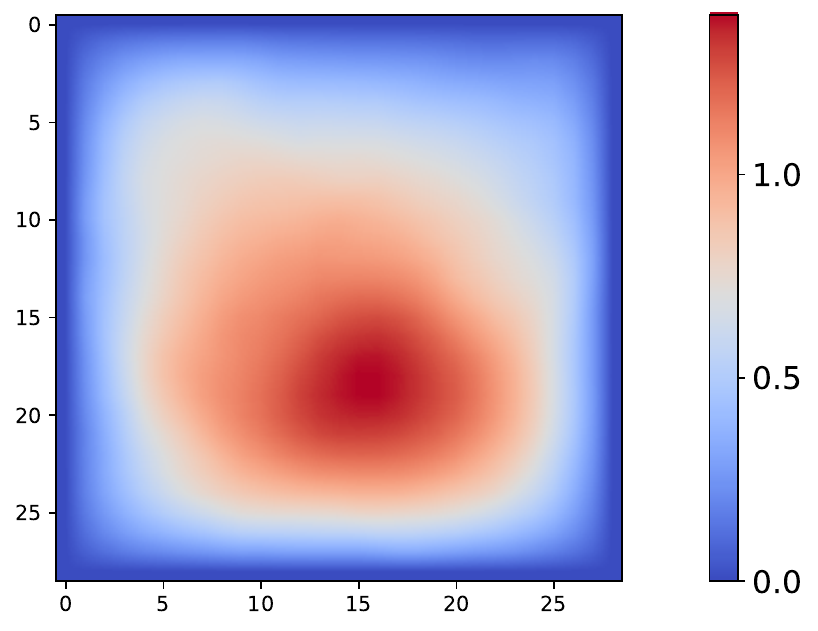}}
\subfigure[Pointwise error]{\includegraphics[width=28mm]{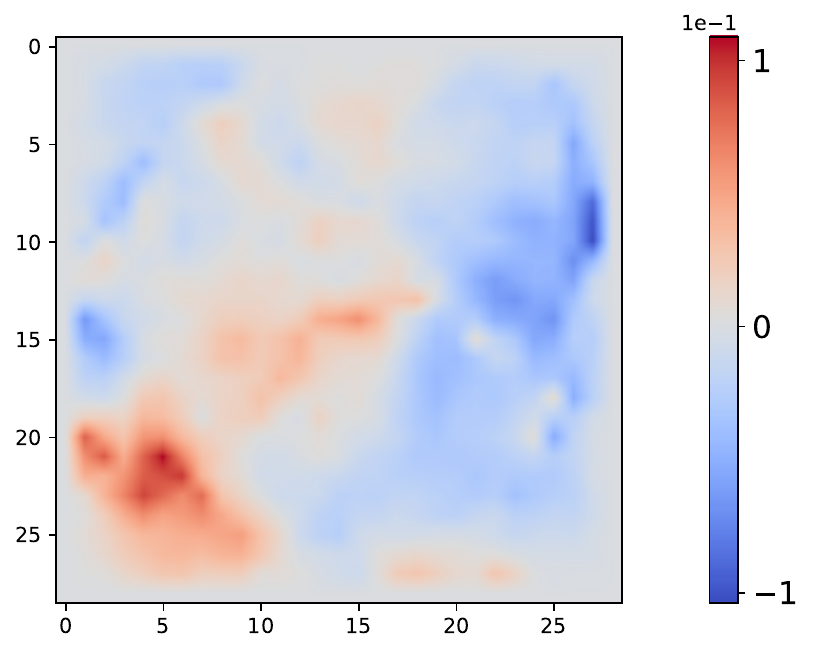}}
\caption{An example of training input, training output, true test and its Cauchy random feature approximation, and the pointwise error for Darcy problem in a rectangular domain with piecewise constant permeability field.}
\label{Fig:Darcy}
\end{figure}

\subsection{Helmholtz's Equation}
 We consider the Helmholtz PDE
\begin{equation}
\label{Helmholtz}
\begin{aligned}
& \left( - \Delta - \frac{\omega^2}{u^2(x)} \right)v = 0, \quad x\in(0,1)^2 \\
& \frac{\partial v}{\partial n} = 0, \quad x\in\{0,1\}\times[0,1]\cup[0,1]\times\{0\} \quad \mbox{ and } \quad \frac{\partial v}{\partial n} = v_N, \quad x\in[0,1]\times\{1\}, 
\end{aligned}
\end{equation}
where frequency $\omega$ and wave speed field $u: D_\Uc \to \R$ are assumed to be fixed and given. The excitation field $v: D_\Vc \to \R$ solves equation \eqref{Helmholtz}. 
The domains of function $u$ and $v$ are $D_\Uc = D_\Vc = (0,1)^2$ in this example.
In the numerical experiments, we take $\omega=10^3$ and $v_N=I_{\{0.35\leq x\leq 0.65\}}$. 
Our goal is to learn operator $G:u\mapsto v$ from the wave speed field $u$ to the excitation field $v$. 
The wave speed field $u$ takes the form $u(x) = 20+\tanh(\Tilde{u}(x))$, where $\Tilde{u}$ is drawn from Gaussian Process $GP(0,(-\Delta+9I)^{-2})$. Here, denote $\Delta$ by the Laplacian on $D_\Uc$ subject to homogeneous Neumann boundary conditions on the space of spatial-mean zero functions. 
As described in \cite{BATLLE2024kernel,dehoop2022costaccuracy}, samples are generated by solving the equation \eqref{Helmholtz} using a Finite Element Method on a discretization grid of size $26\times26$ of the unit square. 
We use 1000 functions for training and 1000 functions for testing.
%As described in \cite{BATLLE2024kernel,dehoop2022costaccuracy}, samples are generated by solving the equation \eqref{Helmholtz} using a Finite Element Method on a discretization grid of size $101\times101$ of the unit square. 
%To ensure we are able to implement kernel method without additional computation resources, we use a grid of size $26\times26$ and subsample 1000 functions for training and 1000 functions for testing.
In Figure \ref{Fig: Helmholtz}, we show an example of training input, training output, true test function, Cauchy random feature approximation, and the corresponding pointwise error. 
\begin{figure}[!htbp]
\centering     
\subfigure[Training input]{\includegraphics[width=28mm]{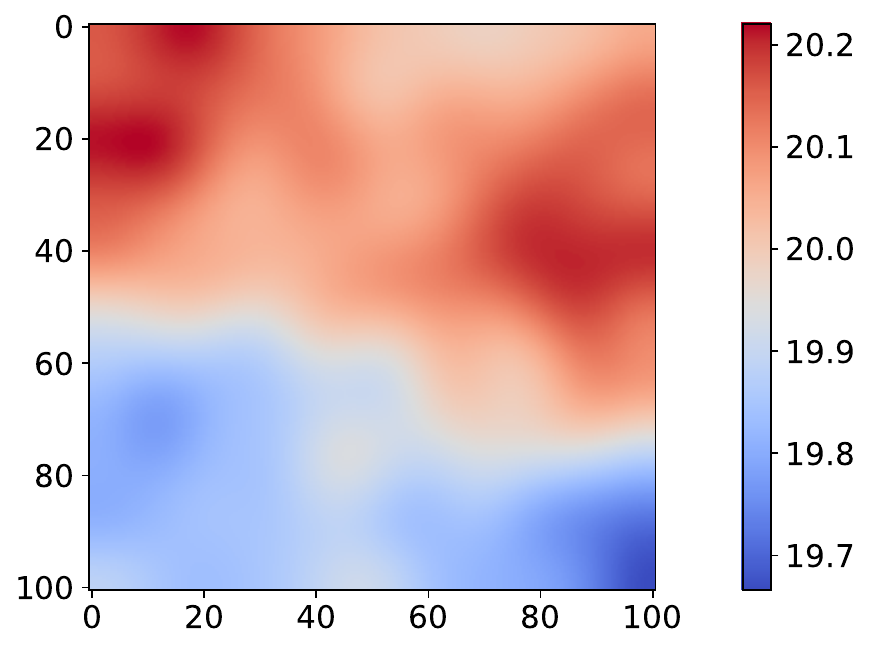}}
\subfigure[Training output]{\includegraphics[width=28mm]{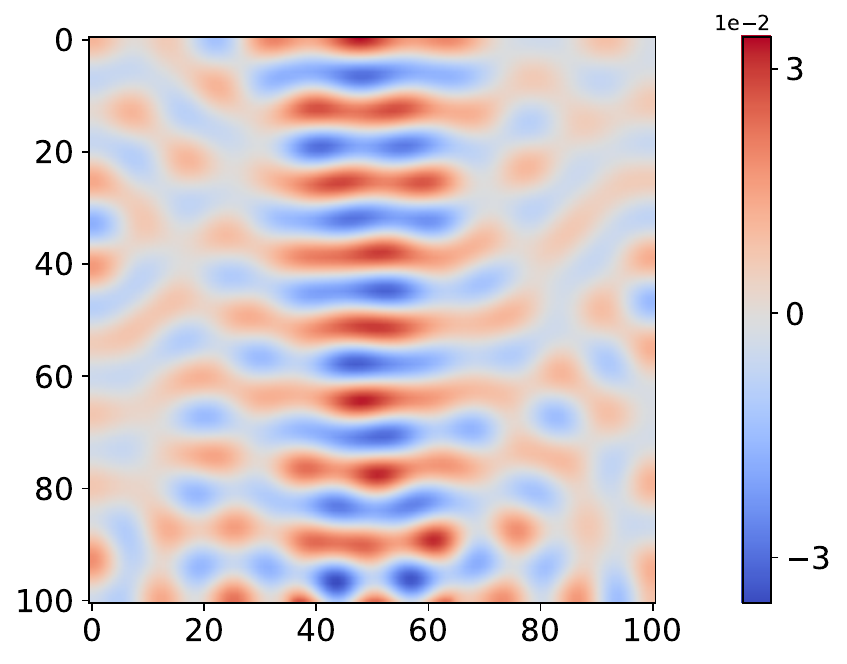}}
\subfigure[True test]{\includegraphics[width=28mm]{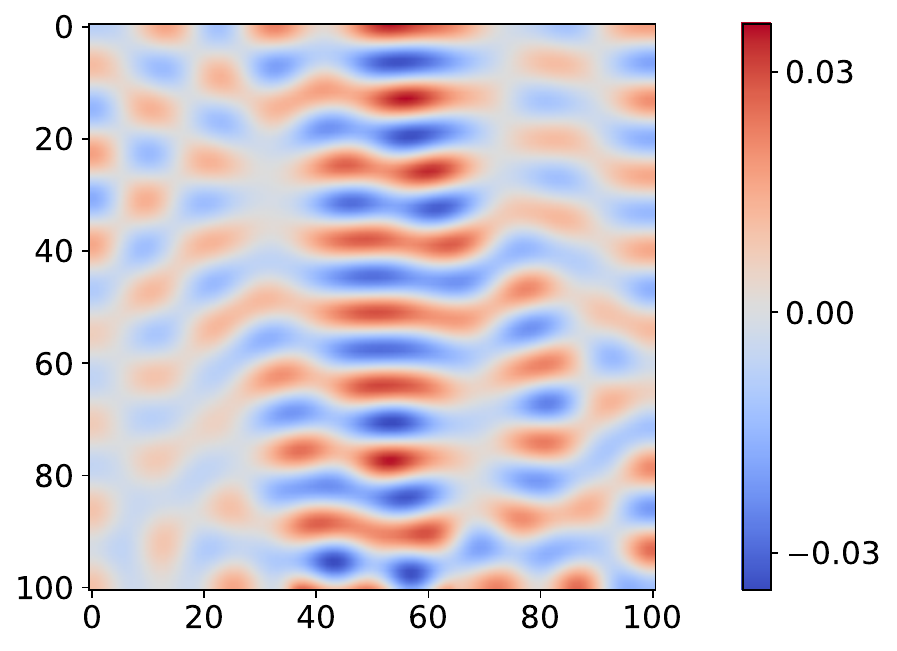}}
\subfigure[Predicted test]{\includegraphics[width=28mm]{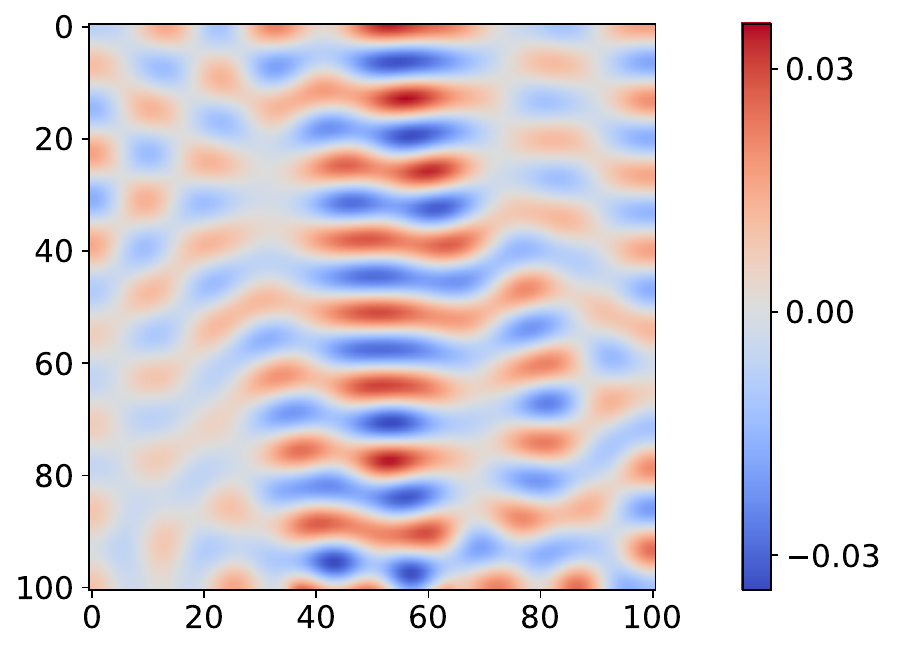}}
\subfigure[Pointwise error]{\includegraphics[width=28mm]{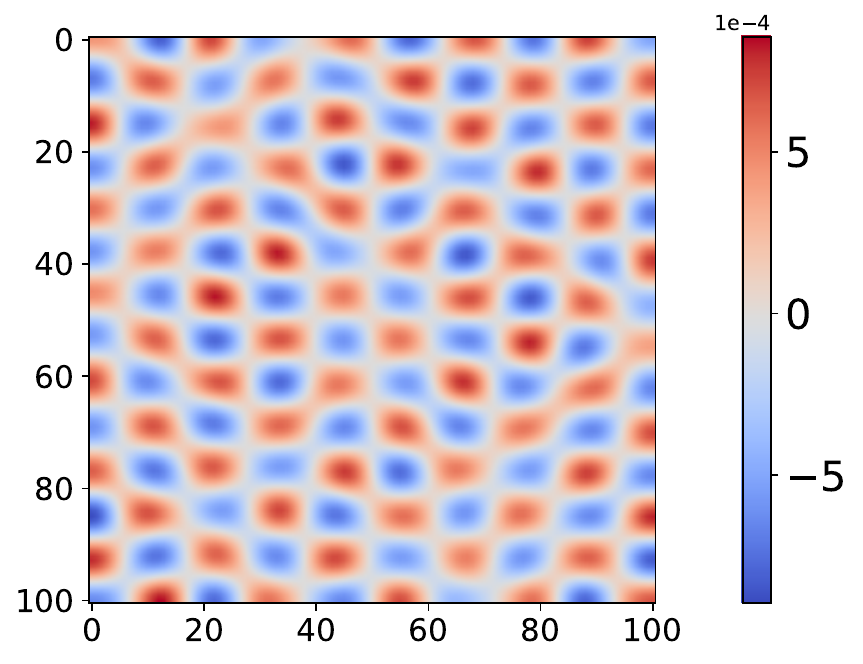}}
\caption{Helmholtz's equation: an example of training input, training output, true test, predicted test using Cauchy random features, and pointwise error. We randomly generate $N=5000$ i.i.d samples from tensor-product Cauchy distribution with scaling parameter $\gamma = 10^{-4}$. }
\label{Fig: Helmholtz}
\end{figure}

\subsection{Structural Mechanics}

The system that governs the displacement vector $w$ in an elastic solid undergoing infinitesimal deformations is defined as 
\begin{equation}
\begin{aligned}
    & \nabla\cdot \sigma = 0 \quad && \mbox{ in } D \\
    & w=\Bar{w} \quad && \mbox{ on } \Gamma_w \\
    & \sigma\cdot n = u \quad &&\mbox{ on } \Gamma_u
\end{aligned}
\end{equation}
where $\sigma$ is the (Cauchy) stress tensor and $n$ is the outward unit normal vector. The computational domain is denoted by $D=(0,1)^2$, and its boundary $\partial D$ is split in $[0,1]\times 1 = \Gamma_u$ and its complement $\Gamma_w$. The prescribed displacement $\Bar{w}$  and the surface traction $u$ are imposed on the domain boundaries $\Gamma_w$ and $\Gamma_u$, respectively.
We aim to learn the operator that maps the one-dimensional load $u$ on $\Gamma_u$ to the two-dimensional von Mises stress field $v$ on $D_\Vc=(0,1)^2$. 
The load $u$ is drawn from Gaussian Process $GP(100, 400^2(-\Delta+\tau^2\Id)^{-d})$ with $\Delta$ being the Laplacian subject to homogeneous Neumann boundary conditions on the space of spatial-mean zero functions. The inverse length scale of the random field is taken to be $\tau=3$ and $d = 1$ determines its regularity (upto 1/2 a fractional derivative for samples from this measure). 
%The dataset is obtained from \cite{HUANG2020learning,XU2021learning} where 40000 samples were generated using the NNFEM library.
The load $u$ is interpolated on a 41 grid and extruded in the $y$ direction and the stress field is interpolated on a 21 $\times$ 21 grid via radial basis function interpolation.
We use 1000 functions for training and 1000 functions for testing. 
%We also consider a coarser grid of size $21\times 21$.
%When we compare our model with DeepONet and FNO, we use the whole dataset (40000 samples), where 50\% of them are used for training and the remaining samples are test data. 
Figure \ref{Fig:Structure} shows an example of training data, true test function and its approximation using Cauchy random features along with the corresponding pointwise error. 

\begin{figure}[!htbp]
\centering     
\subfigure[Training input]{\includegraphics[width=28mm]{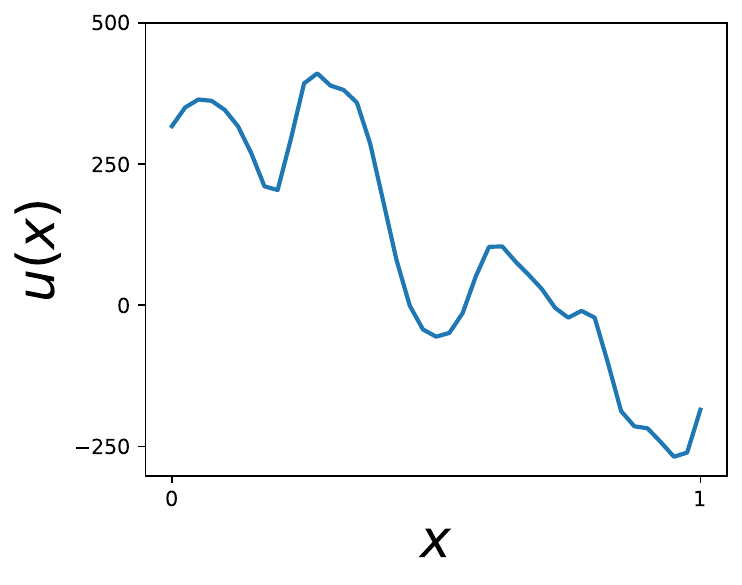}}
\subfigure[Training output]{\includegraphics[width=28mm]{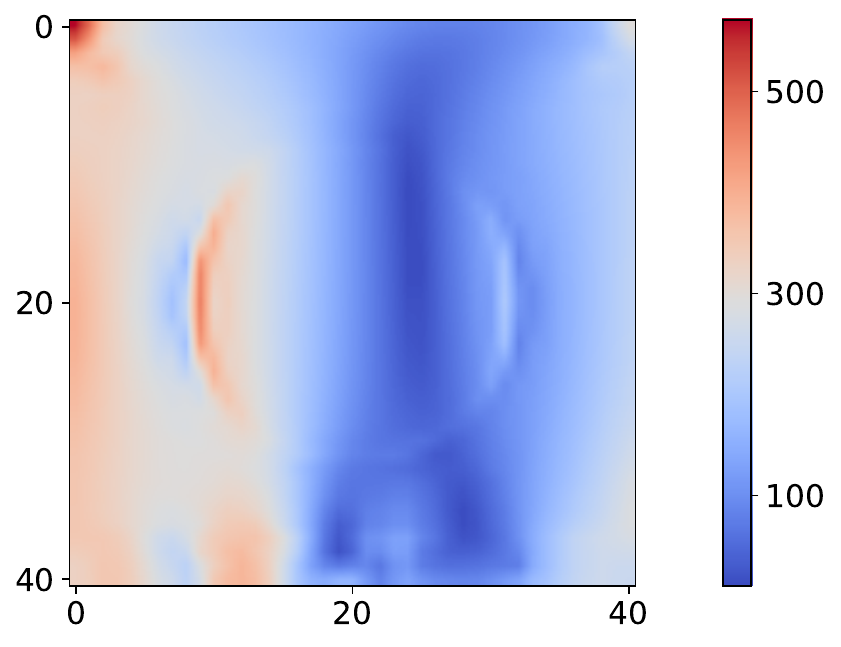}}
\subfigure[True test]{\includegraphics[width=28mm]{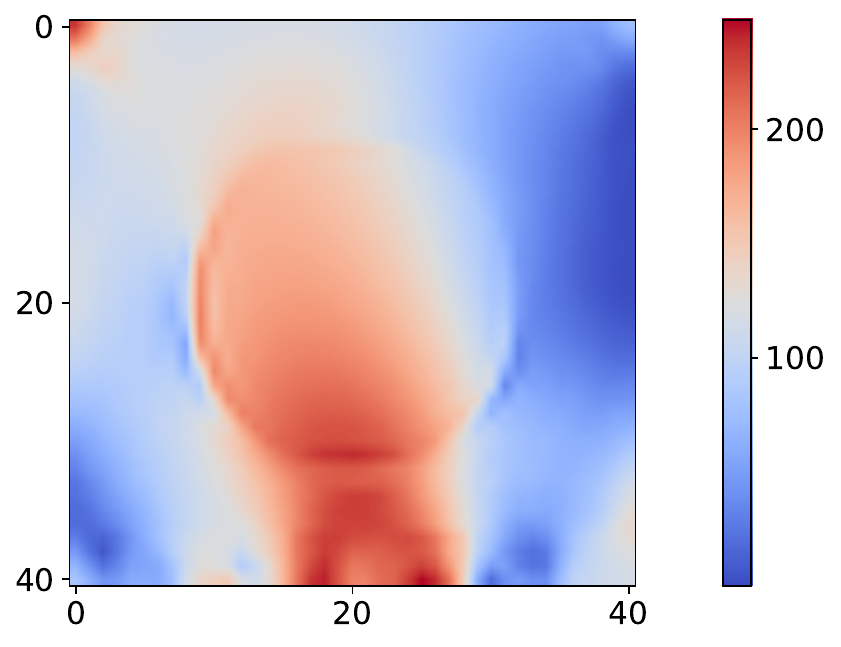}}
\subfigure[Predicted test]{\includegraphics[width=28mm]{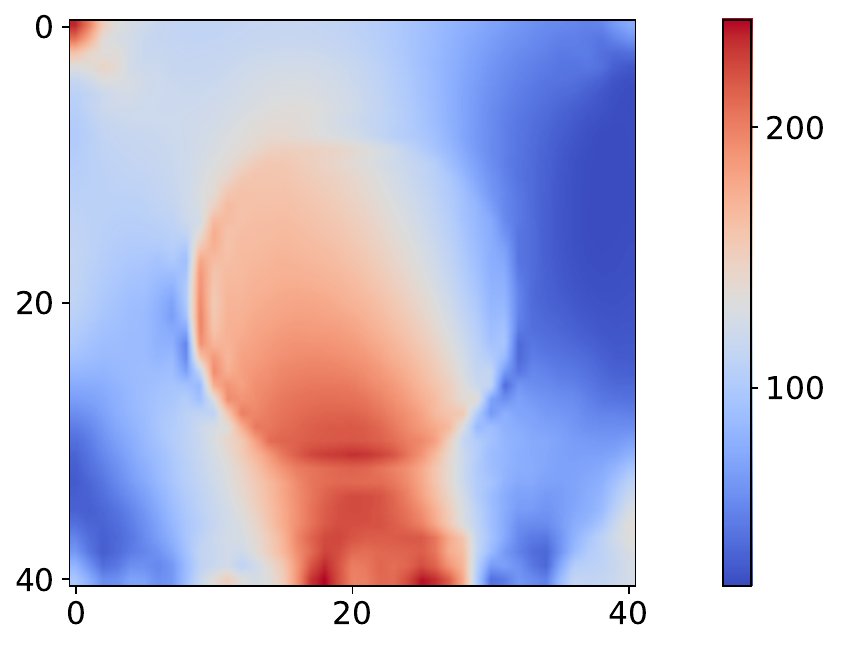}}
\subfigure[Pointwise error]{\includegraphics[width=28mm]{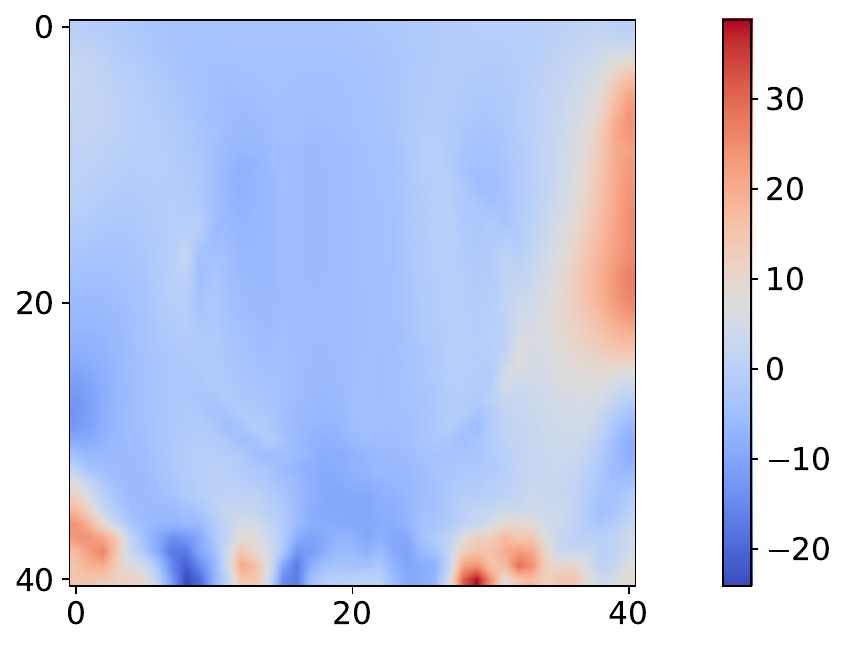}}
\caption{Structural mechanics: an example of training data, true test function, predicted test using Cauchy random features, and pointwise error.}
\label{Fig:Structure}
\end{figure}

\subsection{Navier-Stokes Equation}

We consider the vorticity-stream $(\omega,\psi)$ formulation of the incompressible Navier-Stokes equation:
\begin{equation}
\begin{aligned}
    & \frac{\partial w}{\partial t} + (c\cdot \nabla)w - v\Delta w = u \\
    & w=-\Delta\psi \qquad \int_{D}\psi=0 \\
    & c = (\frac{\partial\psi}{\partial x_2}, -\frac{\partial\psi}{\partial x_1}),
\end{aligned}
\end{equation}
where computational domain is $D=[0,2\pi]^2$ and periodic boundary conditions are considered.
We are interested in learning the operator $G:u\mapsto w(\cdot,T)$, where $u$ is the forcing term $u$ and $w(\cdot,T)$ is the vorticity field at a given time $t=T$. The forcing term $u$ is sampled from Gaussian Process $GP(0, (-\Delta+\tau^2\Id)^{-d})$. As previous examples, $\Delta$ denotes the Laplacian on $D$ subject to periodic boundary conditions on the space of spatial-mean zero functions, $\tau=3$ denotes the inverse length scale of the random field and $d = 4$ determines its regularity; the choice of $d$ then leads to up to 3 fractional derivatives for samples from this measure. We fix the initial condition $w(\cdot,0)$ which is generated from the same distribution.
Given the constant viscosity $\nu=0.025$, the equation is solved on a $16\times16$ grid with a pseudo-spectral method and Crank-Nicholson time integration. The size of the training dataset is 2000 and the test dataset is of size 2000. 
%Similar as previous examples, we take a subset for training and testing (2000 for each), and use a coarser grid ($16 \times 16$) when we compare to the kernel-based model. 
Figure \ref{Fig:NS} shows an example of training data, true test function, its approximation using Cauchy random features, and the corresponding pointwise error. 

\begin{figure}[!htbp]
\centering     
\subfigure[Training input]{\includegraphics[width=28mm]{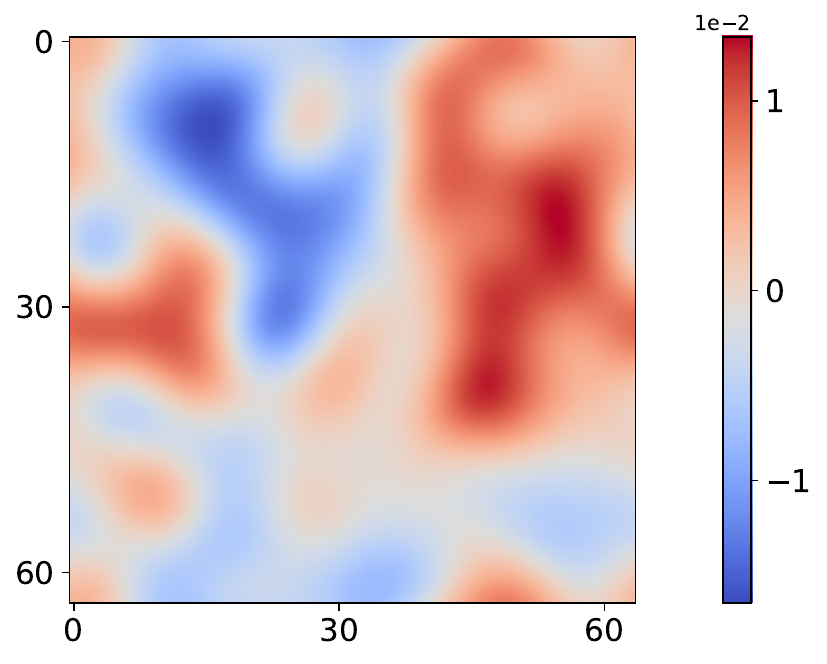}}
\subfigure[Training output]{\includegraphics[width=28mm]{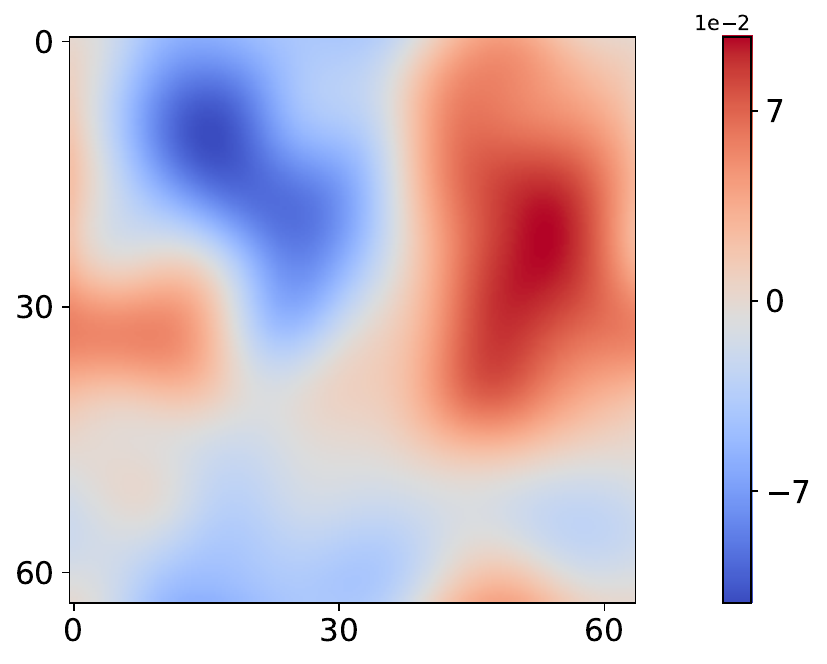}}
\subfigure[True test]{\includegraphics[width=28mm]{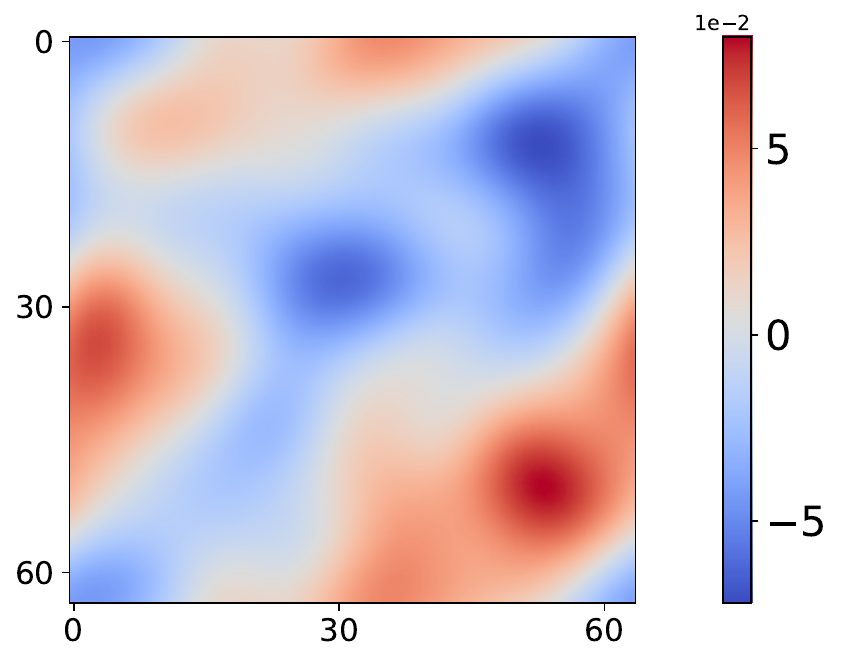}}
\subfigure[Predicted test]{\includegraphics[width=28mm]{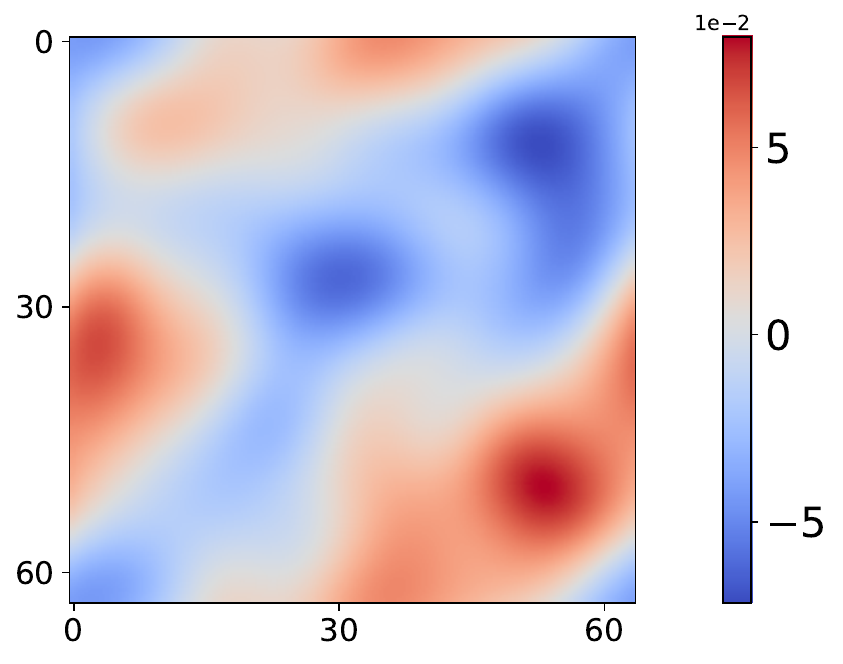}}
\subfigure[Pointwise error]{\includegraphics[width=28mm]{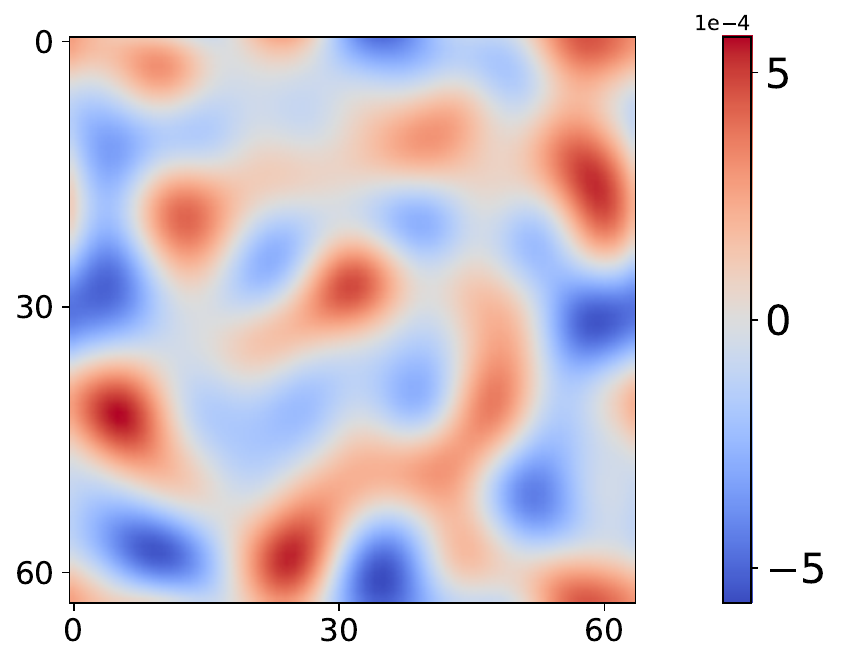}}
\caption{Example of training data, true test and predicted test using Cauchy random feature method, and pointwise error for Navier-Stokes equation.}
\label{Fig:NS}
\end{figure}

\subsection{Results}
In Table \ref{Table:RF_kernel}, we report the relative test errors of random feature methods and of kernel methods. 
Even though random feature methods can be viewed as an approximation of a kernel method, we observe that the random feature method achieves similar relative test errors or even improves the accuracy over the kernel method in some tests. 
Both the random feature method and kernel method are easy to implement in practice. 
However, the implementation of kernel methods requires high-performance numerical computing and machine learning python library such as JAX, see \cite{BATLLE2024kernel}. 
The implementation of our proposed random feature method can be done by using scikit-learn library and GPU computing is not required. 
Therefore, our proposed algorithm achieves similar accuracy even if the computational resources are limited.
%Moreover, our proposed algorithm achieves similar accuracy with less computational cost. 
As concluded in \cite{BATLLE2024kernel}, the kernel method either matches the performance of neural network methods or outperforms them in several benchmarks. Therefore, the random feature method can match the performance or outperform neural network methods as well.

\begin{table}[!htbp]
\centering
\begin{tabular}{|c|c|c|c|c|}
\hline
Example & RBF Kernel & Mat{\' e}rn Kernel  &  RF(Cauchy) & RF(Gaussian) \\\hline
Burgers' &  3.76 & {\bf 2.03} & 3.82 & 2.70 \\\hline
Darcy & 4.93 & 4.47 & {\bf 3.08} & 3.74 \\\hline
Helmholtz & 5.05 & 3.76 & 6.66 & {\bf 3.63} \\\hline
Structural Mechanics & 10.31 & 7.73 & {\bf 7.67} & 8.71 \\\hline
Navier-Stokes & 1.09 & {\bf 0.91} & 2.54 & 1.06 \\\hline
\end{tabular}
\caption{Summary of relative test errors of random features methods and kernel methods.}
\label{Table:RF_kernel}
\end{table}
We further compare the performances of the kernel method and the random feature method provided that each model is trained over a similar time period. 
Since the training time of the kernel method depends on the number of training samples, we further subsample a small dataset to train the kernel methods\.
In Table \ref{Table:RF_kernel2}, we report the relative test errors and training times of kernel methods and of random feature methods. We observe that our proposed random feature method outperforms kernel method cross all benchmarks provided that the same amount of training time is allocated to each model.

\begin{table}[!htbp]
\footnotesize
\centering
\begin{tabular}{|c|c|c|c|c|c|}
\hline
&  & RBF Kernel & Mat{\' e}rn Kernel  &  RF(Cauchy) & RF(Gaussian) \\ \hline
\multirow{2}{*}{Burgers'} & Relative error (\%) & 6.87 & 5.33 & {\bf 3.85} & 4.10 \\\cline{2-6} 
& Training time (seconds) & 3.9 & 4.8 & 3.1 & 3.2 \\\hline
\multirow{2}{1.2cm}{Navier- \\Stokes} & Relative error (\%) & 3.01 & 3.05 & 2.52 & {\bf 1.06} \\\cline{2-6} 
& Training time (seconds) & 2.4 & 3.1 & 3.0 & 1.9 \\\hline
\end{tabular}
\caption{Summary of relative test errors of random feature methods and kernel methods given the same amount of training times. For both Burgers' equation problem and Navier-Stokes equation problem, 500 samples are used for training and another 500 samples are used for testing.}
\label{Table:RF_kernel2}
\end{table}

We also compare our proposed random feature method with DeepONet and FNO directly in terms of the accuracy, see the summary of test relative errors in Table \ref{Table:Test_error}. We report the training times of random feature methods as well. The relative errors of DeepONet and FNO are cited from \cite{LU2022operator, dehoop2022costaccuracy, BATLLE2024kernel}. 
We observe that the random feature methods can be trained fast even if the problem is complicated. 
For example, there are 20000 training samples in the Navier-Stokes problem and each training function is interpolated on a $41\times 41$ grid. 
The random feature method matches the performance of DeepONet and FNO in the structural mechanics example, and outperforms DeepONet in the Helmholtz example, and outperforms both DeepONet and FNO in the Navier-Stokes example.

\begin{table}[!htbp]
\footnotesize
\centering
\begin{tabular}{|c|c|c|c|c|c|}
\hline
&  & DeepONet & FNO  &  RF(Cauchy) & RF(Gaussian) \\ \hline
\multirow{2}{*}{Helmholtz} & Relative error (\%) & 5.88 & 1.86 & 2.54 & 2.92 \\\cline{2-6} 
& Training time (seconds) & - & - & 21.8 & 12.5 \\\hline
\multirow{2}{1.5cm}{Structural \\Mechanics} & Relative error (\%) & 5.20 & 4.76 & 6.08 & 6.25 \\\cline{2-6} 
& Training time (seconds) & - & - & 4.2 & 5.8 \\\hline
\multirow{2}{1.5cm}{Navier- \\Stokes} & Relative error (\%) & 3.63 & 0.26 & 0.73 & 0.11 \\\cline{2-6} 
& Training time (seconds) & - & - & 24.9 & 23.9 \\\hline
\end{tabular}
\caption{Summary of relative test errors of random feature methods and neural operator benchmarks DeepONet and FNO. We also report the training times of random feature methods.}
\label{Table:Test_error}
\end{table}

The choice of discretizations is important in operator learning. 
A true operator learning method should demonstrate the invariance of the expected relative test error
to the mesh resolution used for training and testing.
We empirically verify that our proposed method is stable to discretization changes, and hence is discretization-invariant, see also \cite{zhang2025discretization} for a discussion on discretization-invariant operator networks.
We present the relative test errors at different resolutions for Darcy problem and Helmholtz problem in Tables \ref{tab:darcy_reso} and \ref{tab:Hel_reso}, respectively. 
For each problem, we use 1000 functions for training and 1000 functions for testing.

\begin{table}[!htbp]
\centering
\begin{tabular}{|c|c|c|c|c|}
\hline
Resolution & $13\times13$ & $26\times26$ & $51\times51$ & $101\times101$  \\\hline
Relative Test Error (\%) & 8.18 & 8.35 & 8.30 & 8.53 \\\hline
\end{tabular}
\caption{Darcy problem: relative test error at different resolutions.}
\label{tab:darcy_reso}
\end{table}

\begin{table}[!htbp]
\centering
\begin{tabular}{|c|c|c|c|c|}
\hline
Resolution & $29\times29$ & $31\times31$ & $36\times36$ & $43\times43$  \\\hline
Relative Test Error (\%) & 4.06 & 3.95 & 3.88 & 3.98\\\hline
\end{tabular}
\caption{Helmholtz problem: relative test error at different resolutions.}
\label{tab:Hel_reso}
\end{table}

In Figure \ref{Fig:rate}, we empirically verify the error bound we obtained in Theorem \ref{OL_error} using some benchmarks. As the number of random features $N$ increases, we observe that the test errors are all convergent as a rate of $1/\sqrt{N}$. We also observe that the growth rate of training time is between $\sqrt{N}$ and $N$. 

\begin{figure}[!htbp]
\centering     
\subfigure[Decay rate of generalization error]{\includegraphics[width=73mm]{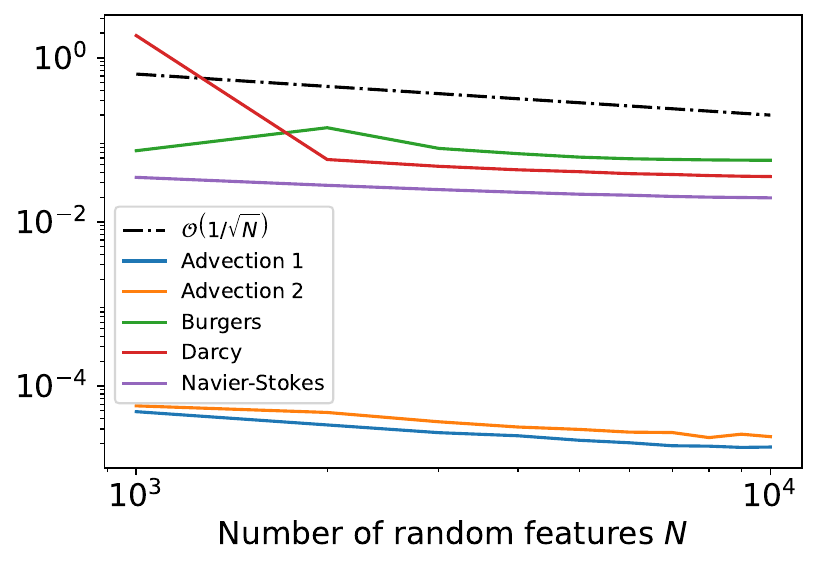}}
\subfigure[Growth rate of training time]{\includegraphics[width=73mm]{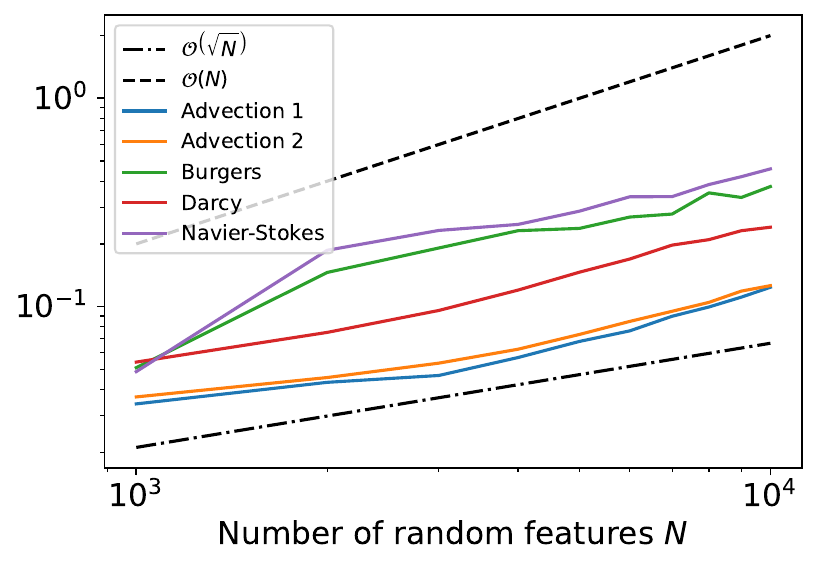}}
\caption{Empirical verifications of (a) decay rate of generalization error and (b) growth rate of training time. We repeat each experiment 10 trials to average.}
\label{Fig:rate}
\end{figure}

\section{Conclusion}
We propose a random feature method for learning nonlinear Lipschitz operators related to PDE.
We provide a detailed error analysis for our proposed method with Cauchy random features. The theory suggests that the generalization error decays as a rate of $1/\sqrt{N}$, where $N$ is the number of random features. The results hold for both Gaussian and Cauchy distributions. 
Numerical experiments show that the random feature method not only enjoys the easy and fast implementation, but also matches or outperforms kernel method and neural network methods in benchmark 2D PDE examples. 
In addition, the theoretical and computational analysis shows the benefits of randomization.  The use of Cauchy features supports the application to PDE. These methods may also be applicable to the recent multi-operator learning approaches, for example, the transformer based models \cite{liu2024prose, sun2024towards, liu2024prosefd,liu2025bcat, cao2024vicon}
or DeepONet based approaches \cite{zhang2024deeponet}.
Future work could explore other heavy tail random features and noisy data. 

\section*{Acknowledgments}
HS was partially supported by NSF DMS 2331033. DN was partially supported by NSF DMS 2408912.

\bibliography{refs}
\end{document}